\newcommand{\cmark}{\ding{51}}%
\newcommand{\xmark}{\ding{55}}%
\newcommand{\ours}{\texttt{TFB}\xspace}
\newcommand{\blob}{BLoB\xspace}
\newcommand{\loraens}{ENS\xspace}
\newcommand{\loramle}{MLE\xspace}
\newcommand{\loramap}{MAP\xspace}
\newcommand{\loramcd}{MCD\xspace}
\newcommand{\loralap}{LAP\xspace}
\newcommand{\loranaivebbb}{BBB\xspace}
\newlength\savewidth
\newcolumntype{C}{>{\centering\let\newline\\\arraybackslash\hspace{0pt}}m{2cm}}
\definecolor{lightergray}{HTML}{e5e5e5}
\newtheorem{assumption}{Assumption}[section]
\newtheorem{theorem}{Theorem}[section]
\newtheorem{lemma}[theorem]{Lemma}
\newtheorem*{remark}{Remark}
\newtheorem{proposition}{Proposition}[theorem]
\providecommand{\customgenericname}{}
\newcommand{\newcustomtheorem}[2]{%
  \newenvironment{#1}[1]
  {%
   \renewcommand\customgenericname{#2}%
   \renewcommand\theinnercustomgeneric{##1}%
   \innercustomgeneric
  }
  {\endinnercustomgeneric}
}
\newcommand{\vectorize}{\operatorname{vec}}
\newcommand{\diag}{\operatorname{diag}}
\def\Assumptionref#1{Assumption~\ref{#1}}
\def\Tabref#1{Table~\ref{#1}}
\def\Lmmref#1{Lemma~\ref{#1}}
\def\Thmref#1{Theorem~\ref{#1}}
\def\Figref#1{Fig.~\ref{#1}}
\def\appref#1{Appendix~\ref{#1}}
\def\Secref#1{Sec.~\ref{#1}}
\def\eqref#1{equation~\ref{#1}}
\def\Eqref#1{Eqn.~\ref{#1}}
\def\Algref#1{Algorithm~\ref{#1}}
\def\1{\bm{1}}
\def\vzero{{\bm{0}}}
\def\vmu{{\bm{\mu}}}
\def\vtheta{{\bm{\theta}}}
\def\vb{{\bm{b}}}
\def\vd{{\bm{d}}}
\def\vh{{\bm{h}}}
\def\vx{{\bm{x}}}
\def\vy{{\bm{y}}}
\def\vz{{\bm{z}}}
\def\mA{{\bm{A}}}
\def\mB{{\bm{B}}}
\def\mE{{\bm{E}}}
\def\mI{{\bm{I}}}
\def\mM{{\bm{M}}}
\def\mP{{\bm{P}}}
\def\mU{{\bm{U}}}
\def\mV{{\bm{V}}}
\def\mW{{\bm{W}}}
\def\mX{{\bm{X}}}
\def\mSigma{{\bm{\Sigma}}}
\def\mOmega{{\bm{\Omega}}}
\DeclareMathAlphabet{\mathsfit}{\encodingdefault}{\sfdefault}{m}{sl}
\SetMathAlphabet{\mathsfit}{bold}{\encodingdefault}{\sfdefault}{bx}{n}
\def\gD{{\mathcal{D}}}
\def\gN{{\mathcal{N}}}
\newcommand{\E}{\mathbb{E}}
\newcommand{\R}{\mathbb{R}}
\renewcommand{\tilde}{\widetilde}
\renewcommand{\hat}{\widehat}
\renewcommand{\frac}{\tfrac}
\DeclareUrlCommand\Code{\urlstyle{rm}}
\def\expandafter\UrlBreaks\expandafter{\UrlBreaks  
\do\/\do\a\do\b\do\c\do\d\do\e\do\f\do\g\do\h\do\i\do\j\do\k
\do\l\do\m\do\n\do\o\do\p\do\q\do\r\do\s\do\t\do\u\do\v
\do\w\do\x\do\y\do\z
\do\A\do\B\do\C\do\D\do\E\do\F\do\G\do\H\do\I\do\J\do\K
\do\L\do\M\do\N\do\O\do\P\do\Q\do\R\do\S\do\T\do\U\do\V
\do\W\do\X\do\Y\do\Z}
\title{Training-Free Bayesianization for Low-Rank Adapters of Large Language Models}
\author{%
  Haizhou Shi\thanks{Equal Contribution. 
  $^1$Rutgers University.
  $^2$University of Illinois Urbana-Champaign (UIUC).
  $^3$Red Hat AI Innovation.
  $^{\text{\textdagger}}$Correspondence to: Haizhou Shi <haizhou.shi@rutgers.edu>, 
  Hao Wang <hw488@cs.rutgers.edu>.}\space\space{$^{1}$}
  \\
  \And
  Yibin Wang $^{* 2}$\\
  \And
  Ligong Han $^{3}$\\
  \And
  Huan Zhang $^{2}$\\
  \And 
  Hao Wang $^{\text{\textdagger} 1}$\\
}
\begin{document}

\maketitle

\begin{abstract}
    Estimating the uncertainty of responses from Large Language Models~(LLMs) remains a critical challenge. While recent Bayesian methods have demonstrated effectiveness in quantifying uncertainty through low-rank weight updates, they typically require complex fine-tuning or post-training procedures. 
    In this paper, we propose \textbf{T}raining-\textbf{F}ree \textbf{B}ayesianization~({\ours}), a simple yet theoretically grounded framework that {efficiently} transforms trained low-rank adapters into Bayesian ones without additional training. 
    \ours systematically searches for the maximally acceptable level of variance in the weight posterior, constrained within a family of low-rank isotropic Gaussian distributions. 
    Our theoretical analysis shows that under mild conditions, this search process is equivalent to KL-regularized variational optimization, a generalized form of variational inference. 
    Through comprehensive experiments, we show that \ours achieves superior uncertainty estimation and generalization compared to existing methods while eliminating the need for complex Bayesianization training procedures. 
    {Code is available at \url{https://github.com/Wang-ML-Lab/bayesian-peft}.}
\end{abstract}

\section{Introduction}
\label{sec:intro}
Despite recent advances in Large Language Models (LLMs) showing great capacity for generating responsive answers to human instructions~\cite{biderman2023pythia,wei2022emergent,wei2021finetuned,min2022rethinking,chowdhery2023palm,anil2023palm,touvron2023llama,touvron2023llama2,radford2019language,brown2020language,achiam2023gpt,achiam2022chatgpt}, the reliability of such large models remains a critical concern~\cite{wang2024probabilistic,wang2024variational}, as untruthful yet confident answers could cause significant damage to individuals and society~\cite{gupta2024language, nikitin2024kernel, yadkori2024believe, kapoor2024large}. The accurate estimation of uncertainty in LLMs has thus emerged as an urgent challenge. Current approaches mainly follow two paths: one focuses on 
directly asking the model to elicit its internal internal (verbalized) uncertainty~\cite{xiong2023can,tian2023just, kapoor2024large}, 
while the other employs complex fine-tuning techniques~\cite{kapoor2024large,yang2023bayesian,wang2024blob}.



Both approaches suffer from inherent limitations. Verbalized uncertainty, while simple to implement, remains controversial in terms of its empirical reliability and \textbf{\emph{theoretical soundness}}~\cite{kadavath2022language, kuhn2023semantic}.
On the other hand, low-rank adapters (LoRA~\cite{hu2022lora}), which offer a parameter-efficient way to adapt LLMs by adding a small set of low-rank weight matrices, have emerged as a promising direction for fine-tuning models. However, while LoRA efficiently adapts large models to new tasks, it does not itself provide a mechanism for principled uncertainty estimation. In response, recent Bayesianization attempts~\cite{yang2023bayesian, wang2024blob}, integrate Bayesian methods with LoRA, but they still require complex training procedures and sophisticated hyperparameter tuning, \textbf{\emph{limiting their practicality}}. These constraints motivate the following research question:


\begin{centering}
\textit{Can we ``Bayesianize'' LLM low-rank adapters in a  \textbf{theoretically sound} yet \textbf{empirically simple} way?}
\end{centering}


In this paper, 
we diverge from conventional fine-tuning and post-training approaches. Instead, we develop a Training-Free Bayesianization (\ours) technique applicable to \emph{any} given low-rank LLM adapter. 
\ours constrains the family of full-weight approximate posteriors produced by LoRA adapters to low-rank isotropic Gaussian distributions. 
Given a trained LoRA adapter, it systematically searches for the maximally acceptable variance of the variational distribution of the weight posterior, without the need for complex fine-tuning procedures. 
{\ours's search range and stopping criteria can be determined using \emph{any} 
in-distribution ``anchor dataset,'' e.g., a small subset of the training dataset. Note that (1) this eliminates the need for an additional calibration or validation dataset; (2) this flexibility extends to both supervised and unsupervised data, even regardless of whether it was used in the original LoRA training.}

Despite its simplicity, we theoretically demonstrate that, \ours's process of finding the maximal variance of the low-rank isotropic Gaussian posterior is equivalent to generalized variational inference, under mild conditions.

We verify \ours's effectiveness through extensive empirical evaluation across various settings, datasets, LLM backbones, LoRA weights, and LoRA variants. Our comprehensive experiments demonstrate that this novel training-free Bayesianization framework consistently achieves superior generalization and more accurate uncertainty estimation. To summarize, the main contributions of this paper are:
\begin{itemize}[nosep]
    \item We propose Training-Free Bayesianization (\ours), the first framework to transform trained LoRAs into Bayesian ones without re-training, continued training, or gradient estimation.
    \item We establish theoretical connections between \ours and generalized variational inference, proving their equivalence under mild conditions.
    \item We develop an efficient implementation of \ours requiring only an anchor dataset for search, making it widely applicable across different application scenarios.
    \item Through comprehensive experiments, we demonstrate that \ours consistently improves uncertainty estimation for off-the-shelf LoRA adapters, and overall surpasses the state-of-the-art counterparts of Bayesian LoRA. 
\end{itemize}

\section{Related Work}
\label{sec:related}

\textbf{LLM Uncertainty Estimation.}\quad
To estimate the uncertainty of LLMs, the models are often employed to generate and evaluate their own uncertainty~\cite{lin2022teaching, kadavath2022language}. However, such approaches typically rely on task-specific labels and require additional training. Semantic entropy~\cite{kuhn2023semantic} leverages the invariance of language stemming from shared meanings to estimate uncertainty, while mutual information is used to compute a lower bound on model uncertainty by sampling from the model's output distribution~\cite{yadkori2024believe}. Despite their contributions, these methods fail to accurately capture true model uncertainty, as they do not model the probability distribution over the LLM parameters~\cite{hullermeier2021aleatoric, abdar2021review, gawlikowski2023survey}.

\textbf{Bayesian Low-Rank Adaptation.}\quad
The Bayesian framework provides a powerful approach for capturing and estimating uncertainty during fine-tuning by defining prior distributions and approximating posterior distributions over the model parameters~\cite{neal2012bayesian,hernandez2015probabilistic,gal2016dropout, wang2016towards}. 
Recent research has explored combining Bayesian methods with LoRA to mitigate the additional computational overhead associated with modeling parameter distributions across the entire parameter space.
\citet{yang2023bayesian} applies a Kronecker-factorized Laplace approximation to fine-tuned LoRA parameters. More recently, \blob~\cite{wang2024blob} advances the field by simultaneously estimating both the mean and covariance of LLM parameters within a single fine-tuning stage.
Our proposed training-free Bayesianization represents a significant departure from these existing methods. 
Unlike approaches that require re-training~\cite{gal2016dropout, wang2023lora, balabanov2024uncertainty, wang2024blob} or rely on continued training and gradient estimation~\cite{yang2023bayesian}, our method achieves uncertainty estimation without any additional training steps, substantially improving the simplicity and efficiency for Bayesian learning of LLMs.


\section{Training-Free Bayesianization~(\ours)}
\label{sec:method}

This section introduces our Training-Free Bayesianization~(\ours). 
\Secref{sec:method-pre} introduces the problem setup. 
\Secref{sec:method-posterior} and \Secref{sec:method-tfb} present the two key parts of \ours: low-rank Gaussian variational distribution family and a novel approach for converting deterministic weights to probabilistic distributions without training. 
The complete algorithmic implementation is provided in \Secref{sec:method-final-algo}, with theoretical foundations addressed in a separate section~(\Secref{sec:theory}).

\textbf{Notation.}\quad
Scalars, vectors, and matrices are denoted by lowercase letters, lowercase boldface letters, and uppercase boldface letters, respectively. 
For a matrix $\mX=[\vx_1, \cdots, \vx_n]\in \mathbb{R}^{m\times n}$, we use $\vectorize(\mX)= 
[\vx_1^\top, \vx_2^\top, \cdots, \vx_n^\top]^\top
\in \mathbb{R}^{(mn)\times 1}$ to denote vectorization. 
$\otimes$ and $\circ$ denote the Kronecker and element-wise product, respectively. 
We use $\mathbf{0}_{n}\in \mathbb{R}^{n\times n}$ to denote a zero matrix.

\subsection{Preliminaries}
\label{sec:method-pre}

\textbf{Low-Rank Adaptation~(LoRA).}\quad
Given a pre-trained neural network layer with weight matrix $\mW_0$, Low-Rank Adaptation~(LoRA)~\cite{hu2022lora} confines weight updates to a low-rank subspace during fine-tuning, expressing the update as $\Delta\mW = \mB \mA$, where $\Delta\mW\in \mathbb{R}^{m\times n}$, $\mB\in \mathbb{R}^{m\times r}$, and $\mA\in \mathbb{R}^{r\times n}$.
For input $\vh$ and output $\vz$ of the LoRA layer, the forward pass computation is then given by:
\begin{align}
    \vz &= \mW_0\vh + \Delta\mW \vh = \mW_0\vh + \mB\mA \vh.
\end{align}
%
%

\begin{figure}[t] 
\centering 
\includegraphics[width=1\textwidth]{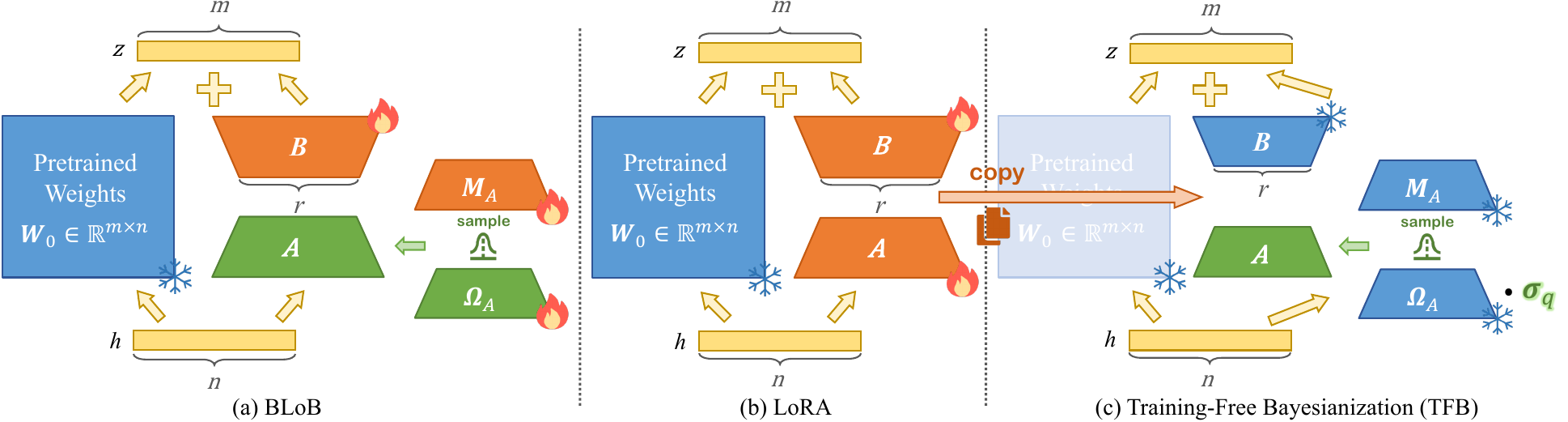} 
\caption{
{Overview of \textbf{T}raining-\textbf{F}ree \textbf{B}ayesianization~(\ours,~Ours,~\textbf{right}), as well as comparison with existing methods such as LoRA \textbf{(middle)} and BLoB \textbf{(left)}. }
} 
\label{fig:tfb_overview} 
\end{figure}

\textbf{LoRA Bayesianization with Low-Rank Gaussian Distribution.}\quad
\blob~\cite{wang2024blob}, a pioneering work in low-rank Bayesianization for LLMs, empirically demonstrates that modeling $\mA$'s elements with independent Gaussian variables suffices for effective uncertainty estimation in LoRA.
Specifically, the probability density of each element of $\mA$ follows
$q(A_{ij})=\gN(A_{ij}|M_{ij}, \Omega_{ij}^2), \forall i \in [r], \forall j \in [n]$, where matrices $\mM$ and $\mOmega$, sharing the dimensions of $\mA$, represent the mean and standard deviation of the random variable $\mA$, respectively. 
This formulation is equivalent to approximating the Bayesianized low-rank adapter's posterior in the full-weight space of $\mW$ with a low-rank degenerate distribution:
\begin{align}
    q(\vectorize(\mW)|\mB, \vtheta) &= \gN(\vectorize(\mW)|\vmu_q, \mSigma_q),\label{eq:blob-posterior}
\end{align}
where $\vtheta=\{\mM, \mOmega\}$ denotes the set of parameters for modeling $\mA$'s posterior distribution, 
$\vmu_q=\vectorize(\mW_0+\mB\mM)$ is its mean,
and $\mSigma_q = [\mI_n \otimes \mB] [\diag(\vectorize(\mOmega)^2)] [\mI_n \otimes \mB^\top]$ is its low-rank degenerate covariance matrix.
In this paper, we adopt a similar approach for modeling the variational distribution of the weight posterior, focusing exclusively on Bayesianizing the weight update matrix $\mA$. 




\subsection{\ours's Variational Low-Rank Isotropic Gaussians}

\label{sec:method-posterior}

\textbf{Variational Distribution Family.}\quad
In \ours, we constrain the variational distributions of the weight posterior to a more compact family of Gaussians than \blob: specifically, we employ full-space isotropic Gaussian distributions projected onto the low-rank space:
\begin{align}
    q(\vectorize(\mW)|\mB, \vtheta) &= \gN(\vectorize(\mW)|\vmu_q, \operatorname{proj}(\sigma_q^2\mI)),\label{eq:tfb-posterior}
\end{align}
where $\vmu_q$ is defined as in \Eqref{eq:blob-posterior}.
Here, $\sigma_q^2\mI\in \mathbb{R}^{mn\times mn}$ represents a full-rank isotropic covariance matrix with standard deviation $\sigma_q$, and $\operatorname{proj}(\cdot)$ denotes a linear projection operator that maps the full-space covariance matrix onto the low-rank space {(see Proposition~\ref{lemma:tfb-proj} for details)}.
\footnote{
    $\operatorname{proj}(\cdot)$ only depends on the rank $r$ of the trained LoRA.
}

\textbf{\ours as Generalized Variational Inference.}\quad
The choice of low-rank isotropic Gaussian approximate posteriors serves both \emph{theoretical} and \emph{empirical} purposes: it provides a single-parameter family that enables converting the generalized variational inference into a variance maximization problem~(more details in \Secref{sec:method-tfb}, \Thmref{thm:tfb}, and \appref{app:visualization}), and empirically outperforms alternative distribution families~(\Secref{sec:ablation}).
Below, we present a practically efficient implementation for Bayesianizing LoRA under the constraint specified in \Eqref{eq:tfb-posterior}, with detailed theoretical analysis provided in \Thmref{thm:posterior}.


\textbf{\ours in Practice.}\quad 
Consider a LoRA layer with weight updates $\mB\in\mathbb{R}^{m\times r},\mA\in\mathbb{R}^{r\times n}$ and a standard deviation scale $\sigma_q>0$.
We begin by computing the compact Singular Value Decomposition~(SVD)~\cite{klema1980singular} of $\mB$:
\begin{align}
    \mB &= \mU \diag(\vd) \mV^\top,\label{eq:svd}
\end{align}
where $\mU\in\mathbb{R}^{m\times r}$ and $\mV\in\mathbb{R}^{r\times r}$ are orthonormal matrices, and $\vd=[d_1, d_2, \cdots, d_r]^\top$ is the vector consisting of singular values with all positive entries\footnote{
    By stating $\vd \succ \vzero$, we assume $\mB$ has the full column rank $r$, which usually holds for LLM adaptation. 
}.
We then transform the original weight matrices $\{\mB, \mA\}$ into an equivalent pair
\begin{align}
    \{\mB^\prime &= \mU \diag(\vd),\space \mA^\prime = \mV^\top \mA\},\label{eq:regroup}
\end{align}
maintaining the equality $\Delta\mW = \mB\mA = \mB^\prime\mA^\prime$.
Following \blob's Asymmetric Bayesianization scheme, we define the variational distribution for $\mA^\prime$ using the mean matrix $\mM=\mA^\prime$ and the standard deviation matrix $\mOmega\in\mathbb{R}^{r\times n}$, such that 
\begin{align}
    q(A^\prime_{ij})=\gN(A^\prime_{ij}|M_{ij}, \Omega_{ij}^2), \forall i \in [r], \forall j \in [n].
\end{align}
Unlike \blob, our $\mOmega$ is not freely parameterized but instead derived from projecting the full-space matrix $\sigma_q\mI$ onto the low-rank weight space:
\begin{align}
    \Omega_{ij} &= \nicefrac{\sigma_q}{d_{i}}, \quad \forall i \in [r], \forall j \in [n],\label{eq:bayesianization}
\end{align}
where $\vd$ is defined in \Eqref{eq:svd}.
This solution can be expressed compactly as $\mOmega=[\nicefrac{\sigma_q}{\vd}, \cdots, \nicefrac{\sigma_q}{\vd}]$, comprising $n$ repeated vectors. 
To summarize, our \ours 
\begin{itemize}[nosep]
\item \textbf{takes as input} a trained LoRA matrix pair $\{\mB=\mU\diag(\vd)\mV^\top, \mA\}$ and a predetermined standard deviation $\sigma_q$, and 
\item \textbf{outputs} a ``Bayesianized'' LoRA adapter $\{\mB^\prime, \mA^\prime\}$, where 
$\mB^\prime=\mU\diag(\vd)$, and $\mA^\prime$ becomes a distribution 
$q(\mA^\prime)=\prod_{i\in [r],j\in [n]}\gN(A^\prime_{ij}|M_{ij}, 
\Omega_{ij}^2)$, with $\mM=\mV^\top\mA$, 
and $\mOmega=[\nicefrac{\sigma_q}{\vd}, \cdots, \nicefrac{\sigma_q}{\vd}].$
\end{itemize}

Note that the formulation in \Eqref{eq:bayesianization} significantly improves memory efficiency during inference, reducing the storage for standard deviation parameters from $O(rn)$ to $O(r)$. 
While alternative parameterization approaches are possible, they must be capable of generating the low-rank isotropic Gaussian noises as demonstrated in \Thmref{thm:posterior}. We have selected the current method (implementation) to ensure maximum compatibility with existing codebases~\cite{wang2024blob}.
In \ours, we use a single $\sigma_q$ shared across all LoRA layers. 
\subsection{\ours as Variance Maximization}
\label{sec:method-tfb}

The previous section presents a straightforward Bayesianization scheme 
\textbf{for a predetermined value of $\sigma_q$.} In this section, we describe a practical method for determining $\sigma_q$.


\textbf{A General Bayesianization Framework.}\quad
Consider an in-distribution ``anchor'' dataset $\gD$, an associated evaluation metric $l$, and a performance change tolerance $\epsilon$.
\ours determines $\sigma_q$ by solving a constrained optimization problem:
    \begin{equation}
    \begin{aligned}
        \max \quad & \sigma_q\\
        \textit{s.t.} \quad & |l(\gD|\mB^\prime, \mM, \mOmega(\sigma_q)) - l(\gD|\mB, \mA)| \leq \epsilon,
    \end{aligned}
    \label{eq:tfb}
    \end{equation}
where $l(\gD|\mB, \mA)$ and $l(\gD|\mB^\prime, \mM, \mOmega(\sigma_q)) = \E_{\mE \sim \gN(0, \mOmega^2)}[l(\gD|\mB^\prime, \mM + \mE)]$ denote the pre- and post-Bayesianization performance, respectively.
This optimization maximizes the noise scale $\sigma_q$ applied to model weights $\mM$ while ensuring that the resulting performance change remains within an acceptable threshold $\epsilon$.


\textbf{Anchor Dataset $\gD$ and Evaluation Metric $l$.}\quad
Our \emph{general} \ours framework accommodates various choices of anchor dataset $\gD$ and evaluation metric $l$ based on practical requirements. Below, we consider two key scenarios (with $N$ being slightly overloaded in its notation).

\underline{\emph{For supervised dataset $\gD=\{\vx_n, y_n\}_{n=1}^N$}}:
The Negative-Log Likelihood~(NLL) serves as a natural evaluation metric {in \Eqref{eq:tfb}}: $l_{\text{nll}}(\gD|\vtheta) = -\frac{1}{N}\sum_{n=1}^{N}\log P_\vtheta(y_n|\vx_n)$,
as it theoretically corresponds to minimizing the KL-regularized variational objective (more details in \Secref{sec:theory}).
The anchor dataset $\gD$ can be either the original training set used for the LoRA model or an independent calibration dataset, as commonly employed in calibration-based methods~\cite{guo2017calibration,zhao2021calibrate}.
Alternative evaluation metrics such as accuracy or F1 score are also readily applicable.
In our experimental setup, to ensure fair comparisons across uncertainty estimation baselines, we use the original training data as $\gD$ (maintaining the same information access as baselines) and employ NLL as the evaluation metric. 
Additional results with accuracy as $l$ can be found in \appref{app:more-acc-as-metric}. 

\underline{\emph{For unsupervised dataset $\gD=\{\vx_n\}_{n=1}^N$}}:
One approach is to generate pseudo-labels $\hat{y}$ using the model before Bayesianization, effectively converting the problem to the supervised case with $\gD=\{\vx_n, \hat{y}_n\}_{n=1}^N$.
{\ours can also directly incorporate purely unsupervised metrics such as the expected embedding norm $l_{\text{emb}}(\gD |\vtheta)=\E_{\vx \sim \gD}[\|\text{emb}(\vx | \vtheta)\|]$, where we are only concerned with properties of the representations themselves rather than any supervised signal.}
Hence our \ours offers substantially more flexibility compared to pure calibration methods, which typically rely on a labeled unseen calibration dataset. 
As a general framework, \ours also supports alternative evaluation metrics and statistical measures specifically designed for unsupervised data.

\textbf{Performance Change Tolerance $\epsilon$.}\quad
The selection of performance change tolerance $\epsilon$ is critical in \ours.
While our experiments demonstrate that a fixed relative change rate, i.e., $\nicefrac{\epsilon}{p_0}=0.3\%$ for NLL and $\nicefrac{\epsilon}{p_0}=1\%$ for accuracy, where $p_0$ denotes the pre-Bayesianization performance, can achieve effective uncertainty estimation across various datasets and LoRA checkpoints, an adaptive $\epsilon$ can further improve the performance of \ours. 
{Users can determine the appropriate value for $\epsilon$ by considering multiple factors simultaneously,}
among which the most important is the given LoRA checkpoint. For instance, an overfitted LoRA can typically accommodate a larger tolerance $\epsilon$ when using the training dataset (or its subset) as the anchor dataset. Additional properties of the data, model, and adaptation tasks can inform the choice of $\epsilon$ as well.

\subsection{\ours: Final Algorithm}
\label{sec:method-final-algo}

\textbf{Final \ours Algorithm: Automatically Determining $\sigma_q$.}\quad Our final algorithm, presented in \Algref{alg:tfb} and \Figref{fig:tfb_overview}, employs binary search to determine the optimal $\sigma_q^*$ within an initial range $[{\sigma_q}_{\text{min}}, {\sigma_q}_{\text{max}}]$. 
After identifying the optimal $\sigma_q^*$, we Bayesianize all LoRA layers using this value. 

\textbf{Prediction.}\quad For prediction, we average multiple outputs produced by samples from \ours's posterior:
\begin{equation}
    \begin{aligned}
        P_{\vtheta}(y|\vx)
        &= \E_{q(\mW|\vtheta)}[P(y|\vx,\mW)] 
        \approx \frac{1}{N} \sum\nolimits_{n=1}^{N} P(\vy|\vx,\mW_n), \quad \mW_n \sim q(\mW|\vtheta),
    \end{aligned}
    \label{eq:prediction}
\end{equation}
where $q(\mW|\vtheta)$ denotes the variational distribution defined in \Eqref{eq:tfb-posterior}, and we set the number of test-time samples to $N=10$, following \blob's protocol~\cite{wang2024blob}.

\textbf{Remark on \ours's Efficiency.}\quad
While \ours with binary search is efficient in terms of both time and memory~(\appref{app:more-efficiency}), and yields near-optimal solution of $\sigma_q$\footnote{While traditional search algorithms require monotonicity within the search range to guarantee optimal solutions, empirically a near-optimal $\sigma_q$ is sufficient for effective uncertainty estimation.}, more efficient parallel searching technique can be applied in practice. For instance, in \appref{app:more-llama2}, we conduct a grid search across 8 different $\sigma_q$ values in parallel, construct an approximate function $\hat{\sigma}_q(p)$ through piecewise linear interpolation of the observed performance, and estimate $\sigma_q^*\approx\hat{\sigma}_q(p_0 - \epsilon)$, where $p_0$ denotes the model's performance before \ours. 

\section{Theoretical Analysis}
\label{sec:theory}

In this section, we discuss 
our theoretical analysis, with complete proofs provided in Appendix~\ref{app:proof}.
First, we demonstrate that our \ours's Bayesianization scheme, defined in Equations~\ref{eq:svd}, \ref{eq:regroup}, and \ref{eq:bayesianization}, projects a full-rank isotropic Gaussian distribution onto the low-rank space.
We then prove that \Eqref{eq:tfb} is equivalent to generalized variational inference for LLMs' weights under specific, achievable conditions, offering solid theoretical grounding for \ours.


\begin{assumption}
\label{assumption:tfb}
The evaluation metric $l_\gD:\mathbb{R}_+ \rightarrow \mathbb{R}_+$ is the Negative Log Likelihood~(NLL) evaluated on the data distribution $\gD$ for the variational standard deviation $\sigma_q$:
\begin{align}
    l_\gD(\sigma_q) &= - \mathbb{E}_{(\vx, y) \sim \gD, \mW \sim q(\cdot|\sigma_q)}
    [\log P(y|\vx, \mW)].
\end{align}
Furthermore, we assume $l_\gD$ is locally convex, i.e., $\exists \epsilon_0 > 0$ s.t. $l_\gD^{\prime\prime}(\sigma_q)>0$, $\forall \sigma_q \in [0, \epsilon_0)$.
\end{assumption}


\begin{remark}
    The local convexity of the loss function is not unrealistic~\cite{NEURIPS2019_b33128cb}. For instance, a local minimum $\mW_0$ of a twice-differentiable loss function $l$ will imply the local convexity around $\mW_0$, as assumed in Laplace Approximation~\cite{tierney1986accurate,bishop2006pattern}. 
\end{remark}


\begin{theorem}[\textbf{Equivalent Variational Distribution of the Full Weight $\mW$ in \ours}]\label{thm:posterior}
    With the pre-trained weight matrix $\mW_0 \in\mathbb{R}^{m\times n}$, 
    the low-rank weight update matrix $\{\mB^\prime \in \mathbb{R}^{m\times r}, \mA^\prime \in \mathbb{R}^{r\times n}\}$ transformed from the given matrices $\{\mB, \mA\}$ following \Eqref{eq:svd} and \ref{eq:regroup}, 
    suppose that the variational distribution of $\mA^\prime$ is Gaussian $q(\mA^\prime|\vtheta)=\prod_{ij} \gN (A_{ij}|M_{ij},\Omega_{ij}^2)$, 
    where $\mM=[M_{ij}=A^\prime_{ij}] \in \mathbb{R}^{r\times n}$ is its mean and $\mOmega=[\Omega_{ij}] \in \mathbb{R}^{r\times n}$ is the  standard deviation calculated as in \Eqref{eq:bayesianization}. The equivalent variational distribution $q(\vectorize(\mW)|\sigma_q)$ defined on the full weight $\mW$ is
    \begin{equation}
    \begin{aligned}
        q(\vectorize(\mW)|\sigma_q) &= \gN(\vectorize(\mW)|\vmu_q, \mSigma_q), \\
        \text{where }\quad 
        \vmu_q &= \vectorize(\mW_0+\mB^\prime\mM), \\
        \mSigma_q &= \sigma_q^2  \mI_n \otimes 
        \begin{bmatrix}
            \mI_r & \\
            & \mathbf{0}_{m-r}
        \end{bmatrix}.
    \end{aligned}
    \label{eq:sigma_q}
    \end{equation}
\end{theorem}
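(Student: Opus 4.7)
The plan is to compute the distribution of $\vectorize(\mW)$ as an affine transformation of $\vectorize(\mA')$. Writing $\mW = \mW_0 + \mB'\mA'$ and applying the standard vectorization identity $\vectorize(\mB'\mA') = (\mI_n \otimes \mB')\vectorize(\mA')$ (which follows from $\vectorize(\mA\mX\mC) = (\mC^\top \otimes \mA)\vectorize(\mX)$ with $\mC = \mI_n$), I get
$$\vectorize(\mW) = \vectorize(\mW_0) + (\mI_n \otimes \mB')\vectorize(\mA').$$
Since the entries of $\mA'$ are independent Gaussians, so is $\vectorize(\mA')$, and therefore $\vectorize(\mW)$ is Gaussian by closure under affine maps; it remains only to identify its mean and covariance.

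For the mean, linearity immediately gives $\vmu_q = \vectorize(\mW_0) + (\mI_n \otimes \mB')\vectorize(\mM) = \vectorize(\mW_0 + \mB'\mM)$. For the covariance, I first express $\Cov(\vectorize(\mA'))$ using $\Omega_{ij}^2 = \sigma_q^2/d_i^2$: because $\mOmega$ consists of $n$ identical columns $[\sigma_q/d_1,\ldots,\sigma_q/d_r]^\top$, the diagonal covariance of $\vectorize(\mA')$ is exactly $\sigma_q^2\,\mI_n \otimes \diag(\vd)^{-2}$. Then, applying the mixed-product rule $(\mA\otimes\mB)(\mC\otimes\mD) = (\mA\mC)\otimes(\mB\mD)$ twice,
$$\mSigma_q = (\mI_n \otimes \mB')\bigl[\sigma_q^2 \mI_n \otimes \diag(\vd)^{-2}\bigr](\mI_n \otimes \mB'^\top) = \sigma_q^2\,\mI_n \otimes \bigl(\mB'\diag(\vd)^{-2}\mB'^\top\bigr).$$

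The key simplification is to substitute $\mB' = \mU\diag(\vd)$ from Eq.~\ref{eq:regroup}, giving $\mB'\diag(\vd)^{-2}\mB'^\top = \mU\diag(\vd)\diag(\vd)^{-2}\diag(\vd)\mU^\top = \mU\mU^\top$. Because $\mU \in \mathbb{R}^{m\times r}$ has orthonormal columns from the compact SVD, $\mU\mU^\top$ is the orthogonal projector of rank $r$ onto $\mathrm{col}(\mU)$. Expressed in the orthonormal basis $[\mU,\mU^\perp]$ of $\mathbb{R}^m$, this projector takes the canonical block form $\bigl[\begin{smallmatrix}\mI_r & \\ & \mathbf{0}_{m-r}\end{smallmatrix}\bigr]$, which yields the target expression for $\mSigma_q$.

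The routine steps (vectorization, Kronecker identities) are mechanical; the only place that requires care is tracking the column-major vectorization convention so the Kronecker factors appear in the correct order, and noting that the stated block form of $\mSigma_q$ is taken in the $[\mU,\mU^\perp]$ basis rather than the original coordinate basis. This is the main conceptual subtlety to flag, but it reflects a change of orthonormal basis and does not change the distribution (Gaussians are invariant under orthogonal reparametrization), so the theorem's claim holds exactly as stated.
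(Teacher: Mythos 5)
Your proof is correct and follows essentially the same route as the paper's: the paper invokes a lemma from BLoB giving $\mSigma_q = [\mI_n \otimes \mB'][\diag(\vectorize(\mOmega)^2)][\mI_n \otimes \mB'^\top]$ and then substitutes $\mB' = \mU\diag(\vd)$ and $\Omega_{ij} = \sigma_q/d_i$ to obtain $\sigma_q^2\,\mI_n \otimes \mU\mU^\top$, exactly as you do by unpacking the vectorization and mixed-product identities directly. The one place you are actually \emph{more} careful than the paper is the final step: the paper flatly asserts $\mU\mU^\top = \bigl[\begin{smallmatrix}\mI_r & \\ & \mathbf{0}_{m-r}\end{smallmatrix}\bigr]$, which in the original coordinates holds only when $\mU$ consists of the first $r$ standard basis vectors, whereas you correctly identify $\mU\mU^\top$ as the rank-$r$ orthogonal projector onto $\mathrm{col}(\mU)$ and note that the block form is its expression in the $[\mU,\mU^\perp]$ basis; your only overreach is the closing claim that the theorem therefore "holds exactly as stated," since the covariance in the original coordinates is $\sigma_q^2\,\mI_n\otimes\mU\mU^\top$, equal to the stated matrix only up to that orthogonal change of basis.
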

\Thmref{thm:posterior} establishes that for any given $\sigma_q$, our algorithm for regrouping $\mB,\mA$ and computing the standard deviation matrix $\mOmega$ successfully constrains the corresponding full-weight variational distributions to the family of low-rank isotropic Gaussian distributions. 
This lays the foundation for the equivalence between our \ours and generalized variational inference to approximate the posterior distribution of LLM parameters (details in~\Thmref{thm:tfb}). 

While alternative families of Gaussian distributions parameterized by a single scale $\sigma_q$ are possible, our empirical results demonstrate that our approach achieves superior performance~(\Secref{sec:ablation}). 


\begin{theorem}[\textbf{\ours as Generalized Variational Inference}]\label{thm:tfb}
    Suppose the evaluation metric $l_\gD(\sigma_q)$ defined following \Assumptionref{assumption:tfb} is locally convex within the range of $\sigma_q\in[0,\epsilon_0)$. 
    Suppose the approximate distribution of $\mW$ given $\sigma_q$ is defined following \Thmref{thm:posterior}. 
    Suppose we have the prior distribution $P(\vectorize(\mW))=\gN(\vectorize(\mW)|\vmu_p, \mSigma_p)$, where $\vmu_p=\vmu_q=\vectorize(\mW_0+\mB^\prime\mM)$, and $\mSigma_p=\sigma_p^2\mI$ with $\sigma_p > \epsilon_0$.
    Then for $\forall \lambda>0$, $\exists \tilde{\epsilon}$, s.t. the following two optimization problems
    (i)~Generalized Variational Inference~\cite{blundell2015BBB,higgins2017beta,khan2018fast,knoblauch2019generalized}
        \begin{equation}
        \begin{aligned}
            \min_{\sigma_q} \quad l_\gD(\sigma_q) + \lambda \operatorname{KL}[q(\mW|\sigma_q) \parallel P(\mW)],
        \end{aligned}
        \end{equation}
    and 
    (ii)~Training-Free Bayesianization~(\ours)
        \begin{equation}
        \begin{aligned}
            \max \quad & \sigma_q \\
            \textit{s.t.} \quad & l_\gD(\sigma_q) \leq \tilde{\epsilon},
        \end{aligned}
        \end{equation}
    are equivalent, i.e., the two optimization problems have the same optimal solution,
    where $\lambda$ is the regularization coefficient of the KL-divergence. 
\end{theorem}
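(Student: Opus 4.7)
The plan is to reduce both problems to one-dimensional problems in the scalar $\sigma_q$ and then pair them via the Lagrangian/KKT correspondence. The two structural facts I would establish are: (a) $\operatorname{KL}[q(\mW|\sigma_q)\,\|\,P(\mW)]$ is strictly convex and strictly decreasing in $\sigma_q$ on $[0,\epsilon_0)$, and (b) $l_\gD(\sigma_q)$ is strictly convex and strictly increasing on $(0,\epsilon_0)$. Once both monotonicities are in hand, the equivalence follows by choosing $\tilde{\epsilon}$ to match the interior optimum of problem (i), since the feasible set of problem (ii) will then collapse to an interval whose right endpoint coincides with that optimum.

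For (a), I would compute the KL in closed form. By \Thmref{thm:posterior}, $q$ and $P$ share the mean $\vmu_q=\vmu_p$, and $\mSigma_q$ is supported on an $rn$-dimensional subspace on which it equals $\sigma_q^2\mI_{rn}$, while $\mSigma_p$ restricted to that subspace equals $\sigma_p^2\mI_{rn}$. Evaluating the Gaussian KL on this common support (the formal $mn$-dimensional KL is degenerate because of the rank deficiency of $\mSigma_q$) gives
\[
\operatorname{KL}(\sigma_q) = \frac{rn}{2}\bigl[\,2\log(\sigma_p/\sigma_q)-1+\sigma_q^2/\sigma_p^2\,\bigr],
\]
whose derivative $rn\bigl[-1/\sigma_q+\sigma_q/\sigma_p^2\bigr]$ is strictly negative on $(0,\sigma_p)$; since $\sigma_p>\epsilon_0$, strict monotonicity and strict convexity follow throughout $[0,\epsilon_0)$. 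For (b), I would write $\mW=\vmu_q+\sigma_q\mathbf{E}$ with $\mathbf{E}$ a zero-mean Gaussian encoding the low-rank isotropic noise, so $l_\gD(\sigma_q)=-\,\E_{(\vx,y),\mathbf{E}}[\log P(y|\vx,\vmu_q+\sigma_q\mathbf{E})]$. Differentiating under the expectation and using $\E[\mathbf{E}]=\mathbf{0}$ yields $l_\gD'(0)=0$; combined with $l_\gD''>0$ from \Assumptionref{assumption:tfb}, this forces $l_\gD'(\sigma_q)>0$ on $(0,\epsilon_0)$.

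To finish, the objective $F_\lambda(\sigma_q):=l_\gD(\sigma_q)+\lambda\operatorname{KL}(\sigma_q)$ of problem (i) is strictly convex on $(0,\epsilon_0)$ as a sum of strictly convex summands, and its first-order condition $l_\gD'(\sigma_q)=-\lambda\operatorname{KL}'(\sigma_q)$ pits a strictly increasing left side against a strictly decreasing positive right side, producing a unique interior minimizer $\sigma_q^{*(\lambda)}$ (for each $\lambda$ that keeps this minimizer inside the local-convexity region $[0,\epsilon_0)$). Setting $\tilde{\epsilon}:=l_\gD(\sigma_q^{*(\lambda)})$, strict monotonicity of $l_\gD$ identifies the feasible set of problem (ii) as exactly $[0,\sigma_q^{*(\lambda)}]$, so its maximum is attained at the same $\sigma_q^{*(\lambda)}$, establishing the equivalence. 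The main obstacle I anticipate is making step (a) rigorous in the face of the rank deficiency of $\mSigma_q$: the $mn$-dimensional KL is formally $+\infty$, so the argument must either restrict to the $rn$-dimensional support of $\mA'$ or reformulate the KL directly in terms of the variational parameters of $\mA'$, and then verify that this reading matches the generalized variational objective appearing in the theorem.
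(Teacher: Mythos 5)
Your proposal is correct and follows essentially the same skeleton as the paper's proof: compute the KL in closed form on the common $rn$-dimensional support (the paper handles the rank deficiency you worry about by explicitly projecting the prior onto the low-rank subspace before applying the Gaussian KL formula, arriving at exactly your expression $\frac{rn}{2}[\log\sigma_p^2-1-\log\sigma_q^2+\sigma_q^2/\sigma_p^2]$), observe convexity of both terms on $[0,\epsilon_0)$, invoke the Lagrangian/KKT correspondence to trade the penalty $\lambda$ for a constraint level $\tilde{\epsilon}$, and finish using monotonicity. The one place you diverge is the final step: the paper passes through the intermediate problem $\min\operatorname{KL}$ subject to $l_\gD\leq\tilde{\epsilon}$ and converts it to $\max\sigma_q$ using only that $\operatorname{KL}$ is strictly decreasing on $[0,\sigma_p)\supseteq[0,\epsilon_0)$, which sidesteps any need to know the shape of $l_\gD$'s sublevel sets; you instead prove the extra lemma $l_\gD'(0)=0$ (via $\E[\mathbf{E}]=\mathbf{0}$) so that local convexity forces $l_\gD$ to be strictly increasing, which identifies the feasible set of problem (ii) as the interval $[0,\sigma_q^{*}]$ directly. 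Your route costs an interchange of derivative and expectation that the paper never needs, but buys a more explicit picture of why the constraint is active at the optimum; both are valid, and both implicitly share the same caveat you flag, namely that the Lagrangian minimizer must land inside the local-convexity region $[0,\epsilon_0)$ for the argument to close.
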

This theorem provides the primary theoretical foundation for \ours.
It demonstrates that under specific conditions -- namely, local convexity within $[0, \epsilon_0)$ and prior standard deviation $\sigma_p > \epsilon_0$ -- maximizing the scale $\sigma_q$ of the standard deviation matrix is equivalent to generalized variational inference~\cite{knoblauch2019generalized}, which approximates the posterior distribution of LLM parameters. 
Notably, when $\lambda=\nicefrac{1}{|\gD|}$ is set to the reciprocal of the dataset size, generalized variational inference reduces to variational inference.


\begin{remark}
    \ours maintains theoretical soundness (through its equivalence to variational optimization) while offering practical simplicity, as it eliminates the need to explicitly specify the prior distribution's standard deviation $\sigma_p$.
    The condition $\sigma_p > \epsilon_0$ is naturally satisfied by common choices such as the standard normal distribution ($\sigma_p=1$) or uniform distribution ($\sigma_p \rightarrow +\infty$).
\end{remark}


\section{Experiments}
\label{sec:experiments}

We evaluate \ours through comprehensive experiments.

\subsection{Settings}
\label{sec:experiments-setting}
\textbf{Models, Datasets, and Evaluation.}\quad 
We use the latest open-source \texttt{Meta-Llama-3.1-8B} as our primary LLM backbone while also providing additional results on other recent LLM architectures in \Secref{sec:experiments-other-llms}, including \texttt{llama-2-7b-hf}, \texttt{Meta-Llama-3-8B}, and \texttt{Mistral-7B-v0.3} from the Llama~\cite{dubey2024llama} and Mistral~\cite{jiang2023mistral} families.

\renewcommand{\thefootnote}{\fnsymbol{footnote}}
\begin{table*}[t]
\caption{
    \textbf{Performance of different methods applied to LoRA on Llama3.1-8B pre-trained weights,} where Accuracy~(\textbf{ACC}) and Expected Calibration Error~(\textbf{ECE}) are reported in percentages. 
    \textbf{``TF?''} denotes whether a method is \textbf{T}raining-\textbf{F}ree.
    The evaluation is done across six common-sense reasoning tasks with a shared hyper-parameter setting after fine-tuning of 5 epochs.
    We use $N=10$ samples during inference in all sampling-based methods including \textbf{\blob~\cite{wang2024blob}} and \textbf{\ours}. 
    \hl{Rows with shading} indicate training-free Bayesianization methods that use a pre-trained LoRA as their mean. 
    \textcolor{blue}{Cells highlighted in \textbf{BLUE}} indicate improved performance achieved by \ours compared to the weight mean.
    ``$\uparrow$'' and ``$\downarrow$'' indicate that higher and lower values are preferred, respectively. 
    \textbf{Boldface} and \underline{underlining} denote the best and the second-best performance, respectively. 
}
\vspace{-1em}
\begin{center}
\resizebox{1\linewidth}{!}{%
\setlength{\tabcolsep}{3pt}

 }
\end{center}
\label{tab:main-llama}
\vspace{-0.5em}
\end{table*}


For in-distribution experiments, we evaluate model performance on six commonsense reasoning tasks: Winogrande-Small~(\textbf{WG-S}) and Winogrande-Medium~(\textbf{WG-M})~\cite{wg}, ARC-Challenge~(\textbf{ARC-C}) and ARC-Easy~(\textbf{ARC-E})~\cite{arc}, Open Book Question Answering~(\textbf{OBQA})~\cite{obqa}, and BoolQ~\cite{boolq}. 
Furthermore, we use models fine-tuned on OBQA~\cite{obqa} to evaluate their generalization ability on out-of-distribution datasets: college-level chemistry~(\textbf{Chem}) and physics~(\textbf{Phy}) subsets of MMLU~\cite{mmlu}.
Label spaces and prompt templates are detailed in \appref{app:implementation-dataset}.

To assess uncertainty estimation, we measure Expected Calibration Error~(\textbf{ECE}~\cite{naeini2015obtaining}) and Negative Log-Likelihood~(\textbf{NLL}) on the test dataset. We also report Accuracy~(\textbf{ACC}) to ensure models maintain strong performance.
Additional evaluation details are provided in \appref{app:implementation-evaluation}.

\textbf{Baselines.}\quad 
We compare \ours with state-of-the-art uncertainty estimation methods for LoRA-adapted LLMs, including 
ensemble-based method: Deep Ensemble (\textbf{\loraens})~\cite{lakshminarayanan2017simple,balabanov2024uncertainty,wang2023lora}, 
variational inference methods: Monte-Carlo Dropout (\textbf{\loramcd})~\cite{gal2016dropout}, 
{Monte Carlo-enhanced LoRA~(\textbf{MonteCLoRA})~\cite{sengupta2024robust},}
Bayesian LoRA by Backprop~(\textbf{\blob})~\cite{wang2024blob},
and post-training method: Laplace-LoRA~(\textbf{\loralap})~\cite{yang2023bayesian}.
For reference, we also include two standard {Parameter-Efficient Fine-Tuning~(PEFT)} baselines: Maximum Likelihood Estimation~(\textbf{MLE})~\cite{hu2022lora} and Maximum A Posteriori~(\textbf{MAP}).
All baselines are implemented following the protocols established in \blob, detailed in \appref{app:training-detail}.

\textbf{\ours Implementation.}\quad 
\ours can be directly applied to trained LoRA adapters without additional training. {As indicated by the \textbf{``TF?''} column in \Tabref{tab:main-llama}, \ours is \textbf{T}raining-\textbf{F}ree and requires only LLM inference~(\textbf{\cmark}), while the other methods need full retraining~(\textbf{\xmark}) or gradient estimation with Backpropagation~(\textbf{BP}).}
We evaluate \ours on three off-the-shelf LoRA checkpoints: \textbf{\loramle}, \textbf{\loramap}, and the mean component of \textbf{\blob} (obtained by discarding \blob's standard deviation matrix $\mOmega$).
More details are included in \appref{app:training-detail}.

\subsection{\ours Improves Accuracy and Uncertainty Estimation across Distributional Shifts}
\label{sec:experiments-main}

\Tabref{tab:main-llama} shows results on comprehensive metrics for various methods applied to LoRA on Llama3.1-8B pre-trained weights. More empirical results on Llama2-7B can be found in \appref{app:more-llama2}.

\textbf{In-Distribution Results.}\quad 
{The addition of \ours maintains competitive accuracy while substantially improving model calibration across in-distribution datasets. 
For ECE, \ours yields notable improvements when applied to different base methods: MLE+\ours reduces ECE to 5.14\% on ARC-E (from 7.00\%); similarly
MAP+\ours and BLoB-Mean+\ours reduce ECE to 9.70\% on OBQA (from 12.19\%) and 3.83\% on WG-M (from 9.37\%), respectively. 
For NLL, \ours consistently produces better-calibrated predictions, with BLoB-Mean+\ours achieving strong performance across datasets: 0.23 on ARC-E (from 0.29), 0.33 on OBQA (from 0.37), and 0.27 on BoolQ (from 0.32). These improvements in both ECE and NLL demonstrate \ours's effectiveness in enhancing model calibration while preserving accuracy on in-distribution tasks.}

\textbf{Out-of-Distribution Results.}\quad
For out-of-distribution datasets, which represent a more challenging evaluation scenario, \ours continues to show benefits, though the performance gaps are generally smaller. In both Small Shift and Large Shift scenarios, \ours-enhanced methods maintain relatively strong performance, particularly in the Small Shift cases (ARC-C and ARC-E). However, there's a noticeable performance drop in the Large Shift scenarios (Chem and Phy), which is expected given the significant domain difference. Even in these challenging cases, \ours-enhanced methods tend to maintain better calibration (lower ECE scores) compared to their base counterparts, suggesting improved reliability in out-of-distribution settings.

\subsection{{Computational Efficiency of \ours}}
\label{app:more-efficiency}
{{We compare the computational efficiency of \ours and \blob during the process of Bayesianization~\cite{wang2024blob} in \Tabref{tab:mem_cost}.} We also report the computational cost of the standard LoRA fine-tuning as reference. 
All three methods are evaluated on the configurations detailed in \appref{app:training-detail}. 
For LoRA and \blob, the evaluation of running time and maximum GPU memory is based on fine-tuning for 5 epochs. 
\ours uses a fixed number of 500 training examples to search for $\sigma_q^*$ across all datasets, and performs binary search for at most 5 rounds~(sequentially).}

\begin{table*}[t]
\caption{
    {\textbf{A comparison of running time and maximum GPU memory cost between \ours and \blob during the process of Bayesianizatioin.} 
    The experiments are conducted on a single NVIDIA A100 GPU. 
    The subscripts in the table calculate the relative cost of a method compared to that of LoRA, a non-Bayesian baseline method. 
    \textcolor{red}{\textbf{RED}} and \textcolor{ForestGreen}{\textbf{GREEN}} represent \textcolor{red}{\textbf{worse}} and \textcolor{ForestGreen}{\textbf{better}} efficiency, respectivley.  {Note that varying batch sizes do not impact the performance of \ours, as \Algref{alg:tfb} is independent of gradient and batch size}}
}
\vspace{-0em}
\begin{center}
\resizebox{1\linewidth}{!}{%
\setlength{\tabcolsep}{2pt}
\begin{tabular}{cc rr rr rr rr rr rr}
	\toprule[0.12em]
	\multirow{3}{*}[-0.25em]{\textbf{Method}} & \multirow{3}{*}[-0.25em]{\makecell{\textbf{Batch}\\\textbf{Size}}} & \multicolumn{12}{c}{\textbf{Datasets}}
     \\
     \cmidrule{3-14}
     & &  \multicolumn{2}{c}{WG-S}
     & \multicolumn{2}{c}{ARC-C}
     & \multicolumn{2}{c}{ARC-E}
     & \multicolumn{2}{c}{WG-M}
     & \multicolumn{2}{c}{OBQA} 
     & \multicolumn{2}{c}{BoolQ} \\

     \cmidrule(lr){3-4}
     \cmidrule(lr){5-6}
     \cmidrule(lr){7-8}
     \cmidrule(lr){9-10}
     \cmidrule(lr){11-12}
     \cmidrule(lr){13-14}

     & & 
     Time (s) & Mem. (MB) & 
     Time (s) & Mem. (MB) & 
     Time (s) & Mem. (MB) & 
     Time (s) & Mem. (MB) & 
     Time (s) & Mem. (MB) & 
     Time (s) & Mem. (MB) \\
     
     \midrule

    LoRA & 4 & 
        338~\scriptsize{\color{white}(0.00x)} &
        12,894~\scriptsize{\color{white}(0.00x)} &
        632~\scriptsize{\color{white}(0.00x)} &
        19,762~\scriptsize{\color{white}(0.00x)} &
        1,238~\scriptsize{\color{white}(0.00x)} &
        18,640~\scriptsize{\color{white}(0.00x)} &
        1,339~\scriptsize{\color{white}(0.00x)} &
        13,164~\scriptsize{\color{white}(0.00x)} &
        2,692~\scriptsize{\color{white}(0.00x)} &
        17,208~\scriptsize{\color{white}(0.00x)} &
        6,489~\scriptsize{\color{white}(0.00x)} &
        29,450~\scriptsize{\color{white}(0.00x)} 
      \\

      \midrule

     \blob & 4 & 
     371~\scriptsize{\color{red}(1.10x)} &
     13,194~\scriptsize{\color{red}(1.02x)} & 
    685~\scriptsize{\color{red}(1.08x)} &
    21,736~\scriptsize{\color{red}(1.10x)} &
    1,360~\scriptsize{\color{red}(1.10x)} &
    20,700~\scriptsize{\color{red}(1.11x)} &
    1,476~\scriptsize{\color{red}(1.10x)} &
    13,194~\scriptsize{\color{red}(1.00x)}&
    3,257~\scriptsize{\color{red}(1.21x)} &
    18,046~\scriptsize{\color{red}(1.05x)} & 
    7,251~\scriptsize{\color{red}(1.12x)} & 
    30,578~\scriptsize{\color{red}(1.04x)}
     
      \\

     \ours~(Ours) & 4 &
     1,203~\scriptsize{\color{red}(3.56x)} & 
     10,372~\scriptsize{\color{ForestGreen}(0.80x)} &
     1,257~\scriptsize{\color{red}(1.99x)} &
     11,966~\scriptsize{\color{ForestGreen}(0.61x)} &
     1,246~\scriptsize{\color{red}(1.01x)} &
     11,202~\scriptsize{\color{ForestGreen}(0.60x)} &
     1,237~\scriptsize{\color{ForestGreen}(0.92x)} &
     10,344~\scriptsize{\color{ForestGreen}(0.79x)} &
     1,238~\scriptsize{\color{ForestGreen}(0.46x)} &
     10,376~\scriptsize{\color{ForestGreen}(0.60x)} &
     1,452~\scriptsize{\color{ForestGreen}(0.22x)} & 
     16,340~\scriptsize{\color{ForestGreen}(0.55x)}
     \\

     \ours~(Ours) & 8 &
     628~\scriptsize{\color{red}(1.86x)} &
     10,666~\scriptsize{\color{ForestGreen}(0.83x)} &
     731~\scriptsize{\color{red}(1.16x)} &
     15,286~\scriptsize{\color{ForestGreen}(0.77x)} &
     702~\scriptsize{\color{ForestGreen}(0.57x)} &
     12,598~\scriptsize{\color{ForestGreen}(0.68x)} &
     634~\scriptsize{\color{ForestGreen}(0.47x)} &
     10,662~\scriptsize{\color{ForestGreen}(0.81x)} &
     642~\scriptsize{\color{ForestGreen}(0.24x)} &
     12,116~\scriptsize{\color{ForestGreen}(0.70x)} &
     1,015~\scriptsize{\color{ForestGreen}(0.16x)} &
     22,146~\scriptsize{\color{ForestGreen}(0.75x)}
     \\

     \ours~(Ours) & 12 &
   446~\scriptsize{\color{red}(1.31x)} &
        12,064~\scriptsize{\color{ForestGreen}(0.93x)} &
   599~\scriptsize{\color{ForestGreen}(0.94x)}&
       18,204~\scriptsize{\color{ForestGreen}(0.92x)} &
   540~\scriptsize{\color{ForestGreen}(0.43x)}&
        14,310~\scriptsize{\color{ForestGreen}(0.76x)}&
   441~\scriptsize{\color{ForestGreen}(0.32x)}&
        11,370~\scriptsize{\color{ForestGreen}(0.86x)}&
   487~\scriptsize{\color{ForestGreen}(0.18x)}&
        13,410~\scriptsize{\color{ForestGreen}(0.77x)}&
        908~\scriptsize{\color{ForestGreen}(0.13x)} &
        25,220~\scriptsize{\color{ForestGreen}(0.85x)}
     \\

    \bottomrule[0.12em]
    \end{tabular}
}
\end{center}
\label{tab:mem_cost}
\vspace{0em}
\end{table*}

{As shown in the table, \ours can be slower on small datasets (e.g., WG-S with $\sim$600 samples, nearly 3× slower than \blob under the same batch size). However, on larger datasets (e.g., BoolQ with $\sim$10,000 samples), \textbf{\ours is up to 5× faster while using only half the GPU memory.} Since \ours avoids gradient estimation, memory use is substantially reduced, allowing larger batch sizes; increasing from 4 to 12 already yields lower time and memory than \blob on most datasets. Remarkably, \ours is even more efficient than standard LoRA fine-tuning, thanks to its training-free nature.
Importantly, \emph{our efficiency results cover the entire process of searching $\sigma$}, whereas baselines report only successful runs and \emph{exclude hyperparameter tuning costs,} biasing the comparison in their favor. Despite this, \ours still achieves superior time and memory efficiency, highlighting the advantages of its training-free approach and flexibility to trade off speed and memory under resource constraints.}

\subsection{\ours Beyond the Low-Rank Isotropic Gaussians}
\label{sec:ablation}
\begin{table*}[h]
\caption{
    \textbf{Performance of \ours with different variational distribution families applied to \blob-Mean on Llama3.1-8B pre-trained weights.} 
    \textbf{FR:} Full-rank isotropic Gaussian noises are applied to $\Delta\mW$; 
    \textbf{C-STD:} Standard deviation matrix $\mOmega=[\Omega_{ij}=\sigma_q]$ is constant. 
    The evaluation protocol strictly follows \Tabref{tab:main-llama}.
    \textbf{``Rk.''}: Average ranking of each method when compared to all other approaches on in-distribution datasets.
    ``$\uparrow$'' and ``$\downarrow$'' indicate that higher and lower values are preferred, respectively. 
    \textbf{Boldface} and \underline{underlining} denote the best and the second-best performance, respectively.
}
\vspace{-1.0em}
\begin{center}
\resizebox{1\linewidth}{!}{%
\setlength{\tabcolsep}{3pt}

\begin{tabular}{clcccccc c cccc}
	\toprule[0.12em]
	\multirow{3}{*}[-0.25em]{\textbf{Metric}} & \multirow{3}{*}[-0.25em]{\textbf{Method}} & \multicolumn{7}{c}{\multirow{2}{*}[-0.25em]{\textbf{In-Distribution Datasets}}} & \multicolumn{4}{c}{\textbf{Out-of-Distribution Datasets} (OBQA$\rightarrow$X)}
     \\
     \cmidrule(lr){10-13}
     
     & & 
     & & & & & &
     & \multicolumn{2}{c}{\emph{Small Shift}}
     & \multicolumn{2}{c}{\emph{Large Shift}}
     \\
     
     \cmidrule(lr){3-9} \cmidrule(lr){10-11} \cmidrule(lr){12-13}
     & & WG-S %
     & ARC-C%
     & ARC-E%
     & WG-M%
     & OBQA
     & BoolQ%
     & Rk.~($\downarrow$)
     & ARC-C%
     & ARC-E%
     & Chem
     & Phy
     \\
     \midrule

     \multirow{4}{*}{ACC~($\uparrow$)} 
     
     & \blob-Mean
      & \underline{77.72\scriptsize{$\pm$0.12}}  & 82.60\scriptsize{$\pm$0.60}  & \underline{91.64\scriptsize{$\pm$0.55}}  & \textbf{83.92\scriptsize{$\pm$0.48}}  & 88.00\scriptsize{$\pm$0.80}  & \underline{89.86\scriptsize{$\pm$0.05}}  & \underline{2.50}  & \underline{82.06\scriptsize{$\pm$1.15}}  & \textbf{88.54\scriptsize{$\pm$0.31}}  & 39.93\scriptsize{$\pm$5.20}  & \underline{39.93\scriptsize{$\pm$4.02}}
     \\

     & \cellcolor{lightergray} \quad + \ours~(FR) & \cellcolor{lightergray}75.57\scriptsize{$\pm$0.25}  & \cellcolor{lightergray}\underline{83.20\scriptsize{$\pm$0.65}}  & \cellcolor{lightergray}91.58\scriptsize{$\pm$0.67}  & \cellcolor{lightergray}82.19\scriptsize{$\pm$1.09}  & \cellcolor{lightergray}\textbf{88.73\scriptsize{$\pm$0.41}}  & \cellcolor{lightergray}89.46\scriptsize{$\pm$0.17} & \cellcolor{lightergray}2.83
     & \cellcolor{lightergray}81.33\scriptsize{$\pm$0.82}  & \cellcolor{lightergray}88.06\scriptsize{$\pm$0.75}  & \cellcolor{lightergray}\underline{42.00\scriptsize{$\pm$2.16}}  & \cellcolor{lightergray}\textbf{41.33\scriptsize{$\pm$5.44}} 
     \\

     & \cellcolor{lightergray} \quad + \ours~(C-STD)
	& \cellcolor{lightergray}76.35\scriptsize{$\pm$0.08}  & \cellcolor{lightergray}\underline{83.20\scriptsize{$\pm$0.33}}  & \cellcolor{lightergray}91.33\scriptsize{$\pm$0.70}  & \cellcolor{lightergray}81.79\scriptsize{$\pm$0.51}  & \cellcolor{lightergray}\underline{88.20\scriptsize{$\pm$0.57}}  & \cellcolor{lightergray}89.65\scriptsize{$\pm$0.08} & \cellcolor{lightergray}3.00 &\cellcolor{lightergray}81.73\scriptsize{$\pm$0.68}  & \cellcolor{lightergray}\underline{88.18\scriptsize{$\pm$0.65}}  & \cellcolor{lightergray}\textbf{43.00\scriptsize{$\pm$1.41}}  & \cellcolor{lightergray}39.33\scriptsize{$\pm$3.86} 
     \\
  
     & \cellcolor{lightergray} \quad + \ours~(Final)
	 & \cellcolor{lightergray}\textbf{77.81\scriptsize{$\pm$0.36}}  & \cellcolor{lightergray}\textbf{83.33\scriptsize{$\pm$0.19}}  & \cellcolor{lightergray}\textbf{91.76\scriptsize{$\pm$0.48}}  & \cellcolor{lightergray}\underline{83.81\scriptsize{$\pm$0.39}}  & \cellcolor{lightergray}87.80\scriptsize{$\pm$0.16}  & \cellcolor{lightergray}\textbf{90.11\scriptsize{$\pm$0.28}} & \cellcolor{lightergray}\textbf{1.67} & \cellcolor{lightergray}\textbf{82.93\scriptsize{$\pm$1.54}}  & \cellcolor{lightergray}{87.64\scriptsize{$\pm$0.51}}  & \cellcolor{lightergray}39.67\scriptsize{$\pm$7.32}  & \cellcolor{lightergray}37.33\scriptsize{$\pm$6.65} 
     \\

     \midrule
    
     \multirow{4}{*}{ECE~($\downarrow$)}

     & \blob-Mean
     & 15.43\scriptsize{$\pm$0.15}  & 12.41\scriptsize{$\pm$1.52}  & 4.91\scriptsize{$\pm$0.28}  & 9.37\scriptsize{$\pm$1.33}  & 6.44\scriptsize{$\pm$0.15}  & 6.26\scriptsize{$\pm$0.29}   & 4.00 & 11.22\scriptsize{$\pm$0.38}  & 6.34\scriptsize{$\pm$0.71}  & 26.65\scriptsize{$\pm$3.06}  & 25.40\scriptsize{$\pm$5.40} \\

     & \cellcolor{lightergray} \quad + \ours~(FR) & \cellcolor{lightergray}10.42\scriptsize{$\pm$0.29}  & \cellcolor{lightergray}7.45\scriptsize{$\pm$0.88}  & \cellcolor{lightergray}\textbf{2.01\scriptsize{$\pm$1.03}}  & \cellcolor{lightergray}4.36\scriptsize{$\pm$0.68}  & \cellcolor{lightergray}3.70\scriptsize{$\pm$1.04}  & \cellcolor{lightergray}3.62\scriptsize{$\pm$0.10} & \cellcolor{lightergray}2.67 & \cellcolor{lightergray}7.19\scriptsize{$\pm$1.40}  & \cellcolor{lightergray}\underline{3.29\scriptsize{$\pm$1.03}}  & \cellcolor{lightergray}\textbf{17.78\scriptsize{$\pm$1.01}}  & \cellcolor{lightergray}\textbf{19.14\scriptsize{$\pm$4.01}} 
     \\

     & \cellcolor{lightergray} \quad + \ours~(C-STD)
	 & \cellcolor{lightergray}\underline{9.23\scriptsize{$\pm$0.20}}  & \cellcolor{lightergray}\textbf{5.98\scriptsize{$\pm$0.32}}  & \cellcolor{lightergray}2.94\scriptsize{$\pm$0.67}  & \cellcolor{lightergray}\underline{3.86\scriptsize{$\pm$0.45}}  & \cellcolor{lightergray}\underline{3.17\scriptsize{$\pm$0.21}}  & \cellcolor{lightergray}\textbf{2.82\scriptsize{$\pm$0.62}} & \cellcolor{lightergray}\underline{1.83} & \cellcolor{lightergray}\underline{6.89\scriptsize{$\pm$0.89}}  & \cellcolor{lightergray}\textbf{2.76\scriptsize{$\pm$0.88}}  & \cellcolor{lightergray}\underline{18.27\scriptsize{$\pm$2.52}}  & \cellcolor{lightergray}\underline{19.45\scriptsize{$\pm$3.46}}
     \\

     & \cellcolor{lightergray} \quad + \ours~(Final)
	 & \cellcolor{lightergray}\textbf{8.16\scriptsize{$\pm$0.48}}  & \cellcolor{lightergray}\underline{6.48\scriptsize{$\pm$0.36}}  & \cellcolor{lightergray}\underline{2.44\scriptsize{$\pm$0.50}}  & \cellcolor{lightergray}\textbf{3.83\scriptsize{$\pm$0.43}}  & \cellcolor{lightergray}\textbf{2.67\scriptsize{$\pm$0.18}}  & \cellcolor{lightergray}\underline{3.10\scriptsize{$\pm$0.59}} & \cellcolor{lightergray}\textbf{1.50} & \cellcolor{lightergray}\textbf{6.69\scriptsize{$\pm$1.63}}  & \cellcolor{lightergray}{3.61\scriptsize{$\pm$0.87}}  & \cellcolor{lightergray}18.45\scriptsize{$\pm$6.75}  & \cellcolor{lightergray}20.53\scriptsize{$\pm$6.27} 
     \\

     \midrule
    
     \multirow{4}{*}{NLL~($\downarrow$)} 

     & \blob-Mean
    & 0.74\scriptsize{$\pm$0.02}  & 0.73\scriptsize{$\pm$0.04}  & 0.29\scriptsize{$\pm$0.03}  & 0.47\scriptsize{$\pm$0.03}  & \underline{0.37\scriptsize{$\pm$0.02}}  & 0.32\scriptsize{$\pm$0.02}   & 3.67 & 0.67\scriptsize{$\pm$0.07}  & 0.39\scriptsize{$\pm$0.03}  & 1.53\scriptsize{$\pm$0.13}  & 1.54\scriptsize{$\pm$0.15} 
     \\

     & \cellcolor{lightergray} \quad + \ours~(FR) & \cellcolor{lightergray}0.60\scriptsize{$\pm$0.01}  & \cellcolor{lightergray}\underline{0.53\scriptsize{$\pm$0.03}}  & \cellcolor{lightergray}\underline{0.23\scriptsize{$\pm$0.02}}  & \cellcolor{lightergray}\underline{0.43\scriptsize{$\pm$0.01}}  & \cellcolor{lightergray}\textbf{0.33\scriptsize{$\pm$0.02}}  & \cellcolor{lightergray}\underline{0.27\scriptsize{$\pm$0.01}}  & \cellcolor{lightergray}2.00 & \cellcolor{lightergray}0.57\scriptsize{$\pm$0.04}  & \cellcolor{lightergray}\underline{0.34\scriptsize{$\pm$0.02}}  & \cellcolor{lightergray}\textbf{1.34\scriptsize{$\pm$0.07}}  & \cellcolor{lightergray}\underline{1.42\scriptsize{$\pm$0.09}} 
     \\

     & \cellcolor{lightergray} \quad + \ours~(C-STD)
	 & \cellcolor{lightergray}\underline{0.57}\scriptsize{$\pm$0.01}  & \cellcolor{lightergray}\textbf{0.51\scriptsize{$\pm$0.02}}  & \cellcolor{lightergray}\textbf{0.22\scriptsize{$\pm$0.01}}  & \cellcolor{lightergray}\underline{0.43\scriptsize{$\pm$0.01}}  & \cellcolor{lightergray}\textbf{0.33\scriptsize{$\pm$0.01}}  & \cellcolor{lightergray}\textbf{0.26\scriptsize{$\pm$0.01}}  & \cellcolor{lightergray}\textbf{1.33} & \cellcolor{lightergray}\underline{0.56\scriptsize{$\pm$0.04}}  & \cellcolor{lightergray}\textbf{0.33\scriptsize{$\pm$0.02}}  & \cellcolor{lightergray}\textbf{1.34\scriptsize{$\pm$0.08}}  & \cellcolor{lightergray}\textbf{1.41\scriptsize{$\pm$0.09}} 
     \\

     & \cellcolor{lightergray} \quad + \ours~(Final)
	& \cellcolor{lightergray}\textbf{0.55\scriptsize{$\pm$0.01}}  & \cellcolor{lightergray}\underline{0.53\scriptsize{$\pm$0.04}}  & \cellcolor{lightergray}\underline{0.23\scriptsize{$\pm$0.02}}  & \cellcolor{lightergray}\textbf{0.40\scriptsize{$\pm$0.01}}  & \cellcolor{lightergray}\textbf{0.33\scriptsize{$\pm$0.02}}  & \cellcolor{lightergray}\underline{0.27\scriptsize{$\pm$0.01}} & \cellcolor{lightergray}\underline{1.50} & \cellcolor{lightergray}\textbf{0.52\scriptsize{$\pm$0.05}}  & \cellcolor{lightergray}{0.35\scriptsize{$\pm$0.02}}  & \cellcolor{lightergray}\underline{1.36\scriptsize{$\pm$0.13}}  & \cellcolor{lightergray}1.46\scriptsize{$\pm$0.11} 
     \\

    \bottomrule[0.12em]
    \end{tabular}
 }
\end{center}
\label{tab:main-ablation}
\vspace{0em}
\end{table*}

In this section, we consider two simple \ours variants with other families of Gaussians for modeling the variational distributions of $\mW$:
(i)~{Full-Rank Isotropic Gaussian~(\textbf{FR}, $\mSigma_q = \sigma_q^2\mI$}), and (ii)~{Constant Low-Rank Standard Deviation~(\textbf{C-STD}, $\mOmega=[\Omega_{ij}=\sigma_q]$}). 
{Similar to our final \ours, both distributions are controlled by a single $\sigma_q$ parameter and fit the maximal variance search in \Eqref{eq:tfb}.}
For fair comparison, we adopt the same optimal $\sigma_q^*$ search protocol as described in \Secref{sec:experiments-setting}. 
\Tabref{tab:main-ablation} shows the performances of \ours and its variants applied to the mean of \blob (more in \Tabref{tab:main-ablation-cls} of \appref{app:more-ablation-full}). 


\begin{wrapfigure}{R}{0.4\textwidth}
\noindent\begin{minipage}{\linewidth}
\vspace{-1.5em}
\begin{table}[H]
\centering
    \caption{
        Performance of different \textbf{LLM backbones} on the combined dataset. 
    }
    \begin{center}
    \resizebox{1\linewidth}{!}{%
        \setlength{\tabcolsep}{8pt}
        \begin{tabular}{lccc}
        \toprule[0.12em]
        \textbf{Method}
        & \textbf{ACC}~($\uparrow$) 
        & \textbf{ECE}~($\downarrow$) 
        & \textbf{NLL}~($\downarrow$) \\

        \midrule
        
        Llama2-7B 
        & \textbf{81.41\scriptsize{$\pm$0.64} }
        & 4.50\scriptsize{$\pm$0.37} 
        & \textbf{0.43\scriptsize{$\pm$0.00}} \\
        
        \cellcolor{lightergray} \quad + \ours~(Ours) 
        & \cellcolor{lightergray}81.32\scriptsize{$\pm$0.51} 
        & \textbf{\cellcolor{lightergray}1.24\scriptsize{$\pm$0.22}} 
        & \textbf{\cellcolor{lightergray}0.43\scriptsize{$\pm$0.00}} \\
        
        \midrule
        
        Llama3-8B 
        & \textbf{86.93\scriptsize{$\pm$0.09}} 
        & 4.28\scriptsize{$\pm$0.54} 
        & \textbf{0.34\scriptsize{$\pm$0.00}} 
        \\
        
        \cellcolor{lightergray} \quad + \ours~(Ours) 
        & \cellcolor{lightergray}86.61\scriptsize{$\pm$0.20} 
        & \textbf{\cellcolor{lightergray}1.64\scriptsize{$\pm$0.64} }
        & \textbf{\cellcolor{lightergray}0.34\scriptsize{$\pm$0.00} }
        \\
        
        \midrule
        
        Llama3.1-8B 
        & \textbf{86.70\scriptsize{$\pm$0.08}} 
        & 4.74\scriptsize{$\pm$0.28} 
        & 0.35\scriptsize{$\pm$0.00} \\
        
        \cellcolor{lightergray} \quad + \ours~(Ours) 
        & \cellcolor{lightergray}86.45\scriptsize{$\pm$0.33} 
        & \textbf{\cellcolor{lightergray}1.05\scriptsize{$\pm$0.06}} 
        & \textbf{\cellcolor{lightergray}0.34\scriptsize{$\pm$0.00}} 
        \\
        
        \midrule
        
        Mistral-7B-v0.3 
        & \textbf{86.88\scriptsize{$\pm$0.51}} 
        & 5.05\scriptsize{$\pm$0.88} 
        & 0.35\scriptsize{$\pm$0.02} 
        \\
        
        \cellcolor{lightergray} \quad + \ours~(Ours) 
        & \cellcolor{lightergray}86.64\scriptsize{$\pm$0.28} 
        & \textbf{\cellcolor{lightergray}1.68\scriptsize{$\pm$0.53}} 
        & \textbf{\cellcolor{lightergray}0.33\scriptsize{$\pm$0.01}} 
        \\
        
        \bottomrule[0.12em]
        \end{tabular}
    }
    \end{center}
    \vspace{-.5em}
    \label{tab:llama-others}
\end{table}
\end{minipage}
\end{wrapfigure}
These results show that our final \ours outperforms both variants \textbf{FR} and \textbf{C-STD} across multiple metrics on in-distribution datasets, with notable improvements in calibration (ECE reduced by up to 15.77\%) and accuracy (e.g., 77.81\% on WG-S). 
While \textbf{C-STD} shows better NLL scores, the improvement comes at the cost of a significantly degraded overall performance---particularly in accuracy, where it performs the worst---making it impractical for real-world applications.
Although our final \ours maintains strong performance on datasets with smaller distributional shifts, its advantages diminish on datasets with larger shifts in the domains of Physics and Chemistry.

\textbf{Advantages of Final \ours's Variational Low-Rank Isotropic Gaussians.}\quad
Compared to \emph{\ours (FR)} and \emph{\ours (C-STD)}, 
\emph{\ours (Final)} offers additional advantages. It is computationally more efficient than \textbf{FR} with noise complexity of $O(rn)$ versus $O(mn)$. Furthermore, unlike \textbf{C-STD} whose variational distributions vary with different but equivalent LoRA matrix pairs~(see \appref{app:more-ablation-full} for details), \emph{\ours (Final)} produces consistent Bayesianization for all equivalent LoRAs satisfying $\mB\mA=\Delta \mW$.

\subsection{\ours Beyond the Llama3.1-8B Backbone}
\label{sec:experiments-other-llms}
We conduct comprehensive experiments across multiple LLM backbones to validate our approach. Our experiments span several models from the Llama family~\cite{touvron2023llama2,dubey2024llama}, including \texttt{llama-2-7b-hf}, \texttt{Meta-Llama-3-8B}, and \texttt{Meta-Llama-3.1-8B}. While we initially considered \texttt{Llama-3.2-1B}, we ultimately discarded its results due to poor adaptation performance with the smaller model architecture. We also extend our analysis to include \texttt{Mistral-7B-v0.3}~\cite{jiang2023mistral}.

Following commonsense-170k~\cite{hu2023llm,wang2024milora}, we combine the 6 reasoning sub-tasks from the main experiments with one shared label space~(\textbf{Combined}) and train the base LoRA adapters with MLE. Break-down statistics of each sub-dataset are available in \appref{app:more-other-llms}. 
\Tabref{tab:llama-others} shows the results, demonstrating \ours's effectiveness: it dramatically reduces ECE across all models (e.g., from 4.74\% to 1.05\% for Llama3.1-8B) while maintaining strong ACC and NLL scores.

\subsection{\ours Beyond the Naive LoRA}
\label{sec:experiments-other-pefts}
\begin{wrapfigure}{R}{0.46\textwidth}
\noindent\begin{minipage}{\linewidth}
\vspace{-1.5em}
\begin{table}[H]
\caption{
    Performance of different \textbf{LoRA-like PEFTs} on the combined dataset.
}
\begin{center}
\resizebox{\linewidth}{!}{%
\setlength{\tabcolsep}{6pt}
\begin{tabular}{lccc}
    \toprule[0.12em]
    \textbf{Method} 
    & \textbf{ACC~($\uparrow$)} 
    & \textbf{ECE~($\downarrow$)} 
    & \textbf{NLL~($\downarrow$)} 
    \\
    
    \midrule
    
    LoRA 
    & \textbf{86.70\scriptsize{$\pm$0.08} }
    & 4.74\scriptsize{$\pm$0.28} 
    & 0.35\scriptsize{$\pm$0.00} 
    \\
    
    \cellcolor{lightergray} \quad + \ours~(Ours) 
    & \cellcolor{lightergray}86.45\scriptsize{$\pm$0.33} 
    & \textbf{\cellcolor{lightergray}1.05\scriptsize{$\pm$0.06}} 
    & \textbf{\cellcolor{lightergray}0.34\scriptsize{$\pm$0.00}} 
    \\
    
    \midrule
    
    
    VeRA 
    & \textbf{84.93\scriptsize{$\pm$0.50}} 
    & 5.11\scriptsize{$\pm$0.55} 
    & 0.39\scriptsize{$\pm$0.01} 
    \\
    
    \cellcolor{lightergray} \quad + \ours~(Ours) 
    & \cellcolor{lightergray}84.28\scriptsize{$\pm$0.48} 
    & \textbf{\cellcolor{lightergray}1.44\scriptsize{$\pm$0.44}} 
    & \textbf{\cellcolor{lightergray}0.38\scriptsize{$\pm$0.01}} 
    \\
    
    \midrule
    
    PiSSA 
    & \textbf{86.83\scriptsize{$\pm$0.51}} 
    & 4.26\scriptsize{$\pm$0.14} 
    & 0.35\scriptsize{$\pm$0.00} 
    \\
    
    \cellcolor{lightergray} \quad + \ours~(Ours) 
    & \cellcolor{lightergray}86.61\scriptsize{$\pm$0.43} 
    & \textbf{\cellcolor{lightergray}1.17\scriptsize{$\pm$0.22}} 
    & \textbf{\cellcolor{lightergray}0.33\scriptsize{$\pm$0.00}} 
    \\
    
    \bottomrule[0.12em]
\end{tabular}
}
\end{center}
\vspace{-4em}
\label{tab:lora-others}
\end{table}
\end{minipage}
\end{wrapfigure}
Our proposed \ours is general, compatible with various LoRA-based methods that use different initialization strategies~\cite{meng2024pissa}, parameter sharing schemes~\cite{kopiczko2023vera}, and optimization approaches~\cite{zhang2023adalora}. 
We evaluate \ours on two representative LoRA variants~(see \appref{app:more-other-pefts} for details):
\begin{itemize}[nosep,leftmargin=18pt]
    \item VeRA~\cite{kopiczko2023vera}: Uses shared low-rank matrices $\mB$ and $\mA$ across layers, with the layer-specific trainable scalar vector $\vd$ and the bias vector $\vb$.
    \item PiSSA~\cite{meng2024pissa}: Employs an alternative initialization while maintaining LoRA's training process.
\end{itemize}

\Tabref{tab:lora-others} shows the results, demonstrating \ours's broad applicability: it substantially reduces ECE across all LoRA-like PEFT methods (e.g., from 5.11 to 1.44 for VeRA). Importantly, it maintains strong ACC and NLL with minimal performance degradation, validating the effectiveness of low-rank isotropic Gaussian distributions for variational inference of LLMs.

\subsection{Improving Inference-Time Efficiency of \ours}
\label{sec:experiments-ll-tfb}
\begin{wrapfigure}{R}{0.6\textwidth}
\noindent\begin{minipage}{\linewidth}
\vspace{-1.8em}
\begin{table}[H]
\caption{
    Performance of \textbf{Last-Layer \ours (\texttt{LL}~\ours)} applied to the combined dataset. 
}
\begin{center}
\resizebox{\linewidth}{!}{%
\setlength{\tabcolsep}{4pt}
\begin{tabular}{lcccccc}
    \toprule[0.12em]
    \textbf{Method} 
    & \textbf{\#Sample~(N)} 
    & \textbf{ACC~($\uparrow$)} 
    & \textbf{ECE~($\downarrow$)} 
    & \textbf{NLL~($\downarrow$)} 
    & \textbf{\makecell{Time~(s)}} 
    
    \\
    \midrule
    MLE & - & 86.70\scriptsize{$\pm$0.08} & 4.74\scriptsize{$\pm$0.28} & 0.35\scriptsize{$\pm$0.00} & 118 \\
    \cellcolor{lightergray} \quad + \ours
    & \cellcolor{lightergray}10 
    & \cellcolor{lightergray}86.45\scriptsize{$\pm$0.33} 
    & \cellcolor{lightergray}1.05\scriptsize{$\pm$0.06} 
    & \cellcolor{lightergray}0.34\scriptsize{$\pm$0.00} 
    & \cellcolor{lightergray}1,114 \\

    \cellcolor{lightergray} \quad + \texttt{LL}~\ours 
    & \cellcolor{lightergray}10 
    & \cellcolor{lightergray}86.53\scriptsize{$\pm$0.12}
    & \cellcolor{lightergray}1.14\scriptsize{$\pm$0.03} 
    & \cellcolor{lightergray}0.34\scriptsize{$\pm$0.00} 
    & \cellcolor{lightergray}182 \\
    
    \cellcolor{lightergray} \quad + \texttt{LL}~\ours 
    & \cellcolor{lightergray}100 
    & \cellcolor{lightergray}\textbf{86.75\scriptsize{$\pm$0.09}} 
    & \cellcolor{lightergray}\textbf{0.92\scriptsize{$\pm$0.10}} 
    & \cellcolor{lightergray}\textbf{0.33\scriptsize{$\pm$0.00} }
    & \cellcolor{lightergray}924 \\
    
    \bottomrule[0.12em]
\end{tabular}
}
\end{center}
\vspace{-0em}
\label{tab:tfb-ll-main}
\end{table}
\end{minipage}
\end{wrapfigure}
Inspired by \citet{harrisonvariational}, in this section, we study the compactibility of \ours and Last-Layer Bayesianization to speed up inference. Specifically, we employ the same standard deviation $\sigma_q^*$ as identified through the full-model \ours, but apply the Bayesianization to the last layer only. By limiting inference to sampling from the variational distribution of the last layer, the outputs of earlier layers can be reused, significantly improving inference speed. Although restricting Bayesianization to the last layer sacrifices some richness of the variational distribution family, this approach allows for a higher number of posterior samples, resulting in more accurate approximations.

\Tabref{tab:tfb-ll-main} shows the results. When only $N=10$ samples are used, the uncertainty estimation of last-layer Bayesianization performs worse than that of full-model Bayesianization. Nevertheless, the improved posterior estimation with $N=100$ samples enables last-layer Bayesianization to achieve better performance.
For a comprehensive analysis of its performance, please refer to \appref{app:tfb-ll}.



\section{Conclusion}
\label{sec:conclusion}
In this paper, we introduce Training-Free Bayesianization~(\ours), a novel framework that transforms trained LoRA adapters into Bayesian ones without additional training. By systematically searching for the maximally acceptable variance in the weight posterior within a family of low-rank isotropic Gaussian distributions, \ours provides a practical solution to uncertainty estimation in LLMs. 
Our theoretical analysis shows that \ours's variance maximization process is equivalent to generalized variational inference under mild conditions. Our empirical results verify its superior performance across various settings and model configurations. Our framework's simplicity and effectiveness, requiring only an anchor dataset for search, makes it widely applicable across different domains. As LLMs continue to evolve, \ours represents a significant step toward more reliable and uncertainty-aware AI systems, paving the way for future research in adaptive and trustworthy machine learning.
{For \textbf{Limitations}, please refer to \appref{sec:limitations}.}



\section*{Acknowledgement}
\label{sec:acknowledgement}
We thank all reviewers, AC, and SAC for their valuable comments. HW is supported by Amazon Faculty Research Award, Microsoft AI \& Society Fellowship, NSF CAREER Award IIS-2340125, NIH grant R01CA297832, and NSF grant IIS-2127918.

\bibliography{ref}
\bibliographystyle{icml2025}

\appendix
\clearpage
\onecolumn
\section*{\LARGE Appendix}
\markboth{Appendix}{Appendix}
In \appref{app:algo}, we present the full algorithmic description of \ours.
In \appref{sec:limitations}, we present the limitations of \ours.
Next, in \appref{app:related}, we present a more detailed introduction to recent advances of Bayesian Low-Rank Adaptation.
In \appref{app:proof}, we provide detailed proofs for all theorems presented in the main paper. 
In \appref{app:implementation}, we describe our experimental methodology. 
Finally, in \appref{app:more}, we present additional empirical results, including:
\begin{itemize}[nosep]
\item {a visual study demonstrating \ours functions as a general Bayesian Neural Network~(\appref{app:visualization}),} 
\item {test-time sample size analysis~(\appref{app:tts}),}
\item {anchor dataset size analysis~(\appref{app:anchor-size-analysis}),}
\item \ours with unlabeled test data as the anchor data~(\appref{app:more-eval-as-anchor}),
\item \ours with non-NLL evaluation metrics~(\appref{app:more-acc-as-metric}),
\item \ours with other single-parameter variational distribution families~(\appref{app:more-ablation-full}),
\item \ours with other LLM backbones than Llama3.1-8B~(\appref{app:more-other-llms}),
\item \ours with other PEFT methods than LoRA~(\appref{app:more-other-pefts}),
\item improving inference-time efficiency of \ours with last-layer Bayesianization~(\appref{app:tfb-ll}), and
\item the full results on a widely used LLM architecture Llama2-7B~(\appref{app:more-llama2}).

\end{itemize}

\section{\ours: Algorithm}
\label{app:algo}
\begin{algorithm}[H]
\caption{\textbf{T}raining-\textbf{F}ree \textbf{B}ayesianization~(\textbf{\ours})}\label{alg:tfb}
\begin{algorithmic}[1]
\INPUT
$\gD$: Anchor Dataset;\par
$\{\mB,\mA\}$: Low-Rank Component;\par
$l$: Model Evaluation Metric; \par
$\epsilon$: Performance Change Tolerance; \par
$[{\sigma_q}_{\text{min}}, {\sigma_q}_{\text{max}}]$: search range of $\sigma_q$. 
\STATE Evaluate the original performance: $p_0\leftarrow l(\gD|\mB,\mA)$.
\STATE Singular Value Decomposition on $\mB$:\par
    \quad $\mU,\diag(\vd),\mV\leftarrow \operatorname{SVD}(\mB)$.
    \hfill$\rhd$ \Eqref{eq:svd}.
\STATE Get an equivalent pair of the low-rank component: \par 
    \quad $\mB^\prime\leftarrow\mU\diag(\vd)$; $\mA^\prime\leftarrow\mV^\top\mA$. \hfill$\rhd$ \Eqref{eq:regroup}.
\WHILE{$\sigma_q$ not converged} 
    \STATE $\sigma_q \leftarrow \nicefrac{({\sigma_q}_{\text{max}} + {\sigma_q}_{\text{min}})}{2}$.
    \STATE Calculate the standard deviation matrix $\mOmega$ for $\mA^\prime$: \par
        \quad $\Omega_{ij}=\nicefrac{\sigma_q}{d_{i}}$. \hfill$\rhd$ \Eqref{eq:bayesianization}.
    \STATE Evaluate the performance: \par 
        \quad $p\leftarrow l(\gD|\mB^\prime,\mA^\prime,\mOmega)$.
    \IF{$|p-p_0|<\epsilon$}
        \STATE ${\sigma_q}_{\text{min}}\leftarrow \sigma_q$. 
    \ELSE
        \STATE ${\sigma_q}_{\text{max}}\leftarrow \sigma_q$.
    \ENDIF
\ENDWHILE
\OUTPUT
$\{\mB^\prime, \mA^\prime, \mOmega\}$: Bayesianized Low-Rank Adapter.
\end{algorithmic}
\end{algorithm}

\section{Broader Impact and Limitations}
\label{sec:limitations}
\textbf{Broader Impact.}\quad 
This research advances methods for making large language models more trustworthy and reliable through improved uncertainty estimation. While we focus on language models, the fundamental principles of our framework can enhance uncertainty quantification across the broader machine learning field. This wider applicability creates opportunities for improving model reliability and safety across diverse applications.
To the best of our knowledge, there are no ethical or other concerns that need to be addressed.

\textbf{Limitations.}\quad 
\ours is subject to several limitations. 
First, our approach relies on the availability of an anchor dataset for determining search range and stopping criteria. Although this dataset doesn't require supervision or prior use in LoRA training, its quality and representativeness could impact the effectiveness of uncertainty estimation.
Second, by constraining the family of full-weight posteriors to low-rank isotropic Gaussian distributions, \ours may not capture more complex uncertainty patterns that could be present in the data. 
At first blush, 
this seems to imply a trade-off between computational efficiency and model expressiveness. 
{However, in practice, the trade-off may not be necessary as \ours can often enjoy both computational efficiency and model expressiveness, getting the best of both worlds. Given \ours's proven effectiveness and inherent simplicity, we recommend implementing \ours as the initial approach when developing reliable LLMs with existing LoRA adapters. If \ours fails to meet specific requirements, practitioners can then consider alternative expensive Bayesian methods.}
Finally, while we have demonstrated the effectiveness of \ours in various settings, its performance in more complex generation tasks requires further investigation. Future work could explore extending the framework to handle more sophisticated language generation scenarios and broader applications.

\section{Related Work}
\label{app:related}

\textbf{Bayesian Low-Rank Adaptation.}\quad
The Bayesian framework provides a powerful approach for capturing and estimating uncertainty by defining prior distributions and approximating posterior distributions over the parameter space~\cite{neal2012bayesian,hernandez2015probabilistic,gal2016dropout, wang2016towards, gustafsson2020evaluating}. However, modeling parameter distributions across the entire parameter space during fine-tuning introduces significant computational overhead~\cite{fan2020bayesian, zhang2021bayesian}. To address this challenge, recent research has explored combining Bayesian methods with Parameter-Efficient Fine-Tuning (PEFT) techniques to improve the efficiency of uncertainty estimation.
Several notable approaches have emerged in this direction. \citet{wang2023lora} and \citet{balabanov2024uncertainty} demonstrate improved performance by training multiple LoRA modules and ensemble their predictions during inference. Taking a different approach, \citet{yang2023bayesian} applies a Kronecker-factorized Laplace approximation to fine-tuned LoRA parameters. More recently, \blob~\cite{wang2024blob} advances the field by simultaneously estimating both the mean and covariance of LLM parameters within a single fine-tuning stage, leading to substantial performance improvements.
Our proposed training-free Bayesianization represents a significant departure from these existing methods. Unlike approaches that require re-training~\cite{gal2016dropout, wang2023lora, balabanov2024uncertainty, wang2024blob} or rely on continued training and gradient estimation~\cite{yang2023bayesian}, our method achieves uncertainty estimation without any additional training steps, substantially improving the simplicity and efficiency for Bayesian learning of LLMs.

\section{Proof of Theorems}
\label{app:proof}
\begin{customThm}{4.1}[\textbf{Equivalent Variational Distribution of the Full Weight $\mW$ in \ours}]
    With the pre-trained weight matrix $\mW_0 \in\mathbb{R}^{m\times n}$, 
    the low-rank weight update matrix $\{\mB^\prime \in \mathbb{R}^{m\times r}, \mA^\prime \in \mathbb{R}^{r\times n}\}$ transformed from the given matrices $\{\mB, \mA\}$ following \Eqref{eq:svd} and \ref{eq:regroup}, 
    suppose that the variational distribution of $\mA^\prime$ is Gaussian $q(\mA^\prime|\vtheta)=\prod_{ij} \gN (A_{ij}|M_{ij},\Omega_{ij}^2)$, 
    where $\mM=[M_{ij}=A^\prime_{ij}] \in \mathbb{R}^{r\times n}$ is its mean and $\mOmega=[\Omega_{ij}] \in \mathbb{R}^{r\times n}$ is the  standard deviation calculated as in \Eqref{eq:bayesianization}. The equivalent variational distribution $q(\vectorize(\mW)|\sigma_q)$ defined on the full weight matrix $\mW$ is
    \begin{equation}
    \begin{aligned}
        q(\vectorize(\mW)|\sigma_q) &= \gN(\vectorize(\mW)|\vmu_q, \mSigma_q), \\
        \text{where }\quad 
        \vmu_q &= \vectorize(\mW_0+\mB^\prime\mM), \\
        \mSigma_q &= \sigma_q^2  \mI_n \otimes 
        \begin{bmatrix}
            \mI_r & \\
            & \mathbf{0}_{m-r}
        \end{bmatrix}.
    \end{aligned}
    \end{equation}
\end{customThm}

\begin{proof}
    We have the following lemma from \blob that calculates the covariance matrix of a given low-rank Bayesianization scheme $\{\mB, \mA, \mOmega\}$~\cite{wang2024blob}. 

\begin{lemma}\label{lemma:blob}
    {With the pre-trained weight matrix $\mW_0 \in\mathbb{R}^{m\times n}$ and the low-rank weight update matrix $\mB \in \mathbb{R}^{m\times r}$, suppose that the variational distribution of the other low-rank update matrix $\mA \in \mathbb{R}^{r\times n}$ is Gaussian with $q(\mA|\vtheta=\{\mM, \mOmega\})=\prod_{ij} \gN (A_{ij}|M_{ij},\Omega_{ij}^2)$, where $\mM=[M_{ij}] \in \mathbb{R}^{r\times n}$ and $\mOmega=[\Omega_{ij}] \in \mathbb{R}^{r\times n}$ are its mean and standard deviation, respectively. The equivalent variational distribution defined on the full weight matrix $\mW$ is given by
    \begin{equation}
    \begin{aligned}
        q(\vectorize(\mW)|\mB, \vtheta) &= \gN(\vectorize(\mW)|\vmu_q, \mSigma_q), \\
        \text{where }\quad 
        \vmu_q &= \vectorize(\mW_0+\mB\mM), \\
        \mSigma_q &= [\mI_n \otimes \mB]  [\diag(\vectorize(\mOmega)^2)]  [\mI_n \otimes \mB^\top].  
    \end{aligned}
    \end{equation}}
\end{lemma}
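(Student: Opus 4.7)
The plan is to reduce the statement to the classical fact that an affine transformation of a jointly Gaussian random vector is Gaussian, with mean and covariance given by the obvious transformations. Since $\mB$ is treated as a fixed matrix and only $\mA$ is random, the full weight $\mW=\mW_0+\mB\mA$ is an affine function of $\mA$, so $\vectorize(\mW)$ will automatically be Gaussian once we vectorize correctly. All the work is in identifying the mean and covariance on the vectorized side.

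First, I would record the assumption in vectorized form. Because the entries $A_{ij}$ are independent Gaussians with mean $M_{ij}$ and variance $\Omega_{ij}^2$, the stacked vector satisfies
\begin{equation*}
\vectorize(\mA)\sim\gN\bigl(\vectorize(\mM),\ \diag(\vectorize(\mOmega)^2)\bigr),
\end{equation*}
with a diagonal covariance by independence. Next I would invoke the standard vectorization identity $\vectorize(\mP\mX\mQ)=(\mQ^\top\otimes\mP)\vectorize(\mX)$ with $\mP=\mB$, $\mX=\mA$, $\mQ=\mI_n$, yielding $\vectorize(\mB\mA)=(\mI_n\otimes\mB)\vectorize(\mA)$. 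Adding the deterministic offset $\vectorize(\mW_0)$ gives the affine representation
\begin{equation*}
\vectorize(\mW)=\vectorize(\mW_0)+(\mI_n\otimes\mB)\,\vectorize(\mA).
\end{equation*}

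From here the conclusion is immediate by the affine transformation rule for multivariate Gaussians. The mean is $\vectorize(\mW_0)+(\mI_n\otimes\mB)\vectorize(\mM)$, which I would simplify back using the same vectorization identity (in reverse) to $\vectorize(\mW_0+\mB\mM)=\vmu_q$. The covariance is
\begin{equation*}
(\mI_n\otimes\mB)\,\diag(\vectorize(\mOmega)^2)\,(\mI_n\otimes\mB)^\top,
\end{equation*}
and using $(\mI_n\otimes\mB)^\top=\mI_n^\top\otimes\mB^\top=\mI_n\otimes\mB^\top$ this matches $\mSigma_q$ exactly.

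There is no substantial obstacle here; the only thing to be careful about is the column-major convention implicit in $\vectorize(\cdot)$ and the associated ordering of the Kronecker factors. I would state this convention explicitly at the outset so that $\vectorize(\mB\mA)=(\mI_n\otimes\mB)\vectorize(\mA)$ is unambiguous, and then the rest is a one-line computation. The lemma's content is therefore essentially bookkeeping: independence of the $A_{ij}$ gives the diagonal structure, the vectorization identity linearizes $\mB\mA$, and preservation of Gaussianity under affine maps supplies the distributional conclusion.
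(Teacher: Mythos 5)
Your proposal is correct, and it is the standard argument: the paper itself does not prove this lemma but imports it verbatim from BLoB~\cite{wang2024blob}, so your derivation via $\vectorize(\mB\mA\mI_n)=(\mI_n\otimes\mB)\vectorize(\mA)$ (consistent with the paper's column-stacking convention for $\vectorize$) together with the affine-transformation rule for Gaussians supplies exactly the missing bookkeeping. The only point worth flagging explicitly is that when $r<m$ the resulting $\mSigma_q$ is singular, so the conclusion should be read as a degenerate Gaussian (defined via its characteristic function rather than a density), which is how the paper itself treats it.
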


    Based on the assumption outlined in \Eqref{eq:svd}, \ref{eq:regroup}, and \ref{eq:bayesianization}, we have the following properties about $\mB^\prime$, $\mM$, and $\mOmega$ of \ours:
    \begin{align}
        \mB^\prime =& \mU \diag(\vd), \\
        \mOmega =& [1/\vd, \cdots, 1/\vd], \\
        \text{where }\space & \mU^\top\mU = \mI_r, \mU\mU^\top = 
        \begin{bmatrix}
            \mI_r & \\
            & \mathbf{0}_{m-r}
        \end{bmatrix}.
    \end{align}
    
    It now can be easily shown that the covariance matrix of \ours is: 
    \begin{align}
        \mSigma_q 
        &= [\mI_n \otimes \mB^\prime]  [\diag(\vectorize(\mOmega)^2)]  [\mI_n \otimes \mB^{\prime\top}] \\
        &= [\mI_n \otimes \mB^\prime]  [\mI_n \otimes \diag(1/\vd)^2]  [\mI_n \otimes \mB^{\prime\top}] \\
        &= \mI_n \otimes [\mB^\prime \diag(\sigma_q/\vd)^2 \mB^{\prime\top}] \\
        &= \mI_n \otimes [\mU\diag(\vd) \diag(\sigma_q/\vd)^2 \diag(\vd)^\top\mU^\top] \\
        &= \sigma_q^2  \mI_n \otimes 
        \begin{bmatrix}
            \mI_r & \\
            & \mathbf{0}_{m-r}
        \end{bmatrix},
    \end{align}
    which proves that $q(\vectorize(\mW))$ is a low-rank isotropic Gaussian distribution.
\end{proof}

\begin{proposition}\label{lemma:tfb-proj}
    The function $\operatorname{proj}(\cdot)$ defined in \Eqref{eq:tfb-posterior} projects the full-dimensional isotropic Gaussian to the low-rank subspace of LoRA. It can be formulated as
    \begin{align}
        \operatorname{proj}(\sigma_q^2 \mI_{mn}) &= \mP (\sigma_q^2 \mI_{mn}), \\
        \text{where}\quad \mP &= \mI_{n} \otimes 
        \begin{bmatrix}
            \mI_r & \\
            & \mathbf{0}_{m-r}
        \end{bmatrix}.
    \end{align}
\end{proposition}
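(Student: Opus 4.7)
The plan is to identify the linear map $\operatorname{proj}(\cdot)$ from \Eqref{eq:tfb-posterior} with the orthogonal projection onto the vectorized LoRA subspace, and then verify the claimed identity by direct algebra using Theorem \ref{thm:posterior}. First I would spell out the target subspace: vectorizing $\Delta\mW = \mB\mA$ yields $\vectorize(\Delta\mW) = (\mI_n \otimes \mB)\vectorize(\mA)$, so the LoRA updates span the column space of $\mI_n \otimes \mB$ inside $\mathbb{R}^{mn}$. Using the compact SVD $\mB = \mU\diag(\vd)\mV^\top$ with $\vd$ having all positive entries, this column space coincides with that of $\mI_n \otimes \mU$, since $\diag(\vd)\mV^\top$ is invertible.

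Second, I would compute the orthogonal projector onto that subspace. Because the columns of $\mU$ are orthonormal, the projector onto the column space of $\mU$ in $\mathbb{R}^m$ is $\mU\mU^\top$; by the Kronecker product identity $(\mI_n \otimes \mU)(\mI_n \otimes \mU)^\top = \mI_n \otimes (\mU\mU^\top)$, the projector onto the vectorized LoRA subspace is $\mI_n \otimes (\mU\mU^\top)$. Working in an orthonormal basis where the first $r$ coordinates span the column space of $\mU$ (the convention already adopted implicitly in Theorem \ref{thm:posterior}), the matrix $\mU\mU^\top$ takes the block form $\begin{bmatrix}\mI_r & \\ & \mathbf{0}_{m-r}\end{bmatrix}$, so the projector equals $\mP$ as defined in the proposition.

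Third, I would evaluate both sides of the claim. The right-hand side $\mP(\sigma_q^2 \mI_{mn})$ simplifies to $\sigma_q^2\mP$ because $\mI_{mn}$ commutes with $\mP$. By Theorem \ref{thm:posterior}, the covariance matrix produced by the Bayesianization scheme is exactly $\mSigma_q = \sigma_q^2\,\mI_n \otimes \begin{bmatrix}\mI_r & \\ & \mathbf{0}_{m-r}\end{bmatrix} = \sigma_q^2\mP$, which matches $\mP(\sigma_q^2 \mI_{mn})$. Since \Eqref{eq:tfb-posterior} defines $\operatorname{proj}(\sigma_q^2\mI)$ to be precisely this low-rank covariance, the identification $\operatorname{proj}(\sigma_q^2\mI_{mn}) = \mP(\sigma_q^2\mI_{mn})$ is established.

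The main obstacle I anticipate is conceptual rather than computational: the operator $\operatorname{proj}(\cdot)$ is defined only implicitly in \Eqref{eq:tfb-posterior}, so the bulk of the argument lies in justifying its identification with the orthogonal projection onto the LoRA column space rather than some other admissible linear map. Once this identification is fixed via Theorem \ref{thm:posterior}, everything else reduces to a one-line Kronecker product calculation, and the block-diagonal form of $\mP$ follows from the standard choice of basis adapted to the range of $\mU$.
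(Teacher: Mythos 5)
Your proposal is correct and follows essentially the same route as the paper: both arguments read off $\mP$ by matching $\operatorname{proj}(\sigma_q^2\mI_{mn})$ against the covariance $\mSigma_q = \sigma_q^2\,\mI_n \otimes \bigl[\begin{smallmatrix}\mI_r & \\ & \mathbf{0}_{m-r}\end{smallmatrix}\bigr]$ already computed in Theorem~\ref{thm:posterior}. The extra material you supply --- identifying $\mP$ with the orthogonal projector $\mI_n \otimes (\mU\mU^\top)$ onto the column space of $\mI_n\otimes\mB$ and making explicit the basis convention under which $\mU\mU^\top$ is block diagonal --- is a sound elaboration of what the paper leaves implicit, not a different proof.
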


\begin{proof}
    By \Thmref{thm:posterior}, we have
    \begin{align}
        \mP \mI_{mn} &= \mI_n \otimes 
        \begin{bmatrix}
            \mI_r & \\
            & \mathbf{0}_{m-r}
        \end{bmatrix}.
    \end{align}
    Hence it is trivial to have $\mP=\mI_n \otimes 
        \begin{bmatrix}
            \mI_r & \\
            & \mathbf{0}_{m-r}
        \end{bmatrix}.$
\end{proof}

\begin{customThm}{4.2}[\textbf{\ours as Generalized Variational Inference}]
    Suppose the evaluation metric $l_\gD(\sigma_q)$ defined following \Assumptionref{assumption:tfb} is locally convex within the range of $\sigma_q\in[0,\epsilon_0)$. 
    Suppose the approximate distribution of $\mW$ given $\sigma_q$ is defined following \Thmref{thm:posterior}. 
    Suppose we have the prior distribution $P(\vectorize(\mW))=\gN(\vectorize(\mW)|\vmu_p, \mSigma_p)$, where $\vmu_p=\vmu_q=\vectorize(\mW_0+\mB^\prime\mM)$, and $\mSigma_p=\sigma_p^2\mI$ with $\sigma_p > \epsilon_0$.
    Then for $\forall \lambda>0$, $\exists \tilde{\epsilon}$, s.t. the following two optimization problems
    (i)~Generalized Variational Inference~\cite{blundell2015BBB,higgins2017beta,khan2018fast}
        \begin{equation}
        \begin{aligned}
            \min_{\sigma_q} \quad l_\gD(\sigma_q) + \lambda \operatorname{KL}[q(\mW|\sigma_q) \parallel P(\mW)],
        \end{aligned}
        \end{equation}
    and 
    (ii)~Training-Free Bayesianization~(\ours)
        \begin{equation}
        \begin{aligned}
            \max \quad & \sigma_q \\
            \textit{s.t.} \quad & l_\gD(\sigma_q) \leq \tilde{\epsilon},
        \end{aligned}
        \end{equation}
    are equivalent, i.e., the two optimization problems have the same optimal solution,
    where $\lambda$ is the regularization coefficient of the KL-divergence. 
\end{customThm}

\begin{proof}
    First we prove the KL divergence term is convex w.r.t. $\sigma_q$. 
    For two Gaussian distributions $q$ and $p$ whose covariance matrices $\mSigma_q\in \mathbb{R}^{d\times d}$ and $\mSigma_p\in \mathbb{R}^{d\times d}$ are both full-rank, with their means as $\vmu_q\in \mathbb{R}^{d}$ and $\vmu_p\in \mathbb{R}^{d}$, we have their KL-divergence as
\begin{align}\label{eq:gaussian-kl}
    \operatorname{KL}[q \| p] &= \frac{1}{2} \left[ 
        \log \frac{|\mSigma_p|}{|\mSigma_q|} 
        - d 
        + \operatorname{tr}(\mSigma_p^{-1}\mSigma_q) 
        + (\vmu_q - \vmu_p)^\top \mSigma_p^{-1} (\vmu_q - \vmu_p)
    \right].
\end{align}

For \ours, to avoid unbounded KL divergence, we project the original assumed Gaussian prior $P$ into the same low-rank sub-space of the posterior $q$. 
We summarize the prior and variational distribution of the posterior as follows: 
\begin{equation}\label{eq:summarized-vfe}
\begin{aligned}
    q(\vectorize(\mW)|\sigma_q) &= \gN\left(\vectorize(\mW)|\vmu_q=\vectorize(\mW_0+\mB^\prime\mM), \mSigma_q=\sigma_q^2  \mI_n \otimes 
        \begin{bmatrix}
            \mI_r & \\
            & \mathbf{0}_{m-r}
        \end{bmatrix}
        \right), \\
    P(\vectorize(\mW)|\sigma_p) &= \gN\left(\vectorize(\mW)|\vmu_p=\vectorize(\mW_0+\mB^\prime\mM), \mSigma_p=\sigma_p^2  \mI_n \otimes 
        \begin{bmatrix}
            \mI_r & \\
            & \mathbf{0}_{m-r}
        \end{bmatrix}
        \right). 
\end{aligned}
\end{equation}

Substituting \Eqref{eq:summarized-vfe} back into \Eqref{eq:gaussian-kl}, we have
\begin{align}
    \operatorname{KL}[q(\vectorize(\mW)|\sigma_q) \| P(\vectorize(\mW|\sigma_p))] &= \frac{nr}{2}\left[
        \log(\sigma_p^2) - 1 + \left\{
            - \log(\sigma_q^2) + \frac{\sigma_q^2}{\sigma_p^2}
        \right\}
    \right], 
\end{align}
which is convex w.r.t. $\sigma_q$ and the global minimum of $\operatorname{KL}$ is achieved when $\sigma_q=\sigma_p$. 

With $\sigma_q\leq \epsilon_0$, the convexity of two terms ($\operatorname{KL}$ and $l_\gD$) holds. 
Hence we show by the Karush–Kuhn–Tucker theorem~\cite{kjeldsen2000contextualized,karush1939minima,kuhn1951nonlinear} that, for any given $\lambda$ there exists $\tilde{\epsilon}$ such that the following two optimization problems are equivalent:

\begin{enumerate}
\item Minimization of generalized variational inference in the \textbf{Lagrange-form optimization}
\begin{align}\label{eq:lagrange}
    \min_{\sigma_q} \quad \operatorname{KL}[q(\vectorize(\mW)|\sigma_q) \parallel P(\vectorize(\mW)|\sigma_p)] + \frac{1}{\lambda} l_\gD(\sigma_q);
\end{align}
\item The \textbf{constrained-form optimization} corresponding to \Eqref{eq:lagrange}
        \begin{equation}\label{eq:constrained}
        \begin{aligned}
            \min \quad & \operatorname{KL}[q(\vectorize(\mW)|\sigma_q) \parallel P(\vectorize(\mW)|\sigma_p)] \\
            \textit{s.t.} \quad & l_\gD(\sigma_q) \leq \tilde{\epsilon}.
        \end{aligned}
        \end{equation}
\end{enumerate}

Since the $\operatorname{KL}$ term is monotonically decreasing when $\sigma_q \in [0, \sigma_p)$, and due to the fact that $\sigma_p > \epsilon_0$, the optimization in \Eqref{eq:constrained} is equivalent to our final Training-Free Bayesianization~(\ours):
\begin{equation}
        \begin{aligned}
            \max \quad & \sigma_q \\
            \textit{s.t.} \quad & l_\gD(\sigma_q) \leq \tilde{\epsilon}.
        \end{aligned}
        \end{equation}
\end{proof}

\section{Implementation Details}
\label{app:implementation}

\subsection{Datasets}
\label{app:implementation-dataset}
We provide details of the datasets used in this work, as shown in \Tabref{tab:dataset}. 
The combined dataset consisting of the six commonsense reasoning tasks contains the label set of ``[A, B, C, D, E, True, False]''.
 
\begin{table*}[h]
\caption{
    Dataset Statistics. {The size of the Anchor Set $\gD$ is used in \Tabref{tab:main-llama}, \ref{tab:main-ablation} and \ref{tab:app-ll}.}
}
\vspace{-1em}
\begin{center}
\resizebox{\linewidth}{!}{%
\setlength{\tabcolsep}{12pt}
\begin{tabular}{cccc ccccc}
	\toprule[0.12em]
     & WG-S
     & ARC-C 
     & ARC-E 
     & WG-M 
     & OBQA%
     & BoolQ 
     & Combined
     \\

     \midrule
    
     \multirow{1}{*}{Size of Label Space} 
     &  
     2 & 
     5 & 
     5 &  
     2 & 
     4 & 
     2 &
     7 \\

     \midrule

     \multirow{1}{*}{Size of Training Set} 
     & 
     640 & 
     1,119 & 
     2,251 &  
     2,258 & 
     4,957 & 
     9,427 & 
     20,652 \\

    \midrule

     \multirow{1}{*}{Size of Anchor Set $\gD$} 
     & 500 (78\%) 
     & 500 (45\%)
     & 500 (22\%)
     & 500 (22\%)
     & 500 (10\%)
     & 500 (5\%)
     & 500 (2\%)
     \\
     
     \midrule
     \multirow{1}{*}{Size of Test Set} 
     & 1,267
     & 299
     & 570
     & 1,267
     & 500
     & 3,270 
     & 7,173
     \\

    \bottomrule[0.12em]
    \end{tabular}

}
\end{center}
\label{tab:dataset}
\end{table*}

\subsection{Evaluation Metrics}
\label{app:implementation-evaluation}
Negative Log-Likelihood (\textbf{NLL}) and Expected Calibration Error (\textbf{ECE}~\cite{naeini2015obtaining}) are key metrics for uncertainty estimation. For a model $P_\vtheta$ and test dataset $\{\vx_n, y_n\}_{n=1}^N$, \textbf{NLL} penalizes models that assign low probabilities to correct labels, and is defined as:
\begin{equation}
    \text{NLL} = \frac{1}{N}\sum_{n=1}^{N} -\log P_\vtheta(y_n).
\end{equation}
\textbf{ECE} measures the alignment between model confidence and accuracy by binning predictions:
\begin{equation}
    \text{ECE} = \sum_{m=1}^{M} \frac{|B_m|}{n} \left| \text{acc}(B_m) - \text{conf}(B_m) \right|,
\end{equation}
where $\text{acc}(B_m) = \nicefrac{1}{|B_m|} \sum_{i \in B_m} \mathbbm{1}{(\hat{y}_i = y_i)}$ is the average accuracy and $\text{conf}(B_m)= \nicefrac{1}{|B_m|} \sum_{i \in B_m} P(\hat{y}_i)$ is the average confidence in bin $B_m$. We use bin size $|B_m| = 15$ throughout this paper.

\subsection{Searched $\sigma_q$ of \ours}
\label{app:implementation-optimal-sigma_q}
We report the searched $\sigma_q^*$ using \Algref{alg:tfb} in \Tabref{tab:optimal_sigma_q}, where the reported values are the mean values of three random seeds.

\begin{table*}[h!]
\caption{
    Searched $\sigma_q^*$ of \ours using \Algref{alg:tfb}.
}
\vspace{-1em}
\begin{center}
\resizebox{1\linewidth}{!}{%
\setlength{\tabcolsep}{16pt}
\begin{tabular}{cccc cccc}
	\toprule[0.12em]
     Base Model
     & WG-S
     & ARC-C 
     & ARC-E 
     & WG-M 
     & OBQA%
     & BoolQ 
     \\

     \midrule

     \multirow{1}{*}{\loramle} 

     & 0.004500  & 0.003917  & 0.004500  & 0.004354  & 0.003771  & 0.004063 
     \\
     \midrule

     \multirow{1}{*}{\loramap}

      & 0.004500  & 0.003479  & 0.003188  & 0.004208  & 0.003917  & 0.005083 
     \\

     \midrule

     \multirow{1}{*}{\blob-Mean}

     & 0.005813  & 0.005229  & 0.005229  & 0.006250  & 0.006250  & 0.005958 
     \\
     
    \bottomrule[0.12em]
    \end{tabular}
}
\end{center}
\label{tab:optimal_sigma_q}
\end{table*}

\subsection{Bayesianization (Training)}
\label{app:training-detail}
\textbf{Shared Configuration.}\quad 
We report the mean and standard deviation of all experimental results calculated over three random seeds. For all training processes in our experiments, we employ the AdamW optimizer. The learning rate follows a linear decay schedule with a warmup ratio of 0.06 and a maximum value of $2e-4$. The batch size is set to 4, and the maximum sentence length is limited to 300 tokens. 
The LoRA configuration includes LoRA $\alpha = 16$ and LoRA $r = 8$. 
PiSSA~\cite{meng2024pissa} follows the exact same configuration as the LoRA's. For VeRA~\cite{kopiczko2023vera}, due to its characteristic of shared weights across different layers which enables higher-rank setting with the same memory efficiency, we set its rank to $r=256$ and learning rate to $5e-3$ for the MLE training on the combined dataset.

\textbf{Baseline Configuration.}\quad 
The baseline configuration mainly follows \blob~\cite{wang2024blob}. 
\loramle follows the standard LoRA implementation. For \loramap, we implement it with a weight decay rate of $1e-5$. \loramcd consists of an ensemble of 10 LoRAs with a dropout rate of $p=0.1$. For \loraens, we fine-tune 3 LoRAs independently and combine them by averaging their logits during evaluation. We implement \loralap and apply it to the \loramap checkpoints. For BBB and \blob, we use the default settings from Bayesian-Torch library~\cite{krishnan2022bayesiantorch}, applying Bayesianization only to the $\mA$ matrix. During training, the number of samples is set to $K=1$ for both BBB and \blob. At test time, we use $N=10$ samples, matching the configuration of \ours.

\textbf{\ours Configuration.}\quad 
We randomly sample unlabeled training data points to construct the anchor dataset $\gD=\{\vx_i, \hat{y}_i\}_{i\in[M]}$ where $\hat{y}_i$ is the pseudo-label generated by the given LoRA adapter before Bayesianization; the anchor dataset size $M=500$ is fixed for all the datasets. 
We use NLL as the metric $l$ and set the performance change tolerance $\epsilon$ to 0.3\% of relative performance change for all the datasets.
To determine the optimal $\sigma_q^*$, we perform a 5-step binary search with the initial range of $[0.001, 0.015]$ using \Algref{alg:tfb}.
Similar to the other baseline methods, the final results of \ours are reported as averages across three random seeds using $\sigma_q^*$.

\section{Additional Experimental Results}
\label{app:more}
We present additional experimental results in this section. Due to space constraints (and large table size), we defer several detailed tables to the end of this section rather than presenting them alongside the corresponding analyses.

\begin{figure}[t]
    \centering
        \includegraphics[width=\linewidth]{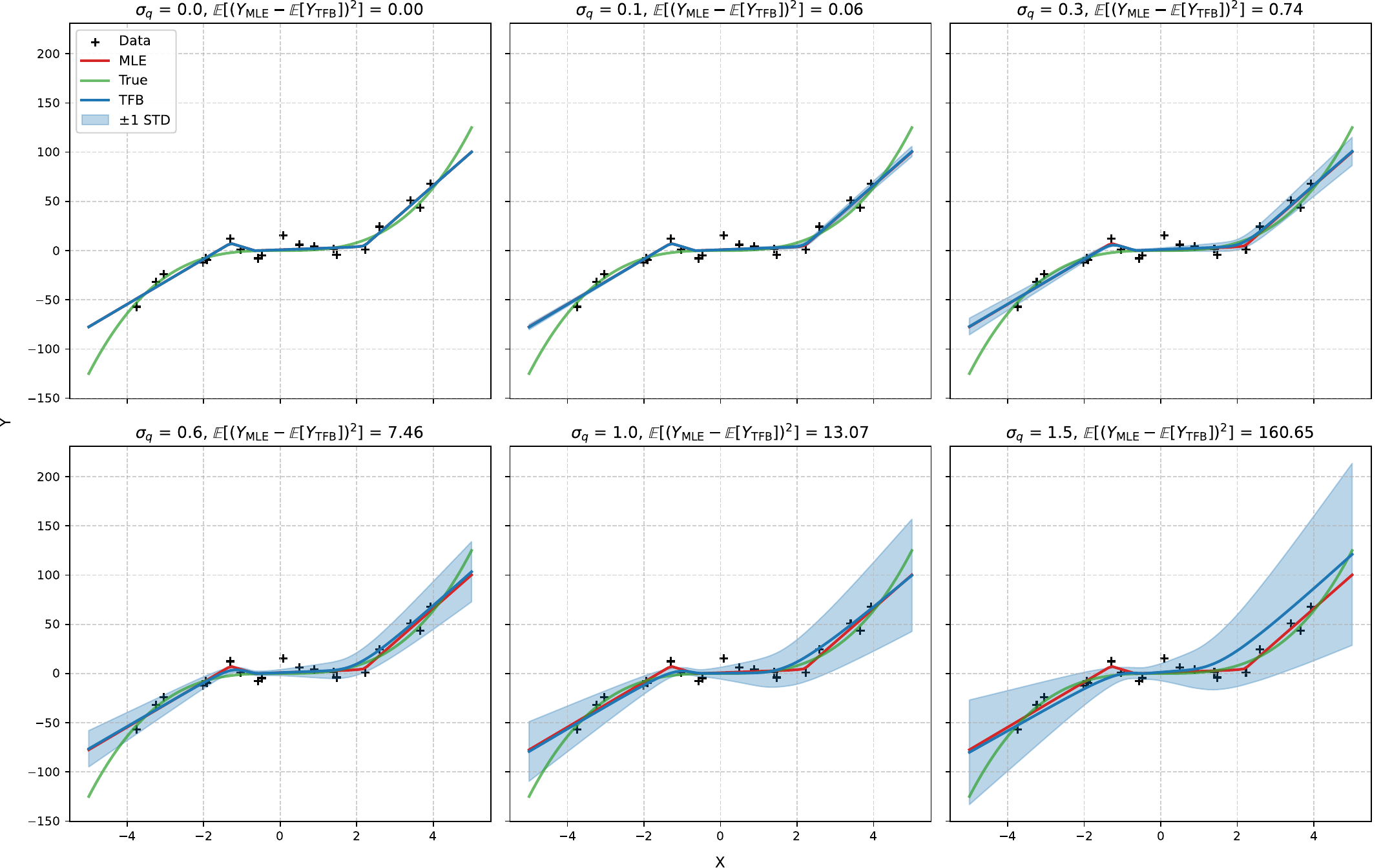}
    \caption{
        \textbf{Uncertainty estimation of \ours on a toy regression task.}  
        The true cubic function $y=x^3$ (\textbf{\textcolor{ForestGreen}{GREEN}}) and noisy training samples (\textbf{\textcolor{Gray}{GRAY}}) are shown alongside predictions from a deterministic MLP baseline (\textbf{\textcolor{Red}{RED}}) and our \ours  (\textbf{\textcolor{Blue}{BLUE}}). 
        The blue shaded region represents ±1 standard deviation from \ours  predictions with $\sigma_q=1.0$. 
        \ours  effectively captures predictive uncertainty, showing low variance in data-dense regions ($x \in [0,2]$) and increasing uncertainty for inputs outside the training distribution ($x \notin [-4,4]$), with uncertainty proportional to distance from the training domain.
    }
    \label{fig:visual}
\end{figure}

\subsection{\ours as a General Bayesian Neural Network: A Visual Study}
\label{app:visualization}
{
We demonstrate that \ours functions as a general Bayesian Neural Network (beyond its application to the LoRA adapters of LLMs) by evaluating its uncertainty estimation capabilities on a toy regression dataset, as illustrated in \Figref{fig:visual}. 
We follow the regression task framework~\cite{hernandez2015probabilistic,wang2016natural} with the following specifications:
\begin{itemize}
    \item \textbf{Input Features:} $\{\vx_i \sim U[-4,4]\}_{i\in [20]}$ uniformly sampled from the interval $[-4,4]$.
    \item \textbf{True Function:} $y=x^{3}$ (plotted in \textbf{\textcolor{ForestGreen}{GREEN}}). 
    \item \textbf{Noisy Labels:} $\vy_i=\vx_i^{3} + 9\cdot\epsilon_i$ where $\epsilon_i \sim \gN(0,1)$, representing the true function with added Gaussian noise. These data points appear as \textbf{\textcolor{Gray}{GRAY}} crosses.
    \item \textbf{MLE Baseline:} A two-layer MLP with 16 hidden neurons fit to the sampled dataset, providing a deterministic baseline without uncertainty quantification. The MLE predictions are shown in \textbf{\textcolor{Red}{RED}}.
    \item \textbf{MLE Training Configuration:} Adam optimizer~\cite{kingma2014adam} with learning rate 0.1, trained for 1000 steps until convergence.
    \item \textbf{\ours Implementation:} We Bayesianize the MLE baseline using \ours with full-rank noise, defining the variational distribution as $q(\mW|\mW_0, \sigma_q)=\prod_{ij}\gN(W_{ij}|W_{0,ij}, \sigma_q^2)$. For inference, we draw $N=10$ samples from the variational weight distribution and plot the average prediction as a \textbf{\textcolor{Blue}{BLUE}} curve. The shaded region represents $\pm1$ standard deviation of the sampled predictions. 
    We evaluate multiple settings of $\sigma_q\in [0.1, 0.3, 0.6, 1.0, 1.5]$ and report the average squared predictive difference between \ours and MLE in the figure. 
\end{itemize}
As \Figref{fig:visual} demonstrates, with an appropriately calibrated standard deviation of the approximate posterior ($\sigma_q=1.0$ in this case), \ours effectively captures predictive uncertainty: in regions with dense data sampling ($\vx\in[0,2]$), \ours produces low predictive uncertainty; conversely, for inputs outside the training domain ($\vx\notin[-4,4]$), the predictive uncertainty increases proportionally with distance from the training distribution. 
}

\subsection{{Test-Time Sample Size Analysis}}
\label{app:tts}

{Increasing the number of samples generally yields more accurate estimates of the expected output, thereby improving model performance in terms of ECE and NLL.
\textbf{We ran additional experiments and report \ours's performance for sample sizes ranging from $N=0$ (reduced to \blob-Mean) to $N=160$ on the WG-S dataset~\cite{wg} in \Figref{fig:tts}.} Each experiment is repeated three times with different random seeds on Llama3.1-8B.}

\begin{figure}[t]
    \centering
    \includegraphics[width=\linewidth]{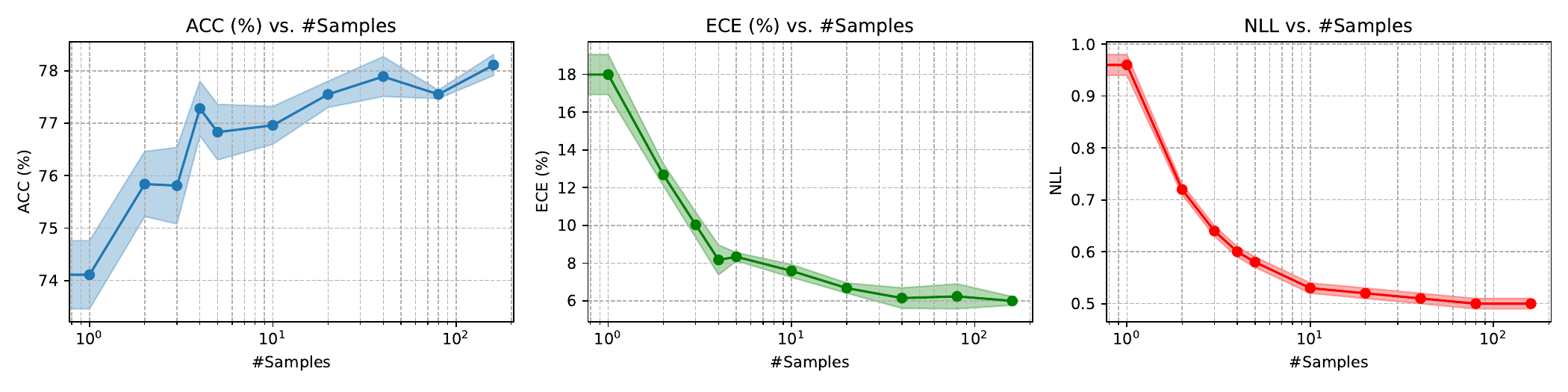}
    \caption{
        {\textbf{Performance of test-time scaling for \ours with varying numbers of samples $N$.}  
        The shaded region denotes the standard deviation across three random seeds.  
        While larger $N$ generally yields better performance, our choice of $N=10$ provides a favorable trade-off between accuracy and efficiency at test time.}
    }
    \vspace{-0em}
    \label{fig:tts}
\end{figure}

{As shown \Figref{fig:tts}, increasing the number of samples at test time generally improves the alignment between the Monte Carlo estimates and the expectations in theory, leading to better uncertainty estimation.
Our choice of $N=10$ samples in the paper balances performance and test-time efficiency: it achieves significant improvement in ECE and NLL while introducing acceptable computational overhead. The performance gap between this setting and extremely large $N$ is also mild.}

\subsection{Anchor Dataset Size Analysis}
\label{app:anchor-size-analysis}

{\textbf{We present the performance of BLoB-Mean + \ours on the ARC-Easy dataset with varying anchor dataset sizes ranging from 100 to 2000 in Figure~\ref{fig:anchor-size-analysis}.} Initially, we hypothesized a negative correlation between the performance variance and the anchor dataset size. However, as shown in the figure, across experiments with three different random seeds, neither the performance variance nor the average performance exhibits a significant correlation with the anchor dataset size. This suggests that \ours is robust to the size of the anchor dataset.}

\begin{figure}[h]
    \centering
        \includegraphics[width=\linewidth]{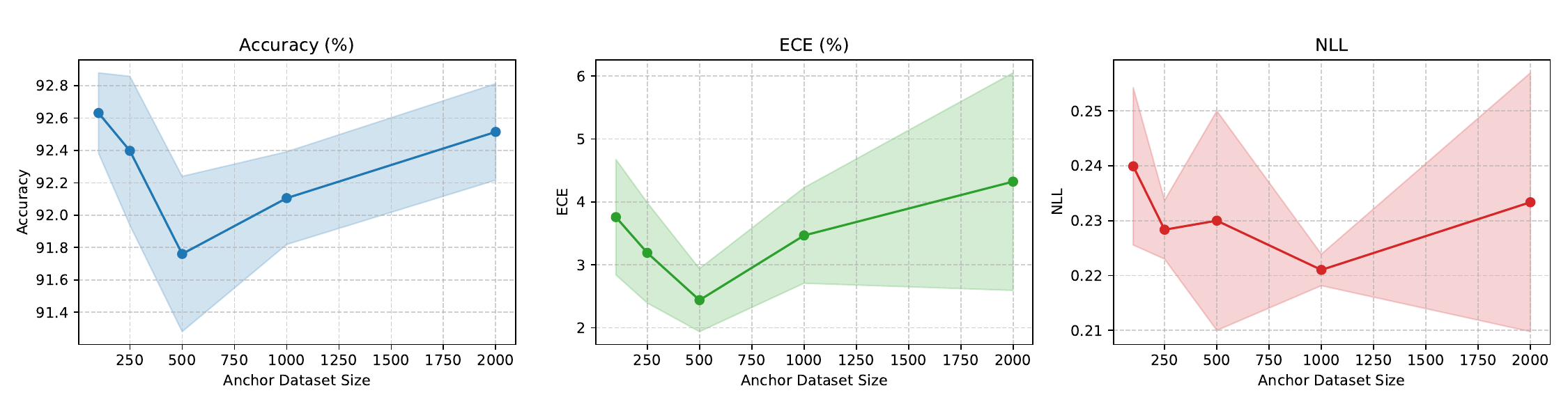}
    \caption{
        \textbf{Performance of BLoB-Mean + \ours with different size of anchor dataset on the ARC-Easy dataset.} The shaded area represents the standard deviation of results across three random seeds, indicating that \ours is not sensitive to the size of the anchor dataset.
    }
    \label{fig:anchor-size-analysis}
\end{figure}

\subsection{\ours Beyond Training Data as Anchor Dataset}
\label{app:more-eval-as-anchor}
In this section, we designate a portion of the unlabeled testing dataset as the anchor dataset, simulating a scenario where partial user input is accessible. The anchor dataset size remains fixed at 500 across all datasets. As illustrated in \Tabref{tab:app-eval-anchor}, the performance variation across different data sources for the anchor dataset is minimal, indicating that the choice of data source for the anchor dataset has negligible impact.

\subsection{\ours Beyond the NLL Metric} 
\label{app:more-acc-as-metric}
\textbf{We report the additional results of \ours when using Accuracy (ACC) as the evaluation metrics $l$ in \Tabref{tab:main-llama-acc}.} 
In our implementation, we adopt the change ratio of classification results as an accuracy-based evaluation metric ($l_{\text{ACC}}$) for the unsupervised anchor dataset.
Comparing the two evaluation metrics ($l_{\text{ACC}}$ vs $l_{\text{NLL}}$) in \Tabref{tab:main-llama-acc}, we observe comparable performance across all datasets. In some cases, accuracy-based evaluation ($l_{\text{ACC}}$) even yields slightly better results. For instance, BLoB-Mean+\ours achieves lower ECE on several datasets when using $l_{\text{ACC}}$. However, we adopt NLL as the primary evaluation metric in \Tabref{tab:main-llama} since it better aligns with our theoretical framework in \Thmref{thm:tfb}.

\subsection{\ours Beyond the Low-Rank Isotropic Gaussians}
\label{app:more-ablation-full}

In \Secref{sec:ablation}, we compare \ours with two alternative Gaussian distribution families that are controlled by a single parameter $\sigma_q$: 
\begin{itemize}
    \item Full-Rank Isotropic Gaussian~(\textbf{FR}): 
    given $\sigma_q$, the FR's variational distribution of the weight matrix $q(\vectorize(\mW))=\gN(\vectorize(\mW)|\vmu_q, \mSigma_q)$ where $\vmu_q=\mW_0+\mB\mA$ (same as \ours) and $\mSigma_q=\sigma_q^2\mI_{mn}$ is full-rank. 
    \item Constant Standard Deviation Matrix~(\textbf{C-STD}): 
    given $\sigma_q$, the C-STD's variational distribution of the weight matrix $q(\vectorize(\mW))=\gN(\vectorize(\mW)|\vmu_q, \mSigma_q)$ where $\vmu_q=\mW_0+\mB\mA$ (same as \ours) and $\mSigma_q= \sigma_q^2  \mI_n \otimes [\mB\mB^\top]$.
\end{itemize}
C-STD's covariance matrix $\mSigma_q$ is derived through \Lmmref{lemma:blob}:
\begin{align}
    \mSigma_q 
    &= [\mI_n \otimes \mB]  [\diag(\vectorize(\mOmega)^2)]  [\mI_n \otimes \mB^\top] \\
    &= [\mI_n \otimes \mB]  [\sigma_q^2  \mI_{rn}]  [\mI_n \otimes \mB^\top] \\
    &= \sigma_q^2   [\mI_n \otimes \mB]  [\mI_n \otimes \mB^\top] \\
    &= \sigma_q^2  \mI_n \otimes [\mB\mB^\top].
\end{align}
This depends on $\mB$ and thus varies for equivalent LoRA parameterizations ${\mB,\mA}$ of the same $\Delta\mW$.

\textbf{We report the additional results comparing \ours with other approximate families Gaussians (FR and C-STD as discussed in \Secref{sec:ablation}) when using Accuracy as the evaluation metrics $l$ in \Tabref{tab:main-ablation-cls}.}
When the evaluation metric is set to Accuracy, the advantage of \ours becomes more significant compared to the results shown in \Tabref{tab:main-ablation}.
\ours with low-rank isotropic Gaussian as the variational distribution demonstrates superior calibration performance compared to both \textbf{FR} and \textbf{C-STD} variants while maintaining competitive accuracy. For ECE, \ours achieves better results across most datasets, with notable improvements on in-distribution tasks: 8.78\% on WG-S (vs. 12.06\% for \textbf{FR} and 11.61\% for \textbf{C-STD}) and 1.28\% on BoolQ (vs. 3.26\% for \textbf{FR} and 2.65\% for \textbf{C-STD}). Similarly for NLL, \ours consistently outperforms or matches the baseline variants, particularly on WG-S (0.55 vs. 0.63 for \textbf{FR} and 0.61 for \textbf{C-STD}) while preserving comparable accuracy scores. These results suggest that \ours's approach to variance modeling is more effective than both full-rank isotropic and constant standard deviation alternatives.

\subsection{\ours Beyond the Llama3.1-8B Backbone}
\label{app:more-other-llms}

\textbf{We report the detailed performance of \ours applied to various LLM backbones in \Tabref{tab:llama-others-full}.} While the baseline MLE is typically trained for 2 epochs (shown with each backbone name), we also report results with reduced training (1 epoch) for comparison. Although training with fewer steps (early stopping) can effectively reduce model overconfidence, it typically leads to performance degradation.

The results demonstrate that \ours consistently improves model calibration across different backbones while maintaining competitive accuracy. Specifically, for Llama2-7B, \ours reduces the ECE from 4.50\% to 1.24\% on the combined dataset while preserving the accuracy (81.32\% vs 81.41\%). Similar improvements are observed with Llama3-8B, Llama3.1-8B, and Mistral-7B-v0.3, where \ours achieves better calibration than both the full training and early stopping baselines without sacrificing performance, suggesting its effectiveness as a general approach for enhancing LLM calibration.

\subsection{\ours Beyond the Naive LoRA}
\label{app:more-other-pefts}

\textbf{We report the detailed performance of \ours applied to various LoRA variants in \Tabref{tab:lora-others-full}.} The baseline models are trained for 2 epochs using pre-trained Llama3.1-8B on the concatenated dataset of six commonsense reasoning tasks. 
Specifically, we consider the two LoRA variants: 
\begin{itemize}
    \item VeRA~\cite{kopiczko2023vera}: Uses shared low-rank matrices $\mB$ and $\mA$ across layers, with layer-specific trainable scalar vector $\vd$ and bias vector $\vb$. Concretely, the parameterization of VeRA's updated weight matrix $\mW$ is modeled as:
        \begin{align}
            \mW &= \mW_0 + \Delta\mW = \mW_0 + [\diag(\vb)]\mB[\diag(\vd)]\mA.
        \end{align}
    Hence after the fine-tuning of VeRA, we can easily regroup the weight matrices into $\{\mB^\prime=[\diag(\vb)]\mB[\diag(\vd)], \mA^\prime=\mA\}$, and apply the \ours Bayesianization scheme illustrated in \Algref{alg:tfb}. 
    \item PiSSA~\cite{meng2024pissa}: Employs an alternative initialization scheme while maintaining LoRA's parameterization and training procedure. Hence the \ours process for PiSSA is trivial.  
\end{itemize}
The results in \Tabref{tab:lora-others-full} show that \ours consistently improves calibration across different LoRA variants while preserving model performance. Notably, when applied to the standard LoRA, \ours significantly reduces the ECE from 4.74\% to 1.05\% on the combined dataset with minimal impact on accuracy (86.45\% vs 86.70\%). Similar improvements are observed with VeRA and PiSSA variants, where \ours achieves better calibration (reducing ECE to 1.44\% and 1.17\% respectively) while maintaining comparable accuracy levels. These results demonstrate that \ours can effectively enhance model calibration across different LoRA architectures without compromising their performance.

\subsection{Improving the Inference-Time Efficiency of \ours}
\label{app:tfb-ll}

\textbf{We report the detailed performance of last-layer \ours (\texttt{LL} \ours) in \Tabref{tab:app-ll}}. 
As indicated in the table, with only $N=10$ samples, last-layer Bayesianization provides a less effective uncertainty estimation compared to full-model Bayesianization. However, increasing the number of samples to $N=100$ significantly enhances the posterior estimation, allowing last-layer Bayesianization to achieve better accuracy. This improvement further allows it to outperform the full-model Bayesianization in terms of NLL across most datasets.

\subsection{Additional Results on Llama2-7B}
\label{app:more-llama2}

\textbf{We report the detailed performance of \ours applied to the Llama2-7B pre-trained weights in \Tabref{tab:main-llama2}.} The performance change tolerance $\epsilon$ is set adaptively to either 1\% or 0.5\%, depending on the checkpoint's overfitting characteristics.
To determine the optimal $\sigma_q^*$, we conduct parallel experiments with eight values of $\sigma_q\in$ [0.01, 0.015, 0.02, 0.025, 0.03, 0.035, 0.04, 0.05] using a single random seed. 
We construct an approximate function $\hat{\sigma}_q(p)$ through piecewise linear interpolation of the observed performance and estimate $\sigma_q^*\approx\hat{\sigma}_q(p_0 - \epsilon)$.
Similar to other baseline methods, the final results of \ours are reported as averages across three random seeds using $\sigma_q^*$.

\textbf{In-Distribution (IND) Results.}\quad 
We observed several key patterns from the IND Datasets results. For example, the MLE baseline shows relatively strong accuracy but suffers from high ECE values (e.g., 29.83\% on WG-S), indicating significant overconfidence. This aligns with the common challenge of LLM overconfidence during conventional fine-tuning.

\ours applied to BLoB-Mean demonstrates strong overall performance across the IND datasets, achieving the highest accuracy on several datasets (69.94\% on WG-S, 70.72\% on ARC-C, and 86.74\% on ARC-E). More importantly, it achieves this while maintaining lower ECE values compared to methods like MCD and ENS, suggesting better calibrated predictions. The method also shows strong NLL performance, with values consistently among the lowest across datasets (0.62 for WG-S, 0.86 for ARC-C).

In summary, \ours consistently enhances the performance of baseline methods (MLE, MAP, and BLoB-Mean) across different evaluation scenarios, with notable improvements in both accuracy and calibration metrics. The improvements are particularly evident in the significant ECE reductions (e.g., from 29.83\% to 16.26\% for MLE on WG-S) while maintaining or improving accuracy, with the most substantial gains observed when \ours is combined with BLoB-Mean, achieving both the highest accuracy and lowest ECE values across most datasets.

\textbf{Out-of-Distribution (OOD) Results.}\quad
The OOD evaluation reveals interesting patterns across both smaller and larger distribution shifts. For smaller shifts (ARC-C and ARC-E), BLoB-Mean with \ours maintains strong performance, achieving 70.38\% and 80.16\% accuracy respectively, while keeping ECE values low (12.28\% and 8.07\%). This suggests robust generalization under moderate distribution shifts.

For larger shifts (Chem and Phy datasets), we see a more significant performance degradation across all methods, as expected. However, BLoB-Mean with \ours still maintains competitive performance, achieving 42.67\% accuracy on Chem and 30.67\% on Phy, while maintaining reasonable calibration metrics. The method's NLL values (1.35 and 1.46 respectively) remain competitive with other approaches, indicating relatively well-calibrated uncertainty estimates even under substantial distribution shifts.

Notable is the consistently strong performance of the BLoB variants (both w/ and w/o \ours) across different metrics and datasets, suggesting that this approach offers a robust framework for both in-distribution and out-of-distribution scenarios. The results demonstrate that the method successfully balances the trade-off between accuracy and calibration, particularly evident in the out-of-distribution scenarios where maintaining both aspects becomes more challenging.

\renewcommand{\thefootnote}{\fnsymbol{footnote}}
\begin{table*}[t]
\caption{
    \textbf{Performance of different methods applied to LoRA on Llama3.1-8B pre-trained weights,} where Accuracy~(\textbf{ACC}) and Expected Calibration Error~(\textbf{ECE}) are reported in percentages. 
    \textbf{``TF?''} denotes whether a method is \textbf{T}raining-\textbf{F}ree.
    The evaluation is done across six common-sense reasoning tasks with a shared hyper-parameter setting after fine-tuning of 5 epochs .
    We sample $N=10$ during inference in all sampling-based methods including \textbf{\blob~\cite{wang2024blob}} and \textbf{\ours}. 
    \hl{Rows with shading} indicate training-free Bayesianization methods that use a pre-trained LoRA as their mean. 
    For \ours, we randomly sample a subset of the training data without labels as the anchor dataset $\gD$. For accuracy-based evaluation~($l_{\text{ACC}}$), we set the performance drop tolerance to $\epsilon=1\%$. For NLL loss~($l_{\text{NLL}}$), we use the same settings as in \Tabref{tab:main-llama}.
    ``$\uparrow$'' and ``$\downarrow$'' indicate that higher and lower values are preferred, respectively. 
    \textbf{Boldface} and \underline{underlining} denote the best and the second-best performance, respectively.
}
\vspace{0em}
\begin{center}
\resizebox{1\linewidth}{!}{%
\setlength{\tabcolsep}{3pt}

 }
 
\end{center}
\label{tab:main-llama-acc}
\vspace{0em}
\end{table*}

\renewcommand{\thefootnote}{\fnsymbol{footnote}}
\begin{table*}[t]
\caption{
    \textbf{Performance of different methods applied to LoRA on Llama3.1-8B pre-trained weights,} where Accuracy~(\textbf{ACC}) and Expected Calibration Error~(\textbf{ECE}) are reported in percentages. 
    ($\gD_{\text{train}}$) and ($\gD_{\text{test}}$) denote the anchor dataset is randomly sampled from the training dataset and the testing dataset, respectively.
    \textbf{``TF?''} denotes whether a method is \textbf{T}raining-\textbf{F}ree.
    The evaluation is done across six common-sense reasoning tasks with a shared hyper-parameter setting after fine-tuning of 5 epochs .
    We sample $N=10$ during inference in all sampling-based methods including \textbf{\blob~\cite{wang2024blob}} and \textbf{\ours}. 
    \hl{Rows with shading} indicate training-free Bayesianization methods that use a pre-trained LoRA as their mean. 
    ``$\uparrow$'' and ``$\downarrow$'' indicate that higher and lower values are preferred, respectively. 
    \textbf{Boldface} and \underline{underlining} denote the best and the second-best performance, respectively. 
}
\vspace{0em}
\begin{center}
\resizebox{1\linewidth}{!}{%
\setlength{\tabcolsep}{3pt}

 }
\end{center}
\label{tab:app-eval-anchor}
\vspace{0em}
\end{table*}

\begin{table*}[t]
\caption{
    \textbf{Performance of \ours with accuracy-based evaluation metric~($l$=ACC) using different posterior families applied to the mean of \blob, based on Llama3.1-8B pre-trained weights.} 
    \textbf{FR:} Full-rank isotropic Gaussian noises are applied to $\Delta\mW$; 
    \textbf{C-STD:} Standard deviation matrix $\mOmega=[\Omega_{ij}=\sigma_q]$ is constant. 
    The evaluation protocol strictly follows \Tabref{tab:main-llama}.
    ``$\uparrow$'' and ``$\downarrow$'' indicate that higher and lower values are preferred, respectively. 
    \textbf{Boldface} and \underline{underlining} denote the best and the second-best performance, respectively (only for \ours variants).
}
\vspace{-1em}
\begin{center}
\resizebox{1\linewidth}{!}{%
\setlength{\tabcolsep}{4pt}

\begin{tabular}{clcccccc  cccc}
	\toprule[0.12em]
	\multirow{3}{*}[-0.25em]{\textbf{Metric}} & \multirow{3}{*}[-0.25em]{\textbf{Method}} & \multicolumn{6}{c}{\multirow{2}{*}[-0.25em]{\textbf{In-Distribution Datasets}}} & \multicolumn{4}{c}{\textbf{Out-of-Distribution Datasets} (OBQA$\rightarrow$X)}
     \\
     \cmidrule(lr){9-12}
     
     & & 
     & & & & & 
     & \multicolumn{2}{c}{\emph{Small Shift}}
     & \multicolumn{2}{c}{\emph{Large Shift}}
     \\
     
     \cmidrule(lr){3-8} \cmidrule(lr){9-10} \cmidrule(lr){11-12}
     & & WG-S %
     & ARC-C%
     & ARC-E%
     & WG-M%
     & OBQA
     & BoolQ%
     & ARC-C%
     & ARC-E%
     & Chem
     & Phy
     \\
     \midrule

     \multirow{4}{*}{ACC~($\uparrow$)} 
     
     & \blob-Mean
      & \underline{77.72\scriptsize{$\pm$0.12}}  & 82.60\scriptsize{$\pm$0.60}  & \underline{91.64\scriptsize{$\pm$0.55}}  & \textbf{83.92\scriptsize{$\pm$0.48}}  & 88.00\scriptsize{$\pm$0.80}  & \underline{89.86\scriptsize{$\pm$0.05}}   & {82.06\scriptsize{$\pm$1.15}}  & \textbf{88.54\scriptsize{$\pm$0.31}}  & 39.93\scriptsize{$\pm$5.20}  & {39.93\scriptsize{$\pm$4.02}}
     \\

     & \cellcolor{lightergray} \quad + \ours~(FR)& \cellcolor{lightergray}76.32\scriptsize{$\pm$0.45}  & \cellcolor{lightergray}82.13\scriptsize{$\pm$0.38}  & \cellcolor{lightergray}91.82\scriptsize{$\pm$0.65}  & \cellcolor{lightergray}83.33\scriptsize{$\pm$0.85}  & \cellcolor{lightergray}\textbf{88.40\scriptsize{$\pm$0.16}}  & \cellcolor{lightergray}90.03\scriptsize{$\pm$0.18}  & \cellcolor{lightergray}82.27\scriptsize{$\pm$1.05}  & \cellcolor{lightergray}\underline{88.24\scriptsize{$\pm$0.37}}  & \cellcolor{lightergray}\textbf{43.33\scriptsize{$\pm$5.79}}  & \cellcolor{lightergray}\underline{41.33\scriptsize{$\pm$3.30} }
 
     \\

     & \cellcolor{lightergray} \quad + \ours~(C-STD) & \cellcolor{lightergray}\textbf{78.40\scriptsize{$\pm$0.36}}  & \cellcolor{lightergray}\underline{82.75\scriptsize{$\pm$0.16}}  & \cellcolor{lightergray}\textbf{92.38\scriptsize{$\pm$0.08}}  & \cellcolor{lightergray}\underline{83.44\scriptsize{$\pm$0.46}}  & \cellcolor{lightergray}\underline{88.37\scriptsize{$\pm$0.50}}  & \cellcolor{lightergray}\textbf{90.21\scriptsize{$\pm$0.04}}  & \cellcolor{lightergray}\underline{82.38\scriptsize{$\pm$0.89}}  & \cellcolor{lightergray}87.98\scriptsize{$\pm$0.30}  & \cellcolor{lightergray}\underline{40.00\scriptsize{$\pm$4.68}}  & \cellcolor{lightergray}\textbf{41.67\scriptsize{$\pm$4.12}}
	 
     \\
  
     & \cellcolor{lightergray} \quad + \ours~(Final)
	 & \cellcolor{lightergray}75.28\scriptsize{$\pm$0.33}  & \cellcolor{lightergray}82.80\scriptsize{$\pm$0.33}  & \cellcolor{lightergray}91.64\scriptsize{$\pm$0.15}  & \cellcolor{lightergray}81.84\scriptsize{$\pm$0.74}  & \cellcolor{lightergray}88.00\scriptsize{$\pm$0.16}  & \cellcolor{lightergray}89.60\scriptsize{$\pm$0.35} & \cellcolor{lightergray}\textbf{82.93\scriptsize{$\pm$1.91}}  & \cellcolor{lightergray}86.97\scriptsize{$\pm$0.62}  & \cellcolor{lightergray}36.00\scriptsize{$\pm$5.66}  & \cellcolor{lightergray}36.00\scriptsize{$\pm$5.72} 
     \\

     \midrule
    
     \multirow{4}{*}{ECE~($\downarrow$)}

     & \blob-Mean
     & 15.43\scriptsize{$\pm$0.15}  & 12.41\scriptsize{$\pm$1.52}  & 4.91\scriptsize{$\pm$0.28}  & 9.37\scriptsize{$\pm$1.33}  & 6.44\scriptsize{$\pm$0.15}  & 6.26\scriptsize{$\pm$0.29}   & 11.22\scriptsize{$\pm$0.38}  & 6.34\scriptsize{$\pm$0.71}  & 26.65\scriptsize{$\pm$3.06}  & 25.40\scriptsize{$\pm$5.40} \\

     & \cellcolor{lightergray} \quad + \ours~(FR)& \cellcolor{lightergray}\underline{11.49\scriptsize{$\pm$0.35}}  & \cellcolor{lightergray}\textbf{5.74\scriptsize{$\pm$1.24}}  & \cellcolor{lightergray}\textbf{3.21\scriptsize{$\pm$0.43}}  & \cellcolor{lightergray}\textbf{6.43\scriptsize{$\pm$0.70}}  & \cellcolor{lightergray}\textbf{4.22\scriptsize{$\pm$0.14}}  & \cellcolor{lightergray}\underline{4.90\scriptsize{$\pm$0.49}} & \cellcolor{lightergray}\underline{8.85\scriptsize{$\pm$1.74}}  & \cellcolor{lightergray}\textbf{4.18\scriptsize{$\pm$0.72}}  & \cellcolor{lightergray}\underline{17.63\scriptsize{$\pm$4.29}}  & \cellcolor{lightergray}\underline{22.31\scriptsize{$\pm$5.67}} 

     \\

     & \cellcolor{lightergray} \quad + \ours~(C-STD) & \cellcolor{lightergray}14.07\scriptsize{$\pm$0.22}  & \cellcolor{lightergray}9.85\scriptsize{$\pm$1.48}  & \cellcolor{lightergray}\underline{4.17\scriptsize{$\pm$0.73}}  & \cellcolor{lightergray}8.94\scriptsize{$\pm$0.50}  & \cellcolor{lightergray}5.48\scriptsize{$\pm$0.50}  & \cellcolor{lightergray}5.72\scriptsize{$\pm$0.28} & \cellcolor{lightergray}9.29\scriptsize{$\pm$0.46}  & \cellcolor{lightergray}6.13\scriptsize{$\pm$0.41}  & \cellcolor{lightergray}24.78\scriptsize{$\pm$3.57}  & \cellcolor{lightergray}24.51\scriptsize{$\pm$2.65} 
	 
     \\

     & \cellcolor{lightergray} \quad + \ours~(Final)
	  & \cellcolor{lightergray}\textbf{3.04\scriptsize{$\pm$0.12}}  & \cellcolor{lightergray}\underline{6.76\scriptsize{$\pm$1.47}}  & \cellcolor{lightergray}4.81\scriptsize{$\pm$1.18}  & \cellcolor{lightergray}\underline{7.42\scriptsize{$\pm$1.24}}  & \cellcolor{lightergray}\underline{5.26\scriptsize{$\pm$0.71}}  & \cellcolor{lightergray}\textbf{3.22\scriptsize{$\pm$0.36}} & \cellcolor{lightergray}\textbf{6.55\scriptsize{$\pm$1.04}}  & \cellcolor{lightergray}\underline{5.54\scriptsize{$\pm$1.33}}  & \cellcolor{lightergray}\textbf{17.00\scriptsize{$\pm$4.71}}  & \cellcolor{lightergray}\textbf{16.65\scriptsize{$\pm$4.33} }
     \\

     \midrule
    
     \multirow{4}{*}{NLL~($\downarrow$)} 

     & \blob-Mean
    & 0.74\scriptsize{$\pm$0.02}  & 0.73\scriptsize{$\pm$0.04}  & 0.29\scriptsize{$\pm$0.03}  & 0.47\scriptsize{$\pm$0.03}  & 0.37\scriptsize{$\pm$0.02}  & 0.32\scriptsize{$\pm$0.02}   & 0.67\scriptsize{$\pm$0.07}  & 0.39\scriptsize{$\pm$0.03}  & 1.53\scriptsize{$\pm$0.13}  & 1.54\scriptsize{$\pm$0.15} 
     \\

     & \cellcolor{lightergray} \quad + \ours~(FR)& \cellcolor{lightergray}\underline{0.61\scriptsize{$\pm$0.00}}  & \cellcolor{lightergray}\textbf{0.49\scriptsize{$\pm$0.02}}  & \cellcolor{lightergray}\textbf{0.23\scriptsize{$\pm$0.03}}  & \cellcolor{lightergray}\textbf{0.43\scriptsize{$\pm$0.02} } & \cellcolor{lightergray}\textbf{0.32\scriptsize{$\pm$0.01}}  & \cellcolor{lightergray}\underline{0.29\scriptsize{$\pm$0.02}}  & \cellcolor{lightergray}\underline{0.61\scriptsize{$\pm$0.04}}  & \cellcolor{lightergray}\textbf{0.35\scriptsize{$\pm$0.02}}  & \cellcolor{lightergray}\textbf{1.36\scriptsize{$\pm$0.11}}  & \cellcolor{lightergray}1.51\scriptsize{$\pm$0.10} 
 
     \\

     & \cellcolor{lightergray} \quad + \ours~(C-STD)  & \cellcolor{lightergray}0.69\scriptsize{$\pm$0.01}  & \cellcolor{lightergray}0.59\scriptsize{$\pm$0.05}  & \cellcolor{lightergray}0.25\scriptsize{$\pm$0.01}  & \cellcolor{lightergray}0.46\scriptsize{$\pm$0.02}  & \cellcolor{lightergray}\underline{0.34\scriptsize{$\pm$0.01}}  & \cellcolor{lightergray}0.31\scriptsize{$\pm$0.01} & \cellcolor{lightergray}0.68\scriptsize{$\pm$0.05}  & \cellcolor{lightergray}\underline{0.37\scriptsize{$\pm$0.02}}  & \cellcolor{lightergray}1.49\scriptsize{$\pm$0.13}  & \cellcolor{lightergray}\underline{1.45\scriptsize{$\pm$0.13}}
	 
     \\

     & \cellcolor{lightergray} \quad + \ours~(Final)
	& \cellcolor{lightergray}\textbf{0.52\scriptsize{$\pm$0.01}}  & \cellcolor{lightergray}\underline{0.52\scriptsize{$\pm$0.03}}  & \cellcolor{lightergray}\underline{0.24\scriptsize{$\pm$0.01}}  & \cellcolor{lightergray}\underline{0.45\scriptsize{$\pm$0.01}}  & \cellcolor{lightergray}0.35\scriptsize{$\pm$0.01}  & \cellcolor{lightergray}\textbf{0.28\scriptsize{$\pm$0.00}}   & \cellcolor{lightergray}\textbf{0.55\scriptsize{$\pm$0.02}}  & \cellcolor{lightergray}\underline{0.37\scriptsize{$\pm$0.01}}  & \cellcolor{lightergray}\underline{1.38\scriptsize{$\pm$0.10}}  & \cellcolor{lightergray}\textbf{1.41\scriptsize{$\pm$0.09}} 
     \\

    \bottomrule[0.12em]
    \end{tabular}
 }
\end{center}
\label{tab:main-ablation-cls}
\vspace{-1em}
\end{table*}

\begin{table*}[t]
\caption{
    \textbf{\ours Performances with various LLM backbones~\cite{touvron2023llama,touvron2023llama2,dubey2024llama,jiang2023mistral},} where Accuracy~(\textbf{ACC}) and Expected Calibration Error~(\textbf{ECE}) are reported in percentages. 
    The MLE training for each different backbone is conducted for \textbf{{2 epochs}} on the concatenated dataset of six commonsense reasoning tasks, with a shared hyperparameter setting; 
    ``Fewer Epochs'' represents training for \textbf{1 epoch}. 
    We use $N=10$ samples for \ours during inference and 
    \hl{rows with shading} indicate training-free Bayesianization methods that use a pre-trained LoRA as their mean. 
    ``$\uparrow$'' and ``$\downarrow$'' indicate that higher and lower values are preferred, respectively. 
    \textbf{Boldface} and \underline{underlining} denote the best and the second-best performance, respectively. 
}
\vspace{-1em}
\begin{center}
\resizebox{1\linewidth}{!}{%
\setlength{\tabcolsep}{5pt}
\begin{tabular}{clccc cccc}
	\toprule[0.12em]
	\multirow{2}{*}[-0.25em]{\textbf{Metric}} & \multirow{2}{*}[-0.25em]{\textbf{Method}} & \multicolumn{7}{c}{\textbf{Datasets}}
     \\
     \cmidrule{3-9}
     & & WG-S
     & ARC-C 
     & ARC-E 
     & WG-M 
     & OBQA%
     & BoolQ
     & Combined
     \\
     \midrule
     \multirow{12}{*}{ACC~($\uparrow$)} 
     
     & Llama2-7B
	 & \textbf{72.30\scriptsize{$\pm$0.90}}
	 & \underline{73.24\scriptsize{$\pm$1.34}}
	 & \textbf{87.66\scriptsize{$\pm$0.81}}
	 & \underline{72.30\scriptsize{$\pm$0.90}}
	 & \underline{83.27\scriptsize{$\pm$1.53}}
	 & \textbf{87.84\scriptsize{$\pm$0.57}}
	 & \textbf{81.41\scriptsize{$\pm$0.64}}
     \\

     & \cellcolor{white} \quad + Fewer Epochs
     & 63.85\scriptsize{$\pm$3.68}
	 & 69.23\scriptsize{$\pm$0.33}
	 & 86.73\scriptsize{$\pm$0.97}
	 & 63.85\scriptsize{$\pm$3.68}
	 & 79.67\scriptsize{$\pm$1.55}
	 & 86.08\scriptsize{$\pm$0.23}
     & 76.88\scriptsize{$\pm$1.11}
     \\
     
     & \cellcolor{lightergray} \quad + \ours~(Ours)
	 & \cellcolor{lightergray}\underline{72.03\scriptsize{$\pm$0.88}}
	 & \cellcolor{lightergray}\textbf{74.36\scriptsize{$\pm$1.58}}
	 & \cellcolor{lightergray}\underline{87.31\scriptsize{$\pm$1.14}}
	 & \cellcolor{lightergray}\textbf{72.85\scriptsize{$\pm$0.96}}
	 & \cellcolor{lightergray}\textbf{83.73\scriptsize{$\pm$0.70}}
	 & \cellcolor{lightergray}\underline{87.44\scriptsize{$\pm$0.34}}
	 & \cellcolor{lightergray}\underline{81.32\scriptsize{$\pm$0.51}}
     \\
     
     \cmidrule{2-9}
    
     & Llama3-8B
	 & \textbf{81.45\scriptsize{$\pm$0.00}}
	 & \textbf{84.95\scriptsize{$\pm$1.53}}
	 & \underline{92.63\scriptsize{$\pm$0.93}}
	 & \textbf{ 81.45\scriptsize{$\pm$0.00}}
	 & \textbf{88.20\scriptsize{$\pm$0.53}}
	 & \textbf{90.19\scriptsize{$\pm$0.13}}
	 & \textbf{86.93\scriptsize{$\pm$0.09}}
     \\

     & \cellcolor{white} \quad + Fewer Epochs
	 & 79.08\scriptsize{$\pm$1.18}
	 & 82.72\scriptsize{$\pm$0.39}
	 & 92.22\scriptsize{$\pm$0.83}
	 & 79.08\scriptsize{$\pm$1.18}
	 & 86.07\scriptsize{$\pm$0.90}
	 & 79.94\scriptsize{$\pm$14.97}
	 & 82.01\scriptsize{$\pm$5.76}
     \\
     
     & \cellcolor{lightergray} \quad + \ours~(Ours)
	 & \cellcolor{lightergray}\underline{81.19\scriptsize{$\pm$0.51}}
	 & \cellcolor{lightergray}\underline{84.73\scriptsize{$\pm$1.68}}
	 & \cellcolor{lightergray}\textbf{92.98\scriptsize{$\pm$0.80}}
	 & \cellcolor{lightergray}\underline{81.11\scriptsize{$\pm$0.55}}
	 & \cellcolor{lightergray}\underline{87.73\scriptsize{$\pm$0.64}}
	 & \cellcolor{lightergray}\underline{89.75\scriptsize{$\pm$0.10}}
	 & \cellcolor{lightergray}\underline{86.61\scriptsize{$\pm$0.20}}
     \\
     
     \cmidrule{2-9}

     & Llama3.1-8B
	 & \textbf{81.24\scriptsize{$\pm$0.05}}
	 & \underline{82.72\scriptsize{$\pm$0.19}}
	 & \textbf{92.11\scriptsize{$\pm$1.05}}
	 & \textbf{81.24\scriptsize{$\pm$0.05}}
	 & \textbf{87.80\scriptsize{$\pm$2.03}}
	 & \textbf{90.20\scriptsize{$\pm$0.11}}
	 & \textbf{86.70\scriptsize{$\pm$0.08}}
     \\

     & \cellcolor{white} \quad + Fewer Epochs
	 & 78.11\scriptsize{$\pm$0.12}
	 & \textbf{83.95\scriptsize{$\pm$1.00}}
	 & 91.17\scriptsize{$\pm$1.17}
	 & 78.11\scriptsize{$\pm$0.12}
	 & 85.33\scriptsize{$\pm$0.90}
	 & 89.38\scriptsize{$\pm$0.35}
	 & 84.96\scriptsize{$\pm$0.22}
     \\
     
     & \cellcolor{lightergray} \quad + \ours~(Ours)
	 & \cellcolor{lightergray}\underline{80.66\scriptsize{$\pm$0.70}}
	 & \cellcolor{lightergray}{82.50\scriptsize{$\pm$0.84}}
	 & \cellcolor{lightergray}\underline{91.93\scriptsize{$\pm$1.05}}
	 & \cellcolor{lightergray}\underline{81.22\scriptsize{$\pm$0.83}}
	 & \cellcolor{lightergray}\underline{87.73\scriptsize{$\pm$1.29}}
	 & \cellcolor{lightergray}\underline{89.96\scriptsize{$\pm$0.23}}
	 & \cellcolor{lightergray}\underline{86.45\scriptsize{$\pm$0.33}}
     \\
     
     \cmidrule{2-9}

     & Mistral-7B-v0.3
	 & \textbf{82.45\scriptsize{$\pm$0.82}}
	 & \textbf{84.28\scriptsize{$\pm$1.53}}
	 & {90.94\scriptsize{$\pm$0.27}}
	 & \textbf{82.45\scriptsize{$\pm$0.82}}
	 & \underline{87.73\scriptsize{$\pm$0.31}}
	 & \textbf{89.71\scriptsize{$\pm$0.48}}
	 & \textbf{86.88\scriptsize{$\pm$0.51}}
     \\

     & \cellcolor{white} \quad + Fewer Epochs
	 & 79.72\scriptsize{$\pm$0.00}
	 & 83.95\scriptsize{$\pm$0.33}
	 & \textbf{91.58\scriptsize{$\pm$0.63}}
	 & 79.72\scriptsize{$\pm$0.00}
	 & 87.53\scriptsize{$\pm$0.31}
	 & 89.20\scriptsize{$\pm$0.20}
	 & 85.71\scriptsize{$\pm$0.11}
     \\
     
     & \cellcolor{lightergray} \quad + \ours~(Ours)
	 & \cellcolor{lightergray}\underline{81.74\scriptsize{$\pm$0.43}}
	 & \cellcolor{lightergray}\underline{84.06\scriptsize{$\pm$1.68}}
	 & \cellcolor{lightergray}\underline{90.99\scriptsize{$\pm$0.73}}
	 & \cellcolor{lightergray}\underline{81.74\scriptsize{$\pm$0.75}}
	 & \cellcolor{lightergray}\textbf{87.93\scriptsize{$\pm$0.42}}
	 & \cellcolor{lightergray}\underline{89.71\scriptsize{$\pm$0.32}}
	 & \cellcolor{lightergray}\underline{86.64\scriptsize{$\pm$0.28}}
     \\ 
     
     \midrule
    
     \multirow{12}{*}{ECE~($\downarrow$)} 
     & Llama2-7B
	 & 9.17\scriptsize{$\pm$0.74}
	 & 9.37\scriptsize{$\pm$1.27}
	 & \textbf{2.65\scriptsize{$\pm$0.16}}
	 & 9.17\scriptsize{$\pm$0.74}
	 & 5.54\scriptsize{$\pm$0.66}
	 & \textbf{1.59\scriptsize{$\pm$0.49}}
	 & 4.50\scriptsize{$\pm$0.37}
     \\

     & \cellcolor{white} \quad + Fewer Epochs
	 & \textbf{4.83\scriptsize{$\pm$1.17}}
	 & \textbf{5.67\scriptsize{$\pm$0.92}}
	 & {4.46\scriptsize{$\pm$0.23}}
	 & \textbf{4.83\scriptsize{$\pm$1.17}}
	 & \underline{4.41\scriptsize{$\pm$0.83}}
	 & 6.90\scriptsize{$\pm$1.73}
	 & \underline{2.00\scriptsize{$\pm$0.34}}
     \\
     
     & \cellcolor{lightergray} \quad + \ours~(Ours)
	 & \cellcolor{lightergray}\underline{5.44\scriptsize{$\pm$0.80}}
	 & \cellcolor{lightergray}\underline{6.06\scriptsize{$\pm$1.54}}
	 & \cellcolor{lightergray}\underline{3.83\scriptsize{$\pm$0.74}}
	 & \cellcolor{lightergray}\underline{5.50\scriptsize{$\pm$1.55}}
	 & \cellcolor{lightergray}\textbf{3.87\scriptsize{$\pm$1.15}}
	 & \cellcolor{lightergray}\underline{2.51\scriptsize{$\pm$0.35}}
	 & \cellcolor{lightergray}\textbf{1.24\scriptsize{$\pm$0.22}}
     \\
     
     \cmidrule{2-9}
    
     & Llama3-8B
	 & 8.49\scriptsize{$\pm$0.14}
	 & 6.76\scriptsize{$\pm$1.77}
	 & \textbf{2.57\scriptsize{$\pm$0.84}}
	 & 8.49\scriptsize{$\pm$0.14}
	 & 3.84\scriptsize{$\pm$0.37}
	 & \textbf{1.88\scriptsize{$\pm$1.18}}
	 & 4.28\scriptsize{$\pm$0.54}
     \\

     & \cellcolor{white} \quad + Fewer Epochs
	 & \underline{4.45\scriptsize{$\pm$0.32}}
	 & \textbf{4.99\scriptsize{$\pm$2.00}}
	 & \underline{2.83\scriptsize{$\pm$0.58}}
	 & \underline{4.45\scriptsize{$\pm$0.32}}
	 & \textbf{3.14\scriptsize{$\pm$0.13}}
	 & \underline{2.71\scriptsize{$\pm$0.25}}
	 & \underline{1.79\scriptsize{$\pm$1.16}}
     \\
     
     & \cellcolor{lightergray} \quad + \ours~(Ours)
	 & \cellcolor{lightergray}\textbf{3.47\scriptsize{$\pm$0.74}}
	 & \cellcolor{lightergray}\underline{5.58\scriptsize{$\pm$0.58}}
	 & \cellcolor{lightergray}4.34\scriptsize{$\pm$1.59}
	 & \cellcolor{lightergray}\textbf{4.07\scriptsize{$\pm$0.28}}
	 & \cellcolor{lightergray}\underline{3.79\scriptsize{$\pm$0.90}}
	 & \cellcolor{lightergray}3.49\scriptsize{$\pm$1.42}
	 & \cellcolor{lightergray}\textbf{1.64\scriptsize{$\pm$0.64}}
     \\
     
     \cmidrule{2-9}

     & Llama3.1-8B
	 & 8.58\scriptsize{$\pm$0.56}
	 & 8.58\scriptsize{$\pm$0.29}
	 & \textbf{2.92\scriptsize{$\pm$0.92}}
	 & 8.58\scriptsize{$\pm$0.56}
	 & \underline{3.85\scriptsize{$\pm$1.18}}
	 & \textbf{2.32\scriptsize{$\pm$0.27}}
	 & 4.74\scriptsize{$\pm$0.28}
     \\

     & \cellcolor{white} \quad + Fewer Epochs
	 & \underline{4.76\scriptsize{$\pm$0.91}}
	 & \textbf{4.23\scriptsize{$\pm$0.95}}
	 & 3.11\scriptsize{$\pm$0.76}
	 & \underline{4.76\scriptsize{$\pm$0.91}}
	 & 3.99\scriptsize{$\pm$0.93}
	 & \underline{3.02\scriptsize{$\pm$0.59}}
	 & \underline{1.45\scriptsize{$\pm$0.38}}
     \\
     
     & \cellcolor{lightergray} \quad + \ours~(Ours)
	 & \cellcolor{lightergray}\textbf{4.45\scriptsize{$\pm$0.36}}
	 & \cellcolor{lightergray}\underline{4.34\scriptsize{$\pm$1.29}}
	 & \cellcolor{lightergray}\underline{2.97\scriptsize{$\pm$0.26}}
	 & \cellcolor{lightergray}\textbf{4.56\scriptsize{$\pm$0.68}}
	 & \cellcolor{lightergray}\textbf{3.55\scriptsize{$\pm$0.55}}
	 & \cellcolor{lightergray}3.16\scriptsize{$\pm$0.45}
	 & \cellcolor{lightergray}\textbf{1.05\scriptsize{$\pm$0.06}}
     \\
     
     \cmidrule{2-9}

     & Mistral-7B-v0.3
	 & 8.02\scriptsize{$\pm$1.68}
	 & 6.98\scriptsize{$\pm$1.18}
	 & 4.12\scriptsize{$\pm$0.13}
	 & 8.02\scriptsize{$\pm$1.68}
	 & 5.99\scriptsize{$\pm$0.48}
	 & 3.17\scriptsize{$\pm$0.55}
	 & 5.05\scriptsize{$\pm$0.88}
     \\

     & \cellcolor{white} \quad + Fewer Epochs
	 & \underline{5.72\scriptsize{$\pm$2.01}}
	 & \underline{4.74\scriptsize{$\pm$1.31}}
	 & \textbf{2.52\scriptsize{$\pm$0.79}}
	 & \underline{5.72\scriptsize{$\pm$2.01}}
	 & \underline{3.50\scriptsize{$\pm$0.75}}
	 & \underline{1.70\scriptsize{$\pm$0.47}}
	 & \underline{2.47\scriptsize{$\pm$1.09}}
     \\
     
     & \cellcolor{lightergray} \quad + \ours~(Ours)
	 & \cellcolor{lightergray}\textbf{4.47\scriptsize{$\pm$2.00}}
	 & \cellcolor{lightergray}\textbf{4.72\scriptsize{$\pm$0.83}}
	 & \cellcolor{lightergray}\underline{2.62\scriptsize{$\pm$0.20}}
	 & \cellcolor{lightergray}\textbf{4.01\scriptsize{$\pm$1.08}}
	 & \cellcolor{lightergray}\underline{4.10\scriptsize{$\pm$0.26}}
	 & \cellcolor{lightergray}\textbf{0.97\scriptsize{$\pm$0.18}}
	 & \cellcolor{lightergray}\textbf{1.68\scriptsize{$\pm$0.53}}
     \\ 
     
     \midrule
    
     \multirow{12}{*}{NLL~($\downarrow$)} 

     & Llama2-7B
	 & \underline{0.58\scriptsize{$\pm$0.01}}
	 & \underline{0.69\scriptsize{$\pm$0.03}}
	 & \textbf{0.35\scriptsize{$\pm$0.00}}
	 & \underline{0.58\scriptsize{$\pm$0.01}}
	 & \underline{0.48\scriptsize{$\pm$0.03}}
	 & \textbf{0.30\scriptsize{$\pm$0.00}}
	 & \textbf{0.43\scriptsize{$\pm$0.00}}
     \\

     & \cellcolor{white} \quad + Fewer Epochs
	 & 0.64\scriptsize{$\pm$0.03}
	 & 0.78\scriptsize{$\pm$0.01}
	 & \underline{0.39\scriptsize{$\pm$0.01}}
	 & 0.64\scriptsize{$\pm$0.03}
	 & 0.56\scriptsize{$\pm$0.02}
	 & 0.36\scriptsize{$\pm$0.01}
	 & \underline{0.50\scriptsize{$\pm$0.01}}
     \\
     
     & \cellcolor{lightergray} \quad + \ours~(Ours)
	 & \cellcolor{lightergray}\textbf{0.56\scriptsize{$\pm$0.01}}
	 & \cellcolor{lightergray}\textbf{0.68\scriptsize{$\pm$0.02}}
	 & \cellcolor{lightergray}\textbf{0.35\scriptsize{$\pm$0.02}}
	 & \cellcolor{lightergray}\textbf{0.57\scriptsize{$\pm$0.01}}
	 & \cellcolor{lightergray}\textbf{0.46\scriptsize{$\pm$0.03}}
	 & \cellcolor{lightergray}\underline{0.31\scriptsize{$\pm$0.00}}
	 & \cellcolor{lightergray}\textbf{0.43\scriptsize{$\pm$0.00}}
     \\
     
     \cmidrule{2-9}
    
     & Llama3-8B
	 & 0.48\scriptsize{$\pm$0.01}
	 & \underline{0.47\scriptsize{$\pm$0.03}}
	 & \textbf{0.22\scriptsize{$\pm$0.01}}
	 & 0.48\scriptsize{$\pm$0.01}
	 & \textbf{0.35\scriptsize{$\pm$0.01}}
	 & \textbf{0.25\scriptsize{$\pm$0.00}}
	 & \textbf{0.34\scriptsize{$\pm$0.00}}
     \\

     & \cellcolor{white} \quad + Fewer Epochs
	 & \underline{0.46\scriptsize{$\pm$0.01}}
	 & 0.48\scriptsize{$\pm$0.01}
	 & \textbf{0.22\scriptsize{$\pm$0.02}}
	 & \underline{0.46\scriptsize{$\pm$0.01}}
	 & \underline{0.37\scriptsize{$\pm$0.02}}
	 & 0.41\scriptsize{$\pm$0.20}
	 & \underline{0.40\scriptsize{$\pm$0.08}}
     \\
     
     & \cellcolor{lightergray} \quad + \ours~(Ours)
	 & \cellcolor{lightergray}\textbf{0.44\scriptsize{$\pm$0.01}}
	 & \cellcolor{lightergray}\textbf{0.45\scriptsize{$\pm$0.02}}
	 & \cellcolor{lightergray}\underline{0.23\scriptsize{$\pm$0.00}}
	 & \cellcolor{lightergray}\textbf{0.44\scriptsize{$\pm$0.01}}
	 & \cellcolor{lightergray}\textbf{0.35\scriptsize{$\pm$0.01}}
	 & \cellcolor{lightergray}\underline{0.27\scriptsize{$\pm$0.01}}
	 & \cellcolor{lightergray}\textbf{0.34\scriptsize{$\pm$0.00}}
     \\
     
     \cmidrule{2-9}

     & Llama3.1-8B
	 & \underline{0.48\scriptsize{$\pm$0.01}}
	 & 0.53\scriptsize{$\pm$0.01}
	 & \underline{0.24\scriptsize{$\pm$0.03}}
	 & \underline{0.48\scriptsize{$\pm$0.01}}
	 & \textbf{0.33\scriptsize{$\pm$0.03}}
	 & \textbf{0.25\scriptsize{$\pm$0.00}}
	 & \underline{0.35\scriptsize{$\pm$0.00}}
     \\

     & \cellcolor{white} \quad + Fewer Epochs
	 & \underline{0.48\scriptsize{$\pm$0.00}}
	 & \textbf{0.45\scriptsize{$\pm$0.00}}
	 & \textbf{0.23\scriptsize{$\pm$0.01}}
	 & \underline{0.48\scriptsize{$\pm$0.00}}
	 & \underline{0.37\scriptsize{$\pm$0.01}}
	 & \underline{0.27\scriptsize{$\pm$0.00}}
	 & 0.36\scriptsize{$\pm$0.00}
     \\
     
     & \cellcolor{lightergray} \quad + \ours~(Ours)
	 & \cellcolor{lightergray}\textbf{0.44\scriptsize{$\pm$0.01}}
	 & \cellcolor{lightergray}\underline{0.46\scriptsize{$\pm$0.00}}
	 & \cellcolor{lightergray}\textbf{0.23\scriptsize{$\pm$0.02}}
	 & \cellcolor{lightergray}\textbf{0.44\scriptsize{$\pm$0.01}}
	 & \cellcolor{lightergray}\textbf{0.33\scriptsize{$\pm$0.02}}
	 & \cellcolor{lightergray}\underline{0.27\scriptsize{$\pm$0.00}}
	 & \cellcolor{lightergray}\textbf{0.34\scriptsize{$\pm$0.00}}
     \\
     
     \cmidrule{2-9}

     & Mistral-7B-v0.3
	 & \underline{0.46\scriptsize{$\pm$0.04}}
	 & 0.47\scriptsize{$\pm$0.02}
	 & 0.28\scriptsize{$\pm$0.01}
	 & \underline{0.46\scriptsize{$\pm$0.04}}
	 & 0.36\scriptsize{$\pm$0.03}
	 & \textbf{0.26\scriptsize{$\pm$0.01}}
	 & \underline{0.35\scriptsize{$\pm$0.02}}
     \\

     & \cellcolor{white} \quad + Fewer Epochs
	 & 0.47\scriptsize{$\pm$0.02}
	 & \underline{0.46\scriptsize{$\pm$0.01}}
	 & \textbf{0.25\scriptsize{$\pm$0.01}}
	 & 0.47\scriptsize{$\pm$0.02}
	 & \textbf{0.35\scriptsize{$\pm$0.01}}
	 & \textbf{0.26\scriptsize{$\pm$0.00}}
	 & \underline{0.35\scriptsize{$\pm$0.01}}
     \\
     
     & \cellcolor{lightergray} \quad + \ours~(Ours)
	 & \cellcolor{lightergray}\textbf{0.42\scriptsize{$\pm$0.02}}
	 & \cellcolor{lightergray}\textbf{0.43\scriptsize{$\pm$0.02}}
	 & \cellcolor{lightergray}\underline{0.26\scriptsize{$\pm$0.01}}
	 & \cellcolor{lightergray}\textbf{0.42\scriptsize{$\pm$0.02}}
	 & \cellcolor{lightergray}\textbf{0.33\scriptsize{$\pm$0.01}}
	 & \cellcolor{lightergray}\textbf{0.26\scriptsize{$\pm$0.01}}
	 & \cellcolor{lightergray}\textbf{0.33\scriptsize{$\pm$0.01}}
     \\ 
     
    \bottomrule[0.12em]
    \end{tabular}
 }
\end{center}
\label{tab:llama-others-full}
\vspace{-1em}
\end{table*}

\begin{table*}[t]
\caption{
    \textbf{Performance of \ours when applied to variants of LoRAs~\cite{hu2022lora,zhang2023adalora,kopiczko2023vera,meng2024pissa},} where Accuracy~(\textbf{ACC}) and Expected Calibration Error~(\textbf{ECE}) are reported in percentages. 
    The MLE training for each LoRA variant is conducted with pre-trained Llama3.1-8B model for \textbf{2 epochs} on the concatenated dataset of six commonsense reasoning tasks, with a shared hyperparameter setting. 
    We set the number of samples to $N=10$ for \ours during inference and 
    \hl{rows with shading} indicate training-free Bayesianization methods that use a pre-trained LoRA as their mean. 
    ``$\uparrow$'' and ``$\downarrow$'' indicate that higher and lower values are preferred, respectively. 
    \textbf{Boldface} denotes the best performance. 
}
\begin{center}
\resizebox{1\linewidth}{!}{%
\setlength{\tabcolsep}{5pt}

\begin{tabular}{clccc cccc}
	\toprule[0.12em]
	\multirow{2}{*}[-0.25em]{\textbf{Metric}} & \multirow{2}{*}[-0.25em]{\textbf{Method}} & \multicolumn{7}{c}{\textbf{Datasets}}
     \\
     \cmidrule{3-9}
     & & WG-S
     & ARC-C 
     & ARC-E 
     & WG-M 
     & OBQA%
     & BoolQ
     & Combined
     \\
     \midrule
     \multirow{6}{*}{ACC~($\uparrow$)} 
     
     & LoRA
     & \textbf{81.24\scriptsize{$\pm$0.05}}
	 & \textbf{82.72\scriptsize{$\pm$0.19}}
	 & \textbf{92.11\scriptsize{$\pm$1.05}}
	 & \textbf{81.24\scriptsize{$\pm$0.05}}
	 & \textbf{87.80\scriptsize{$\pm$2.03}}
	 & \textbf{90.20\scriptsize{$\pm$0.11}}
	 & \textbf{86.70\scriptsize{$\pm$0.08}}
     \\
     
     & \cellcolor{lightergray} \quad + \ours~(Ours)
	 & \cellcolor{lightergray}80.66\scriptsize{$\pm$0.70}
	 & \cellcolor{lightergray}82.50\scriptsize{$\pm$0.84}
	 & \cellcolor{lightergray}91.93\scriptsize{$\pm$1.05}
	 & \cellcolor{lightergray}81.22\scriptsize{$\pm$0.83}
	 & \cellcolor{lightergray}87.73\scriptsize{$\pm$1.29}
	 & \cellcolor{lightergray}89.96\scriptsize{$\pm$0.23}
	 & \cellcolor{lightergray}86.45\scriptsize{$\pm$0.33}
     \\
     
     \cmidrule{2-9}
    
     
     

     & VeRA
	 & \textbf{78.24\scriptsize{$\pm$1.03}}
	 & \textbf{82.39\scriptsize{$\pm$2.55}}
	 & \textbf{90.47\scriptsize{$\pm$1.17}}
	 & \textbf{78.24\scriptsize{$\pm$1.03}}
	 & \textbf{86.13\scriptsize{$\pm$0.23}}
	 & \textbf{89.27\scriptsize{$\pm$0.27}}
	 & \textbf{84.93\scriptsize{$\pm$0.50}}
     \\
     
     & \cellcolor{lightergray} \quad + \ours~(Ours)
	 & \cellcolor{lightergray}76.82\scriptsize{$\pm$0.97}
	 & \cellcolor{lightergray}81.27\scriptsize{$\pm$2.34}
	 & \cellcolor{lightergray}90.35\scriptsize{$\pm$0.91}
	 & \cellcolor{lightergray}77.03\scriptsize{$\pm$1.04}
	 & \cellcolor{lightergray}86.07\scriptsize{$\pm$0.64}
	 & \cellcolor{lightergray}88.99\scriptsize{$\pm$0.32}
	 & \cellcolor{lightergray}84.28\scriptsize{$\pm$0.48}
     \\
     
     \cmidrule{2-9}

     & PiSSA
	 & \textbf{81.45\scriptsize{$\pm$1.45}}
	 & \textbf{83.95\scriptsize{$\pm$1.77}}
	 & 92.22\scriptsize{$\pm$0.54}
	 & \textbf{81.45\scriptsize{$\pm$1.45}}
	 & \textbf{88.40\scriptsize{$\pm$0.69}}
	 & \textbf{90.09\scriptsize{$\pm$0.11}}
	 & \textbf{86.83\scriptsize{$\pm$0.51}}
     \\
     
     & \cellcolor{lightergray} \quad + \ours~(Ours)
     & \cellcolor{lightergray}80.77\scriptsize{$\pm$1.42}
	 & \cellcolor{lightergray}82.94\scriptsize{$\pm$1.21}
	 & \cellcolor{lightergray}\textbf{92.40\scriptsize{$\pm$0.66}}
	 & \cellcolor{lightergray}81.32\scriptsize{$\pm$0.78}
	 & \cellcolor{lightergray}88.13\scriptsize{$\pm$0.42}
	 & \cellcolor{lightergray}90.01\scriptsize{$\pm$0.23}
	 & \cellcolor{lightergray}86.61\scriptsize{$\pm$0.43}

     \\ 
     
     \midrule
    
     \multirow{6}{*}{ECE~($\downarrow$)} 
     
     & LoRA
     & 8.58\scriptsize{$\pm$0.56}
	 & 8.58\scriptsize{$\pm$0.29}
	 & \textbf{2.92\scriptsize{$\pm$0.92}}
	 & 8.58\scriptsize{$\pm$0.56}
	 & 3.85\scriptsize{$\pm$1.18}
	 & \textbf{2.32\scriptsize{$\pm$0.27}}
	 & 4.74\scriptsize{$\pm$0.28}
     \\
     
     & \cellcolor{lightergray} \quad + \ours~(Ours)
	 & \cellcolor{lightergray}\textbf{4.45\scriptsize{$\pm$0.36}}
	 & \cellcolor{lightergray}\textbf{4.34\scriptsize{$\pm$1.29}}
	 & \cellcolor{lightergray}2.97\scriptsize{$\pm$0.26}
	 & \cellcolor{lightergray}\textbf{4.56\scriptsize{$\pm$0.68}}
	 & \cellcolor{lightergray}\textbf{3.55\scriptsize{$\pm$0.55}}
	 & \cellcolor{lightergray}3.16\scriptsize{$\pm$0.45}
	 & \cellcolor{lightergray}\textbf{1.05\scriptsize{$\pm$0.06}}
     \\
     
     \cmidrule{2-9}
    
     
     

     & VeRA
	 & 9.54\scriptsize{$\pm$0.47}
	 & 7.26\scriptsize{$\pm$2.62}
	 & 3.72\scriptsize{$\pm$0.86}
	 & 9.54\scriptsize{$\pm$0.47}
	 & 5.41\scriptsize{$\pm$0.78}
	 & 2.28\scriptsize{$\pm$0.40}
	 & 5.11\scriptsize{$\pm$0.55}
     \\
     
     & \cellcolor{lightergray} \quad + \ours~(Ours)
	 & \cellcolor{lightergray}\textbf{5.03\scriptsize{$\pm$0.92}}
	 & \cellcolor{lightergray}\textbf{5.92\scriptsize{$\pm$1.53}}
	 & \cellcolor{lightergray}\textbf{2.80\scriptsize{$\pm$0.57}}
	 & \cellcolor{lightergray}\textbf{5.09\scriptsize{$\pm$0.87}}
	 & \cellcolor{lightergray}\textbf{3.31\scriptsize{$\pm$0.84}}
	 & \cellcolor{lightergray}\textbf{1.78\scriptsize{$\pm$0.40}}
	 & \cellcolor{lightergray}\textbf{1.44\scriptsize{$\pm$0.44}}
     \\
     
     \cmidrule{2-9}

     & PiSSA
	 & 7.36\scriptsize{$\pm$0.40}
	 & 8.12\scriptsize{$\pm$1.28}
	 & 2.83\scriptsize{$\pm$1.09}
	 & 7.36\scriptsize{$\pm$0.40}
	 & 3.73\scriptsize{$\pm$1.07}
	 & 2.59\scriptsize{$\pm$0.30}
	 & 4.26\scriptsize{$\pm$0.14}
     \\
     
     & \cellcolor{lightergray} \quad + \ours~(Ours)
	 & \cellcolor{lightergray}\textbf{4.59\scriptsize{$\pm$0.63}}
	 & \cellcolor{lightergray}\textbf{4.97\scriptsize{$\pm$0.63}}
	 & \cellcolor{lightergray}\textbf{2.71\scriptsize{$\pm$0.65}}
	 & \cellcolor{lightergray}\textbf{4.37\scriptsize{$\pm$0.32}}
	 & \cellcolor{lightergray}\textbf{2.96\scriptsize{$\pm$0.16}}
	 & \cellcolor{lightergray}\textbf{1.41\scriptsize{$\pm$0.64}}
	 & \cellcolor{lightergray}\textbf{1.17\scriptsize{$\pm$0.22}}
     \\ 
     
     \midrule
    
     \multirow{6}{*}{NLL~($\downarrow$)} 

     & LoRA
     & 0.48\scriptsize{$\pm$0.01}
	 & 0.53\scriptsize{$\pm$0.01}
	 & 0.24\scriptsize{$\pm$0.03}
	 & 0.48\scriptsize{$\pm$0.01}
	 & \textbf{0.33\scriptsize{$\pm$0.03}}
	 & \textbf{0.25\scriptsize{$\pm$0.00}}
	 & 0.35\scriptsize{$\pm$0.00}
     \\
     
     & \cellcolor{lightergray} \quad + \ours~(Ours)
	 & \cellcolor{lightergray}\textbf{0.44\scriptsize{$\pm$0.01}}
	 & \cellcolor{lightergray}\textbf{0.46\scriptsize{$\pm$0.00}}
	 & \cellcolor{lightergray}\textbf{0.23\scriptsize{$\pm$0.02}}
	 & \cellcolor{lightergray}\textbf{0.44\scriptsize{$\pm$0.01}}
	 & \cellcolor{lightergray}\textbf{0.33\scriptsize{$\pm$0.02}}
	 & \cellcolor{lightergray}0.27\scriptsize{$\pm$0.00}
	 & \cellcolor{lightergray}\textbf{0.34\scriptsize{$\pm$0.00}}
     \\
     
     \cmidrule{2-9}
    
     
     

     & VeRA
	 & 0.54\scriptsize{$\pm$0.01}
	 & 0.53\scriptsize{$\pm$0.05}
	 & 0.29\scriptsize{$\pm$0.03}
	 & 0.54\scriptsize{$\pm$0.01}
	 & 0.41\scriptsize{$\pm$0.03}
	 & \textbf{0.27\scriptsize{$\pm$0.01}}
	 & 0.39\scriptsize{$\pm$0.01}
     \\
     
     & \cellcolor{lightergray} \quad + \ours~(Ours)
	 & \cellcolor{lightergray}\textbf{0.51\scriptsize{$\pm$0.01}}
	 & \cellcolor{lightergray}\textbf{0.51\scriptsize{$\pm$0.02}}
	 & \cellcolor{lightergray}\textbf{0.27\scriptsize{$\pm$0.02}}
	 & \cellcolor{lightergray}\textbf{0.50\scriptsize{$\pm$0.01}}
	 & \cellcolor{lightergray}\textbf{0.39\scriptsize{$\pm$0.02}}
	 & \cellcolor{lightergray}0.28\scriptsize{$\pm$0.01}
	 & \cellcolor{lightergray}\textbf{0.38\scriptsize{$\pm$0.01}}
     \\
     
     \cmidrule{2-9}

     & PiSSA
	 & 0.47\scriptsize{$\pm$0.01}
	 & 0.49\scriptsize{$\pm$0.02}
	 & \textbf{0.23\scriptsize{$\pm$0.02}}
	 & 0.47\scriptsize{$\pm$0.01}
	 & \textbf{0.32\scriptsize{$\pm$0.03}}
	 & \textbf{0.26\scriptsize{$\pm$0.00}}
	 & 0.35\scriptsize{$\pm$0.00}
     \\
     
     & \cellcolor{lightergray} \quad + \ours~(Ours)
	 & \cellcolor{lightergray}\textbf{0.44\scriptsize{$\pm$0.01}}
	 & \cellcolor{lightergray}\textbf{0.46\scriptsize{$\pm$0.02}}
	 & \cellcolor{lightergray}\textbf{0.23\scriptsize{$\pm$0.01}}
	 & \cellcolor{lightergray}\textbf{0.44\scriptsize{$\pm$0.01}}
	 & \cellcolor{lightergray}\textbf{0.32\scriptsize{$\pm$0.02}}
	 & \cellcolor{lightergray}\textbf{0.26\scriptsize{$\pm$0.00}}
	 & \cellcolor{lightergray}\textbf{0.33\scriptsize{$\pm$0.00}}
     \\ 
     
    \bottomrule[0.12em]
    \end{tabular}
 }
\end{center}
\label{tab:lora-others-full}
\vspace{-1em}
\end{table*}

\begin{table*}[h]
\caption{
        \textbf{Performance of  {Last-Layer \ours (\texttt{LL} \ours)} applied to LoRA on Llama3.1-8B pre-trained weights,} where Accuracy~(\textbf{ACC}) and Expected Calibration Error~(\textbf{ECE}) are reported in percentages. The evaluation is done across six common-sense reasoning tasks with a shared hyper-parameter setting after 5 epochs.
    We sample $N$ times during inference in the sampling-based methods. 
    \hl{Rows with shading} indicate training-free Bayesianization methods that use a pre-trained LoRA as their mean. 
    ``$\uparrow$'' and ``$\downarrow$'' indicate that higher and lower values are preferred, respectively. 
        \textbf{Boldface} and    \underline{underlining} denote the best and the second-best performance, respectively. 
}
\begin{center}
\resizebox{1\linewidth}{!}{%
\setlength{\tabcolsep}{5pt}
\begin{tabular}{clcccc ccc}
	\toprule[0.12em]
	\multirow{2}{*}[-0.25em]{   \textbf{Metric}} & \multirow{2}{*}[-0.25em]{   \textbf{Method}} & \multirow{2}{*}[-0.25em]{ \textbf{\#Sample~(N)}} & \multicolumn{6}{c}{\textbf{Datasets}}
     \\
     \cmidrule{4-9}
     & & & WG-S
     & ARC-C
     & ARC-E
     & WG-M
     & OBQA
     & BoolQ
     \\
     \midrule
     \multirow{13}{*}{ACC~($\uparrow$)}

     & \loramle & -
     & \textbf{77.87\scriptsize{$\pm$0.54}}  & 81.08\scriptsize{$\pm$0.48}  & \underline{91.67\scriptsize{$\pm$0.36}}  & 82.30\scriptsize{$\pm$0.53}  & 87.90\scriptsize{$\pm$0.87}  & 89.58\scriptsize{$\pm$0.26} 
     \\

     & \cellcolor{lightergray} \quad + \ours
     & \cellcolor{lightergray}10
	 & \cellcolor{lightergray}77.44\scriptsize{$\pm$0.30}  & \cellcolor{lightergray}82.53\scriptsize{$\pm$1.00}  & \cellcolor{lightergray}91.33\scriptsize{$\pm$0.37}  & \cellcolor{lightergray}82.53\scriptsize{$\pm$0.56}  & \cellcolor{lightergray}\textbf{88.53\scriptsize{$\pm$0.57}}  & \cellcolor{lightergray}89.75\scriptsize{$\pm$0.25} 
     \\

     & \cellcolor{lightergray} \quad + \texttt{LL} \ours
     & \cellcolor{lightergray}10  & \cellcolor{lightergray}76.96\scriptsize{$\pm$0.46}  & \cellcolor{lightergray}82.00\scriptsize{$\pm$0.40}  & \cellcolor{lightergray}90.97\scriptsize{$\pm$0.34}  & \cellcolor{lightergray}82.67\scriptsize{$\pm$0.49}  & \cellcolor{lightergray}87.80\scriptsize{$\pm$1.07}  & \cellcolor{lightergray}89.62\scriptsize{$\pm$0.12}
	
     \\

     & \cellcolor{lightergray} \quad + \texttt{LL} \ours
     & \cellcolor{lightergray}100 & \cellcolor{lightergray}77.39\scriptsize{$\pm$0.32}  & \cellcolor{lightergray}82.13\scriptsize{$\pm$0.82}  & \cellcolor{lightergray}91.33\scriptsize{$\pm$0.37}  & \cellcolor{lightergray}82.61\scriptsize{$\pm$0.55}  & \cellcolor{lightergray}87.80\scriptsize{$\pm$0.91}  & \cellcolor{lightergray}89.66\scriptsize{$\pm$0.28} 
	 
     \\
     
     & \loramap & -
     & 76.90\scriptsize{$\pm$0.97}  & 81.08\scriptsize{$\pm$2.48}  & 91.61\scriptsize{$\pm$0.44}  & 82.59\scriptsize{$\pm$0.28}  & 85.73\scriptsize{$\pm$0.19}  & \underline{90.09\scriptsize{$\pm$0.28}}
     \\

     & \cellcolor{lightergray} \quad + \ours~(Ours) 
     & \cellcolor{lightergray}10
	 & \cellcolor{lightergray}76.43\scriptsize{$\pm$0.72}  & \cellcolor{lightergray}82.80\scriptsize{$\pm$1.42}  & \cellcolor{lightergray}91.39\scriptsize{$\pm$0.37}  & \cellcolor{lightergray}82.64\scriptsize{$\pm$0.58}  & \cellcolor{lightergray}86.00\scriptsize{$\pm$0.16}  & \cellcolor{lightergray}89.96\scriptsize{$\pm$0.18}
     \\

     & \cellcolor{lightergray} \quad + \texttt{LL} \ours & \cellcolor{lightergray}10 & \cellcolor{lightergray}76.35\scriptsize{$\pm$0.89}  & \cellcolor{lightergray}\underline{83.07\scriptsize{$\pm$1.97}}  & \cellcolor{lightergray}91.15\scriptsize{$\pm$0.52}  & \cellcolor{lightergray}82.27\scriptsize{$\pm$0.53}  & \cellcolor{lightergray}85.27\scriptsize{$\pm$0.19}  & \cellcolor{lightergray}\underline{90.09\scriptsize{$\pm$0.20}}
	
     \\

     & \cellcolor{lightergray} \quad + \texttt{LL} \ours & \cellcolor{lightergray}100  & \cellcolor{lightergray}76.72\scriptsize{$\pm$0.77}  & \cellcolor{lightergray}\underline{83.07\scriptsize{$\pm$2.12}}  & \cellcolor{lightergray}91.15\scriptsize{$\pm$0.60}  & \cellcolor{lightergray}82.53\scriptsize{$\pm$0.33}  & \cellcolor{lightergray}85.60\scriptsize{$\pm$0.16}  & \cellcolor{lightergray}90.02\scriptsize{$\pm$0.14}
	
     \\

     & \blob & 10 & 76.45\scriptsize{$\pm$0.37}  & 82.32\scriptsize{$\pm$1.15}  & 91.14\scriptsize{$\pm$0.54}  & 82.01\scriptsize{$\pm$0.56}  & 87.57\scriptsize{$\pm$0.21}  & 89.65\scriptsize{$\pm$0.15} 
     \\

     & \blob-Mean & 
        -& 77.72\scriptsize{$\pm$0.12}  & {82.60\scriptsize{$\pm$0.60}}  & 91.64\scriptsize{$\pm$0.55}  & \textbf{83.92\scriptsize{$\pm$0.48}}  & {88.00\scriptsize{$\pm$0.80}}  & 89.86\scriptsize{$\pm$0.05}
     \\

     & \cellcolor{lightergray} \quad + \ours~(Ours) 
     & \cellcolor{lightergray}10
	 & \cellcolor{lightergray}\underline{77.81\scriptsize{$\pm$0.36}}  & \cellcolor{lightergray}\textbf{83.33\scriptsize{$\pm$0.19}}  & \cellcolor{lightergray}\textbf{91.76\scriptsize{$\pm$0.48}}  & \cellcolor{lightergray}\underline{83.81\scriptsize{$\pm$0.39}}  & \cellcolor{lightergray}87.80\scriptsize{$\pm$0.16}  & \cellcolor{lightergray}\textbf{90.11\scriptsize{$\pm$0.28}}
     \\

     & \cellcolor{lightergray} \quad + \texttt{LL} \ours & \cellcolor{lightergray}10
	& \cellcolor{lightergray}77.57\scriptsize{$\pm$1.02}  & \cellcolor{lightergray}82.80\scriptsize{$\pm$0.33}  & \cellcolor{lightergray}91.45\scriptsize{$\pm$0.54}  & \cellcolor{lightergray}83.23\scriptsize{$\pm$0.57}  & \cellcolor{lightergray}\underline{88.33\scriptsize{$\pm$0.09}}  & \cellcolor{lightergray}89.85\scriptsize{$\pm$0.13}
     \\

     & \cellcolor{lightergray} \quad + \texttt{LL} \ours & \cellcolor{lightergray}100
	 & \cellcolor{lightergray}77.60\scriptsize{$\pm$0.62}  & \cellcolor{lightergray}\textbf{83.33\scriptsize{$\pm$0.82}}  & \cellcolor{lightergray}91.39\scriptsize{$\pm$0.60}  & \cellcolor{lightergray}83.63\scriptsize{$\pm$0.62}  & \cellcolor{lightergray}87.60\scriptsize{$\pm$0.43}  & \cellcolor{lightergray}90.03\scriptsize{$\pm$0.03} 
     \\

     \midrule
    
     \multirow{13}{*}{ECE~($\downarrow$)} 

     & \loramle & -
     & 17.02\scriptsize{$\pm$0.46}  & 16.35\scriptsize{$\pm$0.68}  & 7.00\scriptsize{$\pm$0.53}  & 13.83\scriptsize{$\pm$0.65}  & 9.77\scriptsize{$\pm$0.81}  & 8.69\scriptsize{$\pm$0.21}
     \\

     & \cellcolor{lightergray} \quad + \ours~(Ours) 
     & \cellcolor{lightergray}10
	 & \cellcolor{lightergray}12.98\scriptsize{$\pm$0.37}  & \cellcolor{lightergray}11.63\scriptsize{$\pm$0.68}  & \cellcolor{lightergray}5.14\scriptsize{$\pm$0.14}  & \cellcolor{lightergray}10.01\scriptsize{$\pm$0.70}  & \cellcolor{lightergray}7.20\scriptsize{$\pm$0.47}  & \cellcolor{lightergray}7.39\scriptsize{$\pm$0.26}
     \\

     & \cellcolor{lightergray} \quad + \texttt{LL} \ours & \cellcolor{lightergray}10 & \cellcolor{lightergray}14.42\scriptsize{$\pm$0.41}  & \cellcolor{lightergray}13.86\scriptsize{$\pm$0.45}  & \cellcolor{lightergray}6.92\scriptsize{$\pm$0.62}  & \cellcolor{lightergray}10.32\scriptsize{$\pm$0.90}  & \cellcolor{lightergray}8.56\scriptsize{$\pm$0.96}  & \cellcolor{lightergray}7.52\scriptsize{$\pm$0.12}
	 
     \\

     & \cellcolor{lightergray} \quad + \texttt{LL} \ours & \cellcolor{lightergray}100 & \cellcolor{lightergray}13.45\scriptsize{$\pm$0.30}  & \cellcolor{lightergray}13.17\scriptsize{$\pm$0.62}  & \cellcolor{lightergray}6.84\scriptsize{$\pm$0.67}  & \cellcolor{lightergray}10.76\scriptsize{$\pm$0.88}  & \cellcolor{lightergray}8.68\scriptsize{$\pm$0.60}  & \cellcolor{lightergray}7.46\scriptsize{$\pm$0.10}
	 
     \\
     
     & \loramap & - & 18.71\scriptsize{$\pm$0.74}  & 15.77\scriptsize{$\pm$1.60}  & 6.62\scriptsize{$\pm$0.64}  & 14.26\scriptsize{$\pm$0.92}  & 12.19\scriptsize{$\pm$0.55}  & 8.40\scriptsize{$\pm$0.25} \\

     & \cellcolor{lightergray} \quad + \ours~(Ours) & \cellcolor{lightergray}10 & \cellcolor{lightergray}14.95\scriptsize{$\pm$0.65}  & \cellcolor{lightergray}11.27\scriptsize{$\pm$2.53}  & \cellcolor{lightergray}5.76\scriptsize{$\pm$0.63}  & \cellcolor{lightergray}10.97\scriptsize{$\pm$1.19}  & \cellcolor{lightergray}9.70\scriptsize{$\pm$0.69}  & \cellcolor{lightergray}6.86\scriptsize{$\pm$0.31}
     \\

     & \cellcolor{lightergray} \quad + \texttt{LL} \ours & \cellcolor{lightergray}10 & \cellcolor{lightergray}16.03\scriptsize{$\pm$0.64}  & \cellcolor{lightergray}12.72\scriptsize{$\pm$1.33}  & \cellcolor{lightergray}6.54\scriptsize{$\pm$0.68}  & \cellcolor{lightergray}12.06\scriptsize{$\pm$1.09}  & \cellcolor{lightergray}11.36\scriptsize{$\pm$0.34}  & \cellcolor{lightergray}7.51\scriptsize{$\pm$0.23}
	
     \\

     & \cellcolor{lightergray} \quad + \texttt{LL} \ours & \cellcolor{lightergray}100 & \cellcolor{lightergray}15.56\scriptsize{$\pm$0.97}  & \cellcolor{lightergray}12.84\scriptsize{$\pm$2.17}  & \cellcolor{lightergray}6.38\scriptsize{$\pm$0.66}  & \cellcolor{lightergray}11.80\scriptsize{$\pm$1.14}  & \cellcolor{lightergray}11.22\scriptsize{$\pm$0.38}  & \cellcolor{lightergray}7.30\scriptsize{$\pm$0.41} 

     \\
     
     & \blob & 10 & 9.93\scriptsize{$\pm$0.22}  & \textbf{5.41\scriptsize{$\pm$1.17}}  & \underline{2.70\scriptsize{$\pm$0.87}}  & 4.28\scriptsize{$\pm$0.64}  & \underline{2.91\scriptsize{$\pm$0.92}}  & \underline{2.58\scriptsize{$\pm$0.25}}
     \\

     & \blob-Mean & - & 15.43\scriptsize{$\pm$0.15}  & 12.41\scriptsize{$\pm$1.52}  & 4.91\scriptsize{$\pm$0.28}  & 9.37\scriptsize{$\pm$1.33}  & 6.44\scriptsize{$\pm$0.15}  & 6.26\scriptsize{$\pm$0.29} 
     \\

     & \cellcolor{lightergray} \quad + \ours~(Ours)
	& \cellcolor{lightergray}10 & \cellcolor{lightergray}\textbf{8.16\scriptsize{$\pm$0.48}}  & \cellcolor{lightergray}{6.48\scriptsize{$\pm$0.36}}  & \cellcolor{lightergray}\textbf{2.44\scriptsize{$\pm$0.50}}  & \cellcolor{lightergray}\underline{3.83\scriptsize{$\pm$0.43}}  & \cellcolor{lightergray}\textbf{2.67\scriptsize{$\pm$0.18}}  & \cellcolor{lightergray}3.10\scriptsize{$\pm$0.59}
     \\

     & \cellcolor{lightergray} \quad + \texttt{LL} \ours & \cellcolor{lightergray}10 & \cellcolor{lightergray}9.68\scriptsize{$\pm$0.70}  & \cellcolor{lightergray}7.20\scriptsize{$\pm$0.91}  & \cellcolor{lightergray}3.01\scriptsize{$\pm$0.66}  & \cellcolor{lightergray}3.94\scriptsize{$\pm$0.78}  & \cellcolor{lightergray}3.33\scriptsize{$\pm$0.93}  & \cellcolor{lightergray}2.96\scriptsize{$\pm$0.30}
	 
     \\

     & \cellcolor{lightergray} \quad + \texttt{LL} \ours & \cellcolor{lightergray}100 & \cellcolor{lightergray}\underline{8.88\scriptsize{$\pm$0.32}}  & \cellcolor{lightergray}\underline{6.47\scriptsize{$\pm$1.55}}  & \cellcolor{lightergray}2.84\scriptsize{$\pm$0.50}  & \cellcolor{lightergray}\textbf{3.40\scriptsize{$\pm$0.82}}  & \cellcolor{lightergray}3.70\scriptsize{$\pm$0.27}  & \cellcolor{lightergray}\textbf{2.51\scriptsize{$\pm$0.46}}
	 
     \\

     \midrule
    
     \multirow{13}{*}{NLL~($\downarrow$)}

     & \loramle & -
      & 0.88\scriptsize{$\pm$0.04}  & 1.20\scriptsize{$\pm$0.11}  & 0.46\scriptsize{$\pm$0.04}  & 0.68\scriptsize{$\pm$0.01}  & 0.61\scriptsize{$\pm$0.06}  & 0.52\scriptsize{$\pm$0.01}
     \\

     & \cellcolor{lightergray} \quad + \ours~(Ours) & \cellcolor{lightergray}10
	 & \cellcolor{lightergray}0.68\scriptsize{$\pm$0.03}  & \cellcolor{lightergray}0.85\scriptsize{$\pm$0.02}  & \cellcolor{lightergray}0.33\scriptsize{$\pm$0.03}  & \cellcolor{lightergray}0.53\scriptsize{$\pm$0.01}  & \cellcolor{lightergray}0.46\scriptsize{$\pm$0.04}  & \cellcolor{lightergray}0.42\scriptsize{$\pm$0.00}
     \\

     & \cellcolor{lightergray} \quad + \texttt{LL} \ours & \cellcolor{lightergray}10 & \cellcolor{lightergray}0.70\scriptsize{$\pm$0.02}  & \cellcolor{lightergray}0.96\scriptsize{$\pm$0.12}  & \cellcolor{lightergray}0.41\scriptsize{$\pm$0.06}  & \cellcolor{lightergray}0.53\scriptsize{$\pm$0.02}  & \cellcolor{lightergray}0.50\scriptsize{$\pm$0.06}  & \cellcolor{lightergray}0.42\scriptsize{$\pm$0.01} 
	 
     \\

     & \cellcolor{lightergray} \quad + \texttt{LL} \ours & \cellcolor{lightergray}100  & \cellcolor{lightergray}0.66\scriptsize{$\pm$0.02}  & \cellcolor{lightergray}0.84\scriptsize{$\pm$0.08}  & \cellcolor{lightergray}0.39\scriptsize{$\pm$0.07}  & \cellcolor{lightergray}0.53\scriptsize{$\pm$0.02}  & \cellcolor{lightergray}0.49\scriptsize{$\pm$0.05}  & \cellcolor{lightergray}0.40\scriptsize{$\pm$0.00}
	  
     \\
     
     & \loramap & -
     & 0.99\scriptsize{$\pm$0.07}  & 1.12\scriptsize{$\pm$0.23}  & 0.46\scriptsize{$\pm$0.03}  & 0.74\scriptsize{$\pm$0.07}  & 0.79\scriptsize{$\pm$0.02}  & 0.52\scriptsize{$\pm$0.01}
     \\

     & \cellcolor{lightergray} \quad + \ours~(Ours) & \cellcolor{lightergray}10
	 & \cellcolor{lightergray}0.77\scriptsize{$\pm$0.05}  & \cellcolor{lightergray}0.80\scriptsize{$\pm$0.15}  & \cellcolor{lightergray}0.38\scriptsize{$\pm$0.03}  & \cellcolor{lightergray}0.57\scriptsize{$\pm$0.05}  & \cellcolor{lightergray}0.61\scriptsize{$\pm$0.03}  & \cellcolor{lightergray}0.40\scriptsize{$\pm$0.01}
     \\

     & \cellcolor{lightergray} \quad + \texttt{LL} \ours & \cellcolor{lightergray}10 & \cellcolor{lightergray}0.80\scriptsize{$\pm$0.07}  & \cellcolor{lightergray}0.88\scriptsize{$\pm$0.19}  & \cellcolor{lightergray}0.43\scriptsize{$\pm$0.02}  & \cellcolor{lightergray}0.60\scriptsize{$\pm$0.05}  & \cellcolor{lightergray}0.65\scriptsize{$\pm$0.01}  & \cellcolor{lightergray}0.43\scriptsize{$\pm$0.02}
	
     \\

      & \cellcolor{lightergray} \quad + \texttt{LL} \ours & \cellcolor{lightergray}100 & \cellcolor{lightergray}0.77\scriptsize{$\pm$0.06}  & \cellcolor{lightergray}0.86\scriptsize{$\pm$0.18}  & \cellcolor{lightergray}0.41\scriptsize{$\pm$0.02}  & \cellcolor{lightergray}0.57\scriptsize{$\pm$0.04}  & \cellcolor{lightergray}0.63\scriptsize{$\pm$0.02}  & \cellcolor{lightergray}0.40\scriptsize{$\pm$0.03} 
	
     \\

     & \blob & 10 & 0.58\scriptsize{$\pm$0.00}  & \textbf{0.51\scriptsize{$\pm$0.03}}  & \textbf{0.23\scriptsize{$\pm$0.01}}  & {0.43\scriptsize{$\pm$0.01}}  &  {0.34\scriptsize{$\pm$0.01}}  & \textbf{0.26\scriptsize{$\pm$0.01}} 
     \\

     & \blob-Mean & - & 0.74\scriptsize{$\pm$0.02}  & 0.73\scriptsize{$\pm$0.04}  & {0.29\scriptsize{$\pm$0.03}}  & 0.47\scriptsize{$\pm$0.03}  & 0.37\scriptsize{$\pm$0.02}  & 0.32\scriptsize{$\pm$0.02}
     \\

      & \cellcolor{lightergray} \quad + \ours & \cellcolor{lightergray}10 & \cellcolor{lightergray}\underline{0.55\scriptsize{$\pm$0.01}}  & \cellcolor{lightergray}\underline{0.53\scriptsize{$\pm$0.04}}  & \cellcolor{lightergray}\textbf{0.23\scriptsize{$\pm$0.02}}  & \cellcolor{lightergray}\underline{0.40\scriptsize{$\pm$0.01}}  & \cellcolor{lightergray}\underline{0.33\scriptsize{$\pm$0.02}}  & \cellcolor{lightergray}\underline{0.27\scriptsize{$\pm$0.01}}
     \\

     & \cellcolor{lightergray} \quad + \texttt{LL} \ours & \cellcolor{lightergray}10 & \cellcolor{lightergray}0.56\scriptsize{$\pm$0.02}  & \cellcolor{lightergray}0.60\scriptsize{$\pm$0.05}  & \cellcolor{lightergray}0.26\scriptsize{$\pm$0.02}  & \cellcolor{lightergray}0.41\scriptsize{$\pm$0.01}  & \cellcolor{lightergray}\underline{0.33\scriptsize{$\pm$0.01}}  & \cellcolor{lightergray}\underline{0.27\scriptsize{$\pm$0.01}} 
	
     \\

     & \cellcolor{lightergray} \quad + \texttt{LL} \ours & \cellcolor{lightergray}100 & \cellcolor{lightergray}\textbf{0.53\scriptsize{$\pm$0.01}}  & \cellcolor{lightergray}{0.54\scriptsize{$\pm$0.04}}  & \cellcolor{lightergray}\underline{0.24\scriptsize{$\pm$0.01}}  & \cellcolor{lightergray}\textbf{0.39\scriptsize{$\pm$0.01}}  & \cellcolor{lightergray}\textbf{0.31\scriptsize{$\pm$0.01}}  & \cellcolor{lightergray}\textbf{0.26\scriptsize{$\pm$0.01}}
	 
     \\

    \bottomrule[0.12em]
    \end{tabular}
 }
\end{center}
\label{tab:app-ll}
\vspace{-1em}
\end{table*}

\begin{table*}[t]
\caption{
    \textbf{Performance of different methods applied to LoRA on Llama2-7B pre-trained weights,} where Accuracy~(\textbf{ACC}) and Expected Calibration Error~(\textbf{ECE}) are reported in percentages. 
    \textbf{``TF?''} denotes whether a method is \textbf{T}raining-\textbf{F}ree.
    The evaluation is done across six common-sense reasoning tasks with a shared hyper-parameter setting after 5,000 gradient steps.
    We sample $N=10$ during inference in all sampling-based methods including \textbf{\blob~\cite{wang2024blob}} and \textbf{\ours}. 
    \hl{Rows with shading} indicate training-free Bayesianization methods that use a pre-trained LoRA as their mean. 
    For \ours, the anchor dataset $\gD$ is set to a randomly sampled subset of the original training set, the performance evaluation metric $l$ is set to accuracy, and the performance drop tolerance is set adaptively to 1\% or 0.5\% based on whether the given mean overfits. 
    ``$\uparrow$'' and ``$\downarrow$'' indicate that higher and lower values are preferred, respectively. 
    \textbf{Boldface} and \underline{underlining} denote the best and the second-best performance, respectively. 
}
\vspace{-1em}
\begin{center}
\resizebox{1\linewidth}{!}{%
\setlength{\tabcolsep}{3pt}

 }
\end{center}
\label{tab:main-llama2}
\vspace{-1em}
\end{table*}

\clearpage
\section*{NeurIPS Paper Checklist}

\begin{enumerate}

\item {\bf Claims}
    \item[] Question: Do the main claims made in the abstract and introduction accurately reflect the paper's contributions and scope?
    \item[] Answer: \answerYes{}
    \item[] Justification: Every claim in the Abstract and Introduction is supported either empirically as in \Secref{sec:experiments} or theoretically as in \Secref{sec:theory}.
    \item[] Guidelines:
    \begin{itemize}
        \item The answer NA means that the abstract and introduction do not include the claims made in the paper.
        \item The abstract and/or introduction should clearly state the claims made, including the contributions made in the paper and important assumptions and limitations. A No or NA answer to this question will not be perceived well by the reviewers. 
        \item The claims made should match theoretical and experimental results, and reflect how much the results can be expected to generalize to other settings. 
        \item It is fine to include aspirational goals as motivation as long as it is clear that these goals are not attained by the paper. 
    \end{itemize}

\item {\bf Limitations}
    \item[] Question: Does the paper discuss the limitations of the work performed by the authors?
    \item[] Answer: \answerYes{} 
    \item[] Justification: Limitations are addressed in \appref{sec:limitations}.
    \item[] Guidelines:
    \begin{itemize}
        \item The answer NA means that the paper has no limitation while the answer No means that the paper has limitations, but those are not discussed in the paper. 
        \item The authors are encouraged to create a separate "Limitations" section in their paper.
        \item The paper should point out any strong assumptions and how robust the results are to violations of these assumptions (e.g., independence assumptions, noiseless settings, model well-specification, asymptotic approximations only holding locally). The authors should reflect on how these assumptions might be violated in practice and what the implications would be.
        \item The authors should reflect on the scope of the claims made, e.g., if the approach was only tested on a few datasets or with a few runs. In general, empirical results often depend on implicit assumptions, which should be articulated.
        \item The authors should reflect on the factors that influence the performance of the approach. For example, a facial recognition algorithm may perform poorly when image resolution is low or images are taken in low lighting. Or a speech-to-text system might not be used reliably to provide closed captions for online lectures because it fails to handle technical jargon.
        \item The authors should discuss the computational efficiency of the proposed algorithms and how they scale with dataset size.
        \item If applicable, the authors should discuss possible limitations of their approach to address problems of privacy and fairness.
        \item While the authors might fear that complete honesty about limitations might be used by reviewers as grounds for rejection, a worse outcome might be that reviewers discover limitations that aren't acknowledged in the paper. The authors should use their best judgment and recognize that individual actions in favor of transparency play an important role in developing norms that preserve the integrity of the community. Reviewers will be specifically instructed to not penalize honesty concerning limitations.
    \end{itemize}

\item {\bf Theory assumptions and proofs}
    \item[] Question: For each theoretical result, does the paper provide the full set of assumptions and a complete (and correct) proof?
    \item[] Answer: \answerYes{} 
    \item[] Justification: Yes, the assumptions are explicitly listed in \Assumptionref{assumption:tfb}, \Secref{sec:theory}.
    \item[] Guidelines:
    \begin{itemize}
        \item The answer NA means that the paper does not include theoretical results. 
        \item All the theorems, formulas, and proofs in the paper should be numbered and cross-referenced.
        \item All assumptions should be clearly stated or referenced in the statement of any theorems.
        \item The proofs can either appear in the main paper or the supplemental material, but if they appear in the supplemental material, the authors are encouraged to provide a short proof sketch to provide intuition. 
        \item Inversely, any informal proof provided in the core of the paper should be complemented by formal proofs provided in appendix or supplemental material.
        \item Theorems and Lemmas that the proof relies upon should be properly referenced. 
    \end{itemize}

    \item {\bf Experimental result reproducibility}
    \item[] Question: Does the paper fully disclose all the information needed to reproduce the main experimental results of the paper to the extent that it affects the main claims and/or conclusions of the paper (regardless of whether the code and data are provided or not)?
    \item[] Answer: \answerYes{} 
    \item[] Justification: Details of implementations of this paper and the baseline models are disclosed in \Secref{sec:experiments} and \appref{app:implementation}.
    \item[] Guidelines:
    \begin{itemize}
        \item The answer NA means that the paper does not include experiments.
        \item If the paper includes experiments, a No answer to this question will not be perceived well by the reviewers: Making the paper reproducible is important, regardless of whether the code and data are provided or not.
        \item If the contribution is a dataset and/or model, the authors should describe the steps taken to make their results reproducible or verifiable. 
        \item Depending on the contribution, reproducibility can be accomplished in various ways. For example, if the contribution is a novel architecture, describing the architecture fully might suffice, or if the contribution is a specific model and empirical evaluation, it may be necessary to either make it possible for others to replicate the model with the same dataset, or provide access to the model. In general. releasing code and data is often one good way to accomplish this, but reproducibility can also be provided via detailed instructions for how to replicate the results, access to a hosted model (e.g., in the case of a large language model), releasing of a model checkpoint, or other means that are appropriate to the research performed.
        \item While NeurIPS does not require releasing code, the conference does require all submissions to provide some reasonable avenue for reproducibility, which may depend on the nature of the contribution. For example
        \begin{enumerate}
            \item If the contribution is primarily a new algorithm, the paper should make it clear how to reproduce that algorithm.
            \item If the contribution is primarily a new model architecture, the paper should describe the architecture clearly and fully.
            \item If the contribution is a new model (e.g., a large language model), then there should either be a way to access this model for reproducing the results or a way to reproduce the model (e.g., with an open-source dataset or instructions for how to construct the dataset).
            \item We recognize that reproducibility may be tricky in some cases, in which case authors are welcome to describe the particular way they provide for reproducibility. In the case of closed-source models, it may be that access to the model is limited in some way (e.g., to registered users), but it should be possible for other researchers to have some path to reproducing or verifying the results.
        \end{enumerate}
    \end{itemize}

\item {\bf Open access to data and code}
    \item[] Question: Does the paper provide open access to the data and code, with sufficient instructions to faithfully reproduce the main experimental results, as described in supplemental material?
    \item[] Answer: \answerNo{} 
    \item[] Justification: At this time, we do not provide code for our method. We will release the code once accepted. 
    \item[] Guidelines:
    \begin{itemize}
        \item The answer NA means that paper does not include experiments requiring code.
        \item Please see the NeurIPS code and data submission guidelines (\url{https://nips.cc/public/guides/CodeSubmissionPolicy}) for more details.
        \item While we encourage the release of code and data, we understand that this might not be possible, so “No” is an acceptable answer. Papers cannot be rejected simply for not including code, unless this is central to the contribution (e.g., for a new open-source benchmark).
        \item The instructions should contain the exact command and environment needed to run to reproduce the results. See the NeurIPS code and data submission guidelines (\url{https://nips.cc/public/guides/CodeSubmissionPolicy}) for more details.
        \item The authors should provide instructions on data access and preparation, including how to access the raw data, preprocessed data, intermediate data, and generated data, etc.
        \item The authors should provide scripts to reproduce all experimental results for the new proposed method and baselines. If only a subset of experiments are reproducible, they should state which ones are omitted from the script and why.
        \item At submission time, to preserve anonymity, the authors should release anonymized versions (if applicable).
        \item Providing as much information as possible in supplemental material (appended to the paper) is recommended, but including URLs to data and code is permitted.
    \end{itemize}

\item {\bf Experimental setting/details}
    \item[] Question: Does the paper specify all the training and test details (e.g., data splits, hyperparameters, how they were chosen, type of optimizer, etc.) necessary to understand the results?
    \item[] Answer: \answerYes{} 
    \item[] Justification: Our algorithm has only one parameter to optimized and is fully discussed in our algorithmic description and experimental setup.
    \item[] Guidelines:
    \begin{itemize}
        \item The answer NA means that the paper does not include experiments.
        \item The experimental setting should be presented in the core of the paper to a level of detail that is necessary to appreciate the results and make sense of them.
        \item The full details can be provided either with the code, in appendix, or as supplemental material.
    \end{itemize}

\item {\bf Experiment statistical significance}
    \item[] Question: Does the paper report error bars suitably and correctly defined or other appropriate information about the statistical significance of the experiments?
    \item[] Answer: \answerYes{} 
    \item[] Justification: All of our experiments are repeated with 3 different random seeds, the mean and the standard deviation of which are reported in our paper.
    \item[] Guidelines:
    \begin{itemize}
        \item The answer NA means that the paper does not include experiments.
        \item The authors should answer "Yes" if the results are accompanied by error bars, confidence intervals, or statistical significance tests, at least for the experiments that support the main claims of the paper.
        \item The factors of variability that the error bars are capturing should be clearly stated (for example, train/test split, initialization, random drawing of some parameter, or overall run with given experimental conditions).
        \item The method for calculating the error bars should be explained (closed form formula, call to a library function, bootstrap, etc.)
        \item The assumptions made should be given (e.g., Normally distributed errors).
        \item It should be clear whether the error bar is the standard deviation or the standard error of the mean.
        \item It is OK to report 1-sigma error bars, but one should state it. The authors should preferably report a 2-sigma error bar than state that they have a 96\% CI, if the hypothesis of Normality of errors is not verified.
        \item For asymmetric distributions, the authors should be careful not to show in tables or figures symmetric error bars that would yield results that are out of range (e.g. negative error rates).
        \item If error bars are reported in tables or plots, The authors should explain in the text how they were calculated and reference the corresponding figures or tables in the text.
    \end{itemize}

\item {\bf Experiments compute resources}
    \item[] Question: For each experiment, does the paper provide sufficient information on the computer resources (type of compute workers, memory, time of execution) needed to reproduce the experiments?
    \item[] Answer: \answerYes{} 
    \item[] Justification: Our proposed method is training-free, with only forward passes of models required, which does not rely on high-profile computational resources. We have detailed our computational resource type in \appref{app:more-efficiency}.
    \item[] Guidelines:
    \begin{itemize}
        \item The answer NA means that the paper does not include experiments.
        \item The paper should indicate the type of compute workers CPU or GPU, internal cluster, or cloud provider, including relevant memory and storage.
        \item The paper should provide the amount of compute required for each of the individual experimental runs as well as estimate the total compute. 
        \item The paper should disclose whether the full research project required more compute than the experiments reported in the paper (e.g., preliminary or failed experiments that didn't make it into the paper). 
    \end{itemize}
    
\item {\bf Code of ethics}
    \item[] Question: Does the research conducted in the paper conform, in every respect, with the NeurIPS Code of Ethics \url{https://neurips.cc/public/EthicsGuidelines}?
    \item[] Answer: \answerYes{} 
    \item[] Justification: N/A.
    \item[] Guidelines:
    \begin{itemize}
        \item The answer NA means that the authors have not reviewed the NeurIPS Code of Ethics.
        \item If the authors answer No, they should explain the special circumstances that require a deviation from the Code of Ethics.
        \item The authors should make sure to preserve anonymity (e.g., if there is a special consideration due to laws or regulations in their jurisdiction).
    \end{itemize}

\item {\bf Broader impacts}
    \item[] Question: Does the paper discuss both potential positive societal impacts and negative societal impacts of the work performed?
    \item[] Answer: \answerYes{} 
    \item[] Justification: Discussed in \appref{sec:limitations}.
    \item[] Guidelines:
    \begin{itemize}
        \item The answer NA means that there is no societal impact of the work performed.
        \item If the authors answer NA or No, they should explain why their work has no societal impact or why the paper does not address societal impact.
        \item Examples of negative societal impacts include potential malicious or unintended uses (e.g., disinformation, generating fake profiles, surveillance), fairness considerations (e.g., deployment of technologies that could make decisions that unfairly impact specific groups), privacy considerations, and security considerations.
        \item The conference expects that many papers will be foundational research and not tied to particular applications, let alone deployments. However, if there is a direct path to any negative applications, the authors should point it out. For example, it is legitimate to point out that an improvement in the quality of generative models could be used to generate deepfakes for disinformation. On the other hand, it is not needed to point out that a generic algorithm for optimizing neural networks could enable people to train models that generate Deepfakes faster.
        \item The authors should consider possible harms that could arise when the technology is being used as intended and functioning correctly, harms that could arise when the technology is being used as intended but gives incorrect results, and harms following from (intentional or unintentional) misuse of the technology.
        \item If there are negative societal impacts, the authors could also discuss possible mitigation strategies (e.g., gated release of models, providing defenses in addition to attacks, mechanisms for monitoring misuse, mechanisms to monitor how a system learns from feedback over time, improving the efficiency and accessibility of ML).
    \end{itemize}
    
\item {\bf Safeguards}
    \item[] Question: Does the paper describe safeguards that have been put in place for responsible release of data or models that have a high risk for misuse (e.g., pretrained language models, image generators, or scraped datasets)?
    \item[] Answer: \answerNA{} 
    \item[] Justification: This paper poses no such risks.
    \item[] Guidelines:
    \begin{itemize}
        \item The answer NA means that the paper poses no such risks.
        \item Released models that have a high risk for misuse or dual-use should be released with necessary safeguards to allow for controlled use of the model, for example by requiring that users adhere to usage guidelines or restrictions to access the model or implementing safety filters. 
        \item Datasets that have been scraped from the Internet could pose safety risks. The authors should describe how they avoided releasing unsafe images.
        \item We recognize that providing effective safeguards is challenging, and many papers do not require this, but we encourage authors to take this into account and make a best faith effort.
    \end{itemize}

\item {\bf Licenses for existing assets}
    \item[] Question: Are the creators or original owners of assets (e.g., code, data, models), used in the paper, properly credited and are the license and terms of use explicitly mentioned and properly respected?
    \item[] Answer: \answerYes{} 
    \item[] Justification: No licensing issues involved. 
    \item[] Guidelines:
    \begin{itemize}
        \item The answer NA means that the paper does not use existing assets.
        \item The authors should cite the original paper that produced the code package or dataset.
        \item The authors should state which version of the asset is used and, if possible, include a URL.
        \item The name of the license (e.g., CC-BY 4.0) should be included for each asset.
        \item For scraped data from a particular source (e.g., website), the copyright and terms of service of that source should be provided.
        \item If assets are released, the license, copyright information, and terms of use in the package should be provided. For popular datasets, \url{paperswithcode.com/datasets} has curated licenses for some datasets. Their licensing guide can help determine the license of a dataset.
        \item For existing datasets that are re-packaged, both the original license and the license of the derived asset (if it has changed) should be provided.
        \item If this information is not available online, the authors are encouraged to reach out to the asset's creators.
    \end{itemize}

\item {\bf New assets}
    \item[] Question: Are new assets introduced in the paper well documented and is the documentation provided alongside the assets?
    \item[] Answer: \answerNA{} 
    \item[] Justification: For now there is no new assets added by this submission. 
    \item[] Guidelines:
    \begin{itemize}
        \item The answer NA means that the paper does not release new assets.
        \item Researchers should communicate the details of the dataset/code/model as part of their submissions via structured templates. This includes details about training, license, limitations, etc. 
        \item The paper should discuss whether and how consent was obtained from people whose asset is used.
        \item At submission time, remember to anonymize your assets (if applicable). You can either create an anonymized URL or include an anonymized zip file.
    \end{itemize}

\item {\bf Crowdsourcing and research with human subjects}
    \item[] Question: For crowdsourcing experiments and research with human subjects, does the paper include the full text of instructions given to participants and screenshots, if applicable, as well as details about compensation (if any)? 
    \item[] Answer: \answerNA{} 
    \item[] Justification: No crowdsourcing is involved. 
    \item[] Guidelines:
    \begin{itemize}
        \item The answer NA means that the paper does not involve crowdsourcing nor research with human subjects.
        \item Including this information in the supplemental material is fine, but if the main contribution of the paper involves human subjects, then as much detail as possible should be included in the main paper. 
        \item According to the NeurIPS Code of Ethics, workers involved in data collection, curation, or other labor should be paid at least the minimum wage in the country of the data collector. 
    \end{itemize}

\item {\bf Institutional review board (IRB) approvals or equivalent for research with human subjects}
    \item[] Question: Does the paper describe potential risks incurred by study participants, whether such risks were disclosed to the subjects, and whether Institutional Review Board (IRB) approvals (or an equivalent approval/review based on the requirements of your country or institution) were obtained?
    \item[] Answer: \answerNA{} 
    \item[] Justification: No IRB is included.
    \item[] Guidelines:
    \begin{itemize}
        \item The answer NA means that the paper does not involve crowdsourcing nor research with human subjects.
        \item Depending on the country in which research is conducted, IRB approval (or equivalent) may be required for any human subjects research. If you obtained IRB approval, you should clearly state this in the paper. 
        \item We recognize that the procedures for this may vary significantly between institutions and locations, and we expect authors to adhere to the NeurIPS Code of Ethics and the guidelines for their institution. 
        \item For initial submissions, do not include any information that would break anonymity (if applicable), such as the institution conducting the review.
    \end{itemize}

\item {\bf Declaration of LLM usage}
    \item[] Question: Does the paper describe the usage of LLMs if it is an important, original, or non-standard component of the core methods in this research? Note that if the LLM is used only for writing, editing, or formatting purposes and does not impact the core methodology, scientific rigorousness, or originality of the research, declaration is not required.
    \item[] Answer: \answerNA{} 
    \item[] Justification: No LLM usage for the core method is involved. 
    \item[] Guidelines:
    \begin{itemize}
        \item The answer NA means that the core method development in this research does not involve LLMs as any important, original, or non-standard components.
        \item Please refer to our LLM policy (\url{https://neurips.cc/Conferences/2025/LLM}) for what should or should not be described.
    \end{itemize}

\end{enumerate}

\end{document}